\crefname{equation}{Eq.}{Eqs.}
\crefname{figure}{Fig.}{Figs.}
\crefname{assumption}{Assumption}{Assumptions}
\newtheorem{theorem}{Theorem}
\newtheorem{lemma}[theorem]{Lemma}
\newtheorem{proposition}[theorem]{Proposition}
\newtheorem{definition}{Definition}
\newtheorem{assumption}{Assumption}
\newtheorem{remark}{Remark}
\title{Achieving Fairness in Multi-Agent Markov Decision Processes Using Reinforcement Learning}
\date{May 30, 2023}
\newcommand*\samethanks[1][\value{footnote}]{\footnotemark[#1]}
\author{Peizhong Ju\thanks{Department of ECE, 
 The Ohio State University. Email: \texttt{\{ju.171,ghosh.244\}@osu.edu}} \and Arnob Ghosh\samethanks \and Ness B. Shroff\thanks{Department of ECE and CSE, The Ohio State University. Email: \texttt{shroff.11@osu.edu}}}
\newcommand\numberthis{\addtocounter{equation}{1}\tag{\theequation}}
\DeclareMathOperator*{\argmin}{arg\,min}
\newcommand{\defeq}{\coloneqq}
\newcommand{\E}{\mathop{\mathbb{E}}}
\newcommand{\VAR}{\mathsf{VAR}}
\newcommand{\SubOpt}{\mathsf{SubOpt}}
\newcommand{\regret}{\mathsf{Reg}}
\newcommand{\abs}[1]{\left|#1\right|}
\DeclarePairedDelimiter\ceil{\lceil}{\rceil}
\DeclarePairedDelimiter\floor{\lfloor}{\rfloor}
\newcommand{\bigO}{\mathcal{O}}
\newcommand{\bigOTilde}{\tilde{\bigO}}
\newcommand{\nbar}{n_h^{k-1}(s,a)}
\newcommand{\pbar}{\overline{p}_h^{k-1}(s'|s,a)}
\newcommand{\ptilde}{\tilde{p}_h^{k-1}(s'|s,a)}
\newcommand{\pinterval}{\beta_{h,k}^p(s,a,s')}
\newcommand{\rbar}{\overline{r}_{h,(i)}^{k-1}(s,a)}
\newcommand{\rbarOff}{\overline{r}_{h,(i)}(s,a)}
\newcommand{\rtilde}{\tilde{r}_{h,(i)}^{k-1}(s,a)}
\newcommand{\rinterval}{\beta_{h,k}^r(s,a)}
\newcommand{\kthres}{k_{s,a,h}}
\newcommand{\piMix}{\pi^{\text{mix}}}
\begin{document}

\maketitle

\vspace{-0.2in}
\begin{abstract}
Fairness plays a crucial role in various multi-agent systems (e.g., communication networks, financial markets, etc.). Many multi-agent dynamical interactions can be cast as Markov Decision Processes (MDPs). While existing research has focused on studying fairness in \emph{known} environments, the exploration of fairness in such systems for \emph{unknown} environments remains open. In this paper, we propose a  Reinforcement Learning (RL) approach to achieve fairness in multi-agent finite-horizon episodic MDPs. Instead of maximizing the sum of individual agents' value functions, we introduce a fairness function that ensures equitable rewards across agents. Since the classical Bellman's equation does not hold when the sum of individual value functions is not maximized, we cannot use traditional approaches. Instead, in order to explore, we maintain a confidence bound of the unknown environment and then propose an online convex optimization based approach to obtain a policy constrained to this confidence region. We show that such an approach achieves sub-linear regret in terms of the number of episodes. Additionally, we provide a probably approximately correct (PAC) guarantee based on the obtained regret bound. We also propose an offline RL algorithm and bound the optimality gap with respect to the optimal fair solution. To mitigate computational complexity, we introduce a policy-gradient type method for the fair objective. Simulation experiments also demonstrate the efficacy of our approach. 
\end{abstract}
\vspace{-0.1in}
\section{Introduction}
In classical Markov Decision Processes (MDPs), the primary objective is to find a policy that maximizes the reward obtained by a single agent over the course of an episode. However, in numerous real-world applications, decisions made by an agent can have an impact on multiple agents or entities. For instance, in a wireless network scenario, each device aims to maximize its own throughput by increasing its transmission power. However, higher transmission power can lead to interference issues for neighboring terminals. Similarly, consider a situation where two jobs are competing for a single machine; selecting one job results in a penalty or delay for the other job.
The sequential decision-making process as in the above examples  can be cast as a multi-agent episodic MDP where the central decision-maker seeks to obtain the best policy for multiple users or agents over a time horizon. Each user or agent achieves a reward (potentially different) based on the state and action. 

Before delving into the concept of an optimal policy, it is necessary to address what constitutes an optimal policy in the given context. While a particular policy may be good for one agent, it may not be the best choice for another agent. A naive approach could be to maximize the aggregate value functions across all agents, thereby reducing the problem to a classical MDP. However, such an approach may not be considered ``fair'' for all agents involved. To illustrate this, consider a scenario where two jobs are competing for a single machine. If one job offers a higher reward, a central controller that focuses solely on maximizing the aggregate reward may allocate the machine exclusively to the job with the higher reward, causing the job with the lower reward to remain in a waiting state indefinitely. In this paper, our objective is to identify fair decision-making strategies for multi-agent MDP problems, ensuring that all agents are treated equitably.

Drawing inspiration from well-known fairness principles \cite{arrow1965aspects,pratt1978risk,atkinson1970measurement}, we establish a formalization of fairness as a function of the individual value function of agents. Specifically, we concentrate on $\alpha$-fairness, which encompasses both egalitarian or max-min fairness (when $\alpha\rightarrow \infty$) and proportional fairness (when $\alpha=1$). The parameter $\alpha$ allows us to adjust the level of fairness desired. To illustrate this concept, let's consider our example of two jobs with different rewards competing for the same machine. Proportional fairness dictates that the machine should be accessed with equal probability by both the low-reward and high-reward jobs. Conversely, max-min fairness suggests that the job with the higher reward should access the machine with a probability that is inversely proportional to its reward.

In this work, we seek to determine the policy that maximizes the $\alpha$-fairness value of the individual value functions of an MDP. Considering that the knowledge of the environment is usually unknown beforehand in real-world applications, we consider a Reinforcement Learning (RL)-based approach. However, a significant challenge of non-linearity arises since the central controller is not optimizing the sum of the individual value functions, rendering the classical Bellman equation inapplicable. Consequently, conventional techniques such as value-iteration-based or policy-gradient-based approaches cannot be directly employed. To evaluate an online algorithm, regret is a widely-used metric that measures the cumulative performance gap between the online solution in each episode and the optimal solution. Therefore, we aim to develop an algorithm that exhibits sub-linear regret with respect to the $\alpha$-fair solution. Further, since generating new data is costly or impossible for some applications, we also seek to develop a provably-efficient offline fair RL algorithm, i.e., an algorithm that requires no real-time new data. In short, we seek to answer the following--
\begin{center}
    \vspace{-1.5mm}
   \emph{Can we attain a fair RL algorithm with sub-linear regret for multi-agent MDP? Can we develop a provably-efficient fair offline RL algorithm?}
  \vspace{-1.5mm}
\end{center}

\textbf{Our Contributions}: We summarize our contributions in the following:
\begin{itemize}[leftmargin=*]

\item We show that our proposed algorithm achieves $\bigOTilde\left(C_{F}(H^2N^2S\sqrt{AK})\right)$ regret where $H$ is the length of the horizon of each episode, $S$ is the cardinality of state space,  $A$ is the cardinality of the action space, and $K$ is the number of episodes. $C_{F}$ is a parameter determined by the types/parameters of the fairness function. 
 
\item {\em This is the first sub-linear regret bound for the $\alpha$-fairness function in MDP.} We achieve the result by proposing an optimism-based convex optimization framework using state-action occupancy measures. In our algorithm, we use confidence bounds to quantify the error of the estimated reward and transition probability, with which we relax the constraints on possible values of the state-action occupancy measures to encourage exploration. With any convex optimization solver, our proposed algorithm can be solved efficiently in polynomial time. 

\item We also propose a pessimistic version of the optimization problem and establish the theoretical guarantees for the offline fair RL setup. In particular, we construct an MDP with a reward function based on the available data such that the  value function of the constructed MDP is a lower bound of the actual value function for the same policy with high probability. The policy is obtained by solving the convex optimization problem using the occupancy measure on the constructed MDP. We show that the sub-optimality gap of our policy depends on the intrinsic uncertainty multiplied by $C_F$. {\em This is the first result with a theoretical bound for the offline fair RL setup.} 

\item In order to address large state-space problems, we also develop an efficient policy-gradient-based approach that caters to the function approximation setup. 
 
\end{itemize}
\vspace{-0.05in}
\section{Related Work}
\vspace{-0.05in}
\textbf{Multi-objective MDP}: Our work is related to multi-objective RL \cite{roijers2013survey}. Most of the approaches considered a single objective by weighing multiple objectives \cite{van2013scalarized,abels2019dynamic}. Few also proposed algorithms to learn Pareto optimal front \cite{yang2019generalized,mossalam2016multi}. We consider a non-linear function of the multiple value functions and provide a regret bound which was different from the existing approaches. \cite{cheung2019regret} also obtained regret bound for a specific non-linear function of the objectives but cannot capture the $\alpha$-fairness as ours. The algorithm and analysis of \cite{cheung2019regret} are also different compared to ours. Another related approach is Multi-agent RL (MARL) which seeks to learn equilibrium
\cite{li2022learning,jin2021v} in the Markov game. However, our focus is to achieve fairness among the individual value functions. In our setup, the central controller is taking decisions rather than the agents.  Hence, the objective is inherently different, and thus, the algorithms and analysis are also different.

\textbf{Fairness in resource allocation}: Fairness in traditional resource allocation setup has been well studied \cite{mo2000fair,kelly1998rate,lin2006tutorial}. RL-based fair resource allocation decision-making has also been considered for resource allocation \cite{chen2021bringing,hao2023computing,jain2017cooperative,cui2019multi}. However, theoretical guarantees have not been provided. 

\textbf{Fairness in MDP/RL}:
\cite{zhang2014fairness} proposed max-min fairness in MDP, however, the learning component has not been considered.
\cite{joseph2016fairness,liu2017calibrated} considered individual fairness criterion which stipulates that an RL system should make similar decisions for similar individuals or the worst action should not be selected  compared to a better one. \cite{huang2022achieving,schumann2019group,wen2021algorithms} considered a group fairness notion where the main focus is on policy is fair to a group of users (refer to \cite{gajane2022survey} for details). \cite{deng2022reinforcement,metevier2019offline} considered an approach where fairness is modeled as a constraint to be satisfied. \cite{jiang2019learning} proposed an approach where they perturbed the reward to make it fair across the users. In contrast to the above, our setup is different as we seek to achieve fairness in terms of value functions (i.e., long-term return) of different agents.  

\cite{zimmer2021learning,siddique2020learning} considered Gini-fairness across the value functions of the multiple agents, while we focus on different fairness metrics. Besides, the regret bounds have not been provided there. 
\cite{hossain2021fair,bistritz2020my,barman2022fairness,patil2021achieving} considered Nash social welfare (or, proportional fairness) and other fairness notions in multi-armed bandit setup. \cite{docontextual} also considered fairness in the contextual bandit setup. However, we consider an RL setup instead of a bandit setup. The algorithms designed for bandit setup can not be readily extended to the MDP setup.  Further, we consider the generic $\alpha$ fairness concept rather than the proportional-fairness concept. Finally, we provide a fair algorithm in an offline RL setup, which has not been considered in most of the fair RL literature.

The closest to our work is \cite{mandal2022socially} which adopted a welfare-based axiomatic approach and showed regret bound for Nash social welfare, and max-min fairness. In contrast, we considered the $\alpha$-based fairness metric and showed regret bound for the generic value of $\alpha$. Unlike in \cite{mandal2022socially}, our approach admits efficient computation. Further, we provided the PAC guarantee and developed an algorithm for offline fair RL with a theoretical guarantee. Finally, we also developed a policy-gradient-based algorithm that is applicable to large state space as well unlike \cite{mandal2022socially}.

\vspace{-0.05in}
\section{Background: Types of Fairness in Resource Allocation}
\vspace{-0.05in}
 Fairness in resource allocation in multi-agent systems (especially in networks) has been extensively studied \cite{lin2005impact,lin2004joint,eryilmaz2007fair,neely2008fairness}. Specifically, in resource allocation, a feasible solution is any vector $\bm{x}\defeq [x_1\ x_2\ \cdots\ x_N]\in \mathcal{F}\subseteq \mathds{R}^N_{+}$ where $N$ denotes the number of agents, $x_i$ denotes the allocated resource to each agent, and $\mathcal{F}$ denotes a feasible set determined by some constraints. A fair objective is to allocate resources while maintaining some kind of fairness.
As described in \citet{mo2000fair}, the following are some standard definitions of fairness.

\textbf{Proportional Fairness}:
A solution $\bm{x}^*$ is proportional fair when it is feasible and for any other feasible solution $\bm{x}\in \mathcal{F}$, the aggregate of proportional change is non-positive:
\begin{align}\label{eq.prop_fair}
\textstyle
    \sum_{i=1}^N \frac{x_i - x_i^*}{x_i^*}\leq 0.
\end{align}
In particular, for all other allocations, the sum of proportional rate changes with respect to $x^*$ is non-positive. Proportional fairness is widely used in network applications such as scheduling. 

\textbf{Max-min fairness}:
Max-min fairness wants to get a feasible solution that maximizes the minimum resources of all agents, i.e., $\max_{x\in \mathcal{F}}\min_{i} x_i$. For this solution, no agent can get more resources without sacrificing another agent's resources.

\textbf{\texorpdfstring{$(p,\alpha)$}{(p,alpha)}-proportional fairness}:
Let $p=(p_1,\cdots,p_N)$ and $\alpha$ be positive numbers. A solution $\bm{x}^*$ is $(p,\alpha)$-proportionally fair when it is feasible and for any other feasible solution $\bm{x}\in \mathcal{F}$, we have
\begin{align}\label{eq.p_alpha_fair}
\textstyle
    \sum_{i=1}^N p_i \frac{x_i-x_i^*}{{x_i^*}^{\alpha}}\leq 0.
\end{align}
When $p=(1,\cdots,1)$ and $\alpha = 1$, \cref{eq.p_alpha_fair} reduces to \cref{eq.prop_fair}, i.e., the proportionally fair solution. Besides, by Corollary~2 of \cite{mo2000fair}, the solution of $(p,\alpha)$-proportional fair approaches the one of max-min fair as $\alpha \to \infty$. By Lemma~2 of \cite{mo2000fair}, the solution that achieves $(p,\alpha)$-proportional fairness can be solved by
\begin{align*}
    \max_{\bm{x}}&\sum_i p_i f_{\alpha}(x_i),\text{ where } f_{\alpha} (x)=\begin{cases}
    \log x, &\text{if }\alpha =1\\
    (1-\alpha)^{-1} x^{1-\alpha}, & \text{other positive $\alpha$}.
    \end{cases}
\end{align*}
When $p_i=1$ for all $i$, we denote that by $\alpha$-fairness in short, which is widely studied in the networking literature \cite{lan2010axiomatic}. Note that $f_{\alpha}$ is a monotonic increasing function, and concave.

\vspace{-0.05in}
\section{System Model}
\vspace{-0.05in}
\textbf{Multi-agent Finite Horizon MDPs.} Let $\mathcal{M}=(N, \mathcal{S}, \mathcal{A}, r, p, s_1, H)$ be the finite-horizon MDP, where $N$ denotes the number of agents, $\mathcal{A}$ denotes the  action space  with cardinality $A$, $\mathcal{S}$ denotes the  state space with cardinality $S$, and $H$ is a positive integer that denotes the horizon length. At time $h=1,2,\cdots,H$, we let $r_{h,(i)}(s, a)$ denote the non-stationary immediate reward for the $i$-th agent when action is $a\in \mathcal{A}$ at state $s\in \mathcal{S}$. The transition probability is denoted by $p_h(s'|s,a)$. Note that this setup can be easily extended to the scenario where agents are also part of the MDP by letting $\mathcal{A}$ denote the joint action space of all agents.  

\vspace{-0.05in}
\subsection{Value function and Fairness}
\vspace{-0.05in}
The state-action value function of agent $i$ is defined as
\begin{align*}
    Q_{h,(i)}^{\pi}(s,a)\defeq r_{h,(i)}(s,a)+\E\left[\sum_{l=h+1}^{H} r_{l,(i)}(s_l, a_l) | s_h=s, a_h=a, \pi, p\right].
\end{align*}
The $i$-th agent's value function is defined as $V_{h,(i)}^{\pi}(s)\defeq \sum_{a\in \mathcal{A}} \pi_{h}(a|s)Q_{h,(i)}^{\pi}(s,a)$.

To achieve fairness among each agent's return, we optimize a different global value function (instead of $V_{1}^{\pi,\text{sum}}(s)$ that sums up each agent's return):
\begin{align}\label{eq.def_general_V}
\textstyle
    V_1^{\pi,F}(s)\defeq F(V_{1,(1)}^{\pi}(s),V_{1,(2)}^{\pi}(s),\cdots,V_{1,(N)}^{\pi}(s)),
\end{align}
where $F$ is some function of every agent's return that can be chosen for fairness.
Similar to the fairness objective used in resource allocation literature \citep{mo2000fair}, we consider the following three possible options of $F$:
\begin{alignat*}{2}
\textstyle
    F_{\text{max-min}} & \textstyle = \min \{V_{1,(1)}^{\pi}(s),V_{1,(2)}^{\pi}(s),\cdots,V_{1,(N)}^{\pi}(s)\} \quad && \text{(max-min fairness)},\numberthis \label{eq.F_max_min}\\
    \textstyle F_{\text{proportional}} & \textstyle = \sum\nolimits_{i=1}^N \log V_{1,(i)}^{\pi}(s) && \text{(proportional fairness)},\numberthis \label{eq.F_proportional}\\
    \textstyle F_{\alpha}& \textstyle = \sum_{i=1}^N\frac{1}{1-\alpha} {V_{1,(i)}^{\pi}(s)}^{1-\alpha}  && \text{ ($\alpha$ fairness where $\alpha >0$)},\numberthis \label{eq.F_alpha}
\end{alignat*}
Note that we have adopted the $\alpha$-fairness in resource allocation to the $\alpha$-fairness in value functions across the agents. \cite{chen2021bringing} also formalized fairness among value functions in network applications. Recently, \cite{zhang2022proportional} also adopted $\alpha$-fairness to federated learning setup. 
In the rest of this paper, we sometimes remove the superscript $F$ in $V_1^{\pi,F}(s)$ for ease of notation.

{\bf Remarks and Connections:} From (\ref{eq.prop_fair}), maximizing proportional fairness in the value function means that the average relative value function is maximized. In particular, at any other policy, the average relative value function across the agents would be reduced compared to the proportionally fair maximizing policy. \citep{hossain2021fair,mandal2022socially} maximize the product (contrast to the sum) of each agent's value function $\prod_{i=1}^N V_{1,(i)}^{\pi}(s)$ (also, known as Nash social welfare). By taking the logarithm on the product, it is equivalent to $F_{\text{proportional}}$ \cite{kelly1997charging} in \cref{eq.F_proportional} in our case.  
\cite{mandal2022socially} has a regret bound for Nash social welfare. Even though the proportional-fair solution is Nash social welfare solution. The regret bound for the Nash social welfare and for the proportional fair case is not comparable. For example, their bound scales $O(H^{N})$, whereas our regret bound scales as $O(NH^2)$ (shown later in this paper). The proof technique is also different.

When $\alpha=0$, we recover the utilitarian social welfare where the objective is to maximize the sum of the value functions. On the other hand, $\alpha=\infty$ refers to max-min fairness in value functions. By tuning $\alpha$, one can achieve different fairness metrics. 


\vspace{-0.05in}
\subsection{Performance evaluation}
\vspace{-0.05in}
Define the optimal {\em fair} value function corresponding to the optimal policy as
\begin{align*}
\textstyle
    V_1^*(s)=\sup_{\pi}V_1^{\pi}(s).
\end{align*}
The central controller does not know either the probability or the rewards. Rather, it selects a policy $\pi_k$ for $k\in [K]$  episode. 
Without loss of generality, we assume that the initial state $s_1$ for all episodes is the same and fixed. If the initial state $s_1$ is drawn from some distribution, then we can construct an artificial initial state $s_0$ that is fixed for all episodes, and the distribution of the actual initial state $s_1$ determines the transition probability $p_0(s_1|s_0,a)$. We consider the \emph{bandit-feedback setup}, i.e., the central controller can only observe the rewards (of all the agents) corresponding to the encountered state-action pair \cite{agarwal2011stochastic,dani2008stochastic}. We assume the following for the reward:
\begin{assumption}\label[assumption]{as.reward_bound}
The noisy observation of the immediate reward is a random variable $\hat{r}_{h,(i)}(s, a)$, which is in the range $[\frac{\epsilon}{H}, 1]$ almost surely where $\epsilon$ is some positive real number. The mean value of the noisy observation is equal to the true immediate reward, i.e., $\E \hat{r}_{h,(i)}(s, a) = r_{h,(i)}(s, a)$.
\end{assumption}
\begin{remark}\label[remark]{remark.V_bound}
We need $\hat{r}\geq \frac{\epsilon}{H}$ because this guarantee $V_{1,(i)}^{\pi}(s_1)\geq \epsilon >0$, which ensures that \cref{eq.F_proportional,eq.F_alpha} are finite. Also, this makes the functions Lipschitz continuous everywhere.
\end{remark}

 We are interested in minimizing the regret $\regret(K)$ over finite time horizon $K$, given by
\begin{align*}
\textstyle
    \regret(K)\defeq \sum_{k=1}^K \left(V_{1}^*(s_1) - V_{1}^{\pi_k}(s_1)\right).
\end{align*}
The regret characterizes the cumulative sum of the difference at each episode $k=1,2,\cdots, K$ between the fair value function and the optimal fair value function. 


\vspace{-0.05in}
\section{Algorithm}\label{sec:algorithm}
\vspace{-0.05in}
\subsection{Optimal policy with complete information}
\vspace{-0.05in}
Before we characterize the algorithm when the MDP parameters are unknown, we start from the ideal situation where all parameters of the MDP  are known, i.e., complete information. The insight will help us to develop an algorithm for the challenging scenario when the parameters are unknown. 

For the classical objective that maximizes the sum of all agents' returns, the optimal return and policy can be efficiently calculated by backward induction that utilizes the Bellman equation, i.e,
\begin{align}
\textstyle
    V_h^{*,\text{sum}}(s)=\max_{a\in \mathcal{A}}\left\{\sum_{i=1}^N r_{h,(i)}(s,a) + \sum_{s'\in \mathcal{S}}p_h(s'|s,a)V_{h+1}^{*,\text{sum}}(s')\right\},\label{eq.Bellman_sum}
\end{align}
where $V_{H+1}^{*,\text{sum}}(s)=0$ for all $s\in \mathcal{S}$. The reason for \cref{eq.Bellman_sum} is that maximize $\sum_{i=1}^N V_{h, (i)}^{\pi}(s)$ is equivalent to solving another single-agent MDP with immediate reward equal to $\sum_{i=1}^N r_{h,(i)}(s, a)$.

In contrast, such a convenience no longer exists for the fairness objective since \cref{eq.Bellman_sum} relies on the linearity of $V_h^{\pi,\text{sum}}(s)$ w.r.t. $V_{h, (i)}^{\pi}(s)$. 
To solve this problem, we alternatively use an occupancy-measure-based approach which is inspired by \citet{efroni2020exploration}.
Define the occupancy measure
\begin{align}\label{eq.def_q}
    q_h^{\pi}(s,a;p)\defeq \Pr \{s_h=s,a_h=a\ |\ s_1,p,\pi\}.
\end{align}
The occupancy measure defined by \cref{eq.def_q} represents the frequency of the appearance for each state-action pair under the policy $\pi$ on the environment transition probability $p$.
We will omit $p$ in the notation $q_h^{\pi}(s,a;p)$ when the context is clear.

With this definition, each agent's return can be written as a linear function w.r.t. $q_h^{\pi}(s,a)$, i.e.,
\begin{align}\label{eq.def_V_by_occupancy_measure}
\textstyle
    V_{1,(i)}^{\pi}(s_1)=\sum_{s,a,h}r_{h,(i)}(s,a) \cdot  q_h^{\pi}(s,a).
\end{align}
Then we can solve a convex optimization of $q$ (we use $[\cdot]_{i=1,2,\cdots,N}$ to denote $N$ inputs of $F(\cdot)$):
\begin{align}\label{prob.ideal}
    \max_{q\in \mathcal{Q}} \quad &\textstyle F\left(\left[\sum_{s,a,h}r_{h,(i)}(s,a) \cdot  q_h^{\pi}(s,a)\right]_{i=1,2,\cdots, N}\right) \quad \text{ (i.e., $\max_{q\in \mathcal{Q}}V_{1}^{\pi,F}(s_1)$)},
\end{align}
where $\mathcal{Q}$ is a set of linear constraints on $q$ to make sure $q$ is a legit occupancy measure with the transition probability $p$ and initial state $s_1$ (details in \cref{app.proof_regret}).
Since \eqref{prob.ideal} is a convex optimization (proof in \cref{le.convex} in \cref{app.useful_lemmas}), and thus can be solved efficiently in polynomial time. 
After we get the occupancy measure $q$, the corresponding policy can be calculated by $\pi_h(a|s)=\frac{q_h(s,a)}{\sum_{a'}q_h(s,a')}$.

\vspace{-0.05in}
\subsection{Online algorithm with unknown environment}
\vspace{-0.05in}

To construct an online algorithm under the bandit-feedback, a straightforward idea is using the empirical average $\bar{p},\bar{r}$ (precisely defined in \cref{eq.def_empirical,eq.def_empirical_p} in \cref{app.proof_regret}) of the unknown transition probability $p$ and reward $r$ to replace the precise ones in \eqref{prob.ideal}. Due to the imprecision of the empirical average, a common strategy is to introduce some confidence interval to balance exploration and exploitation as done in \cite{efroni2020exploration}.

Define the confidence interval for $\pbar$ as $\pinterval$ such that
\begin{align}\label{eq.p_interval}
    \abs{\pbar - \tilde{p}_{h}(s'|s,a)}\leq \pinterval, \text{ for all }h\in [H-1], s,s'\in \mathcal{S}, a\in \mathcal{A}.
\end{align}
Define the confidence interval for $\rbar$ as $\rinterval$ such that
\begin{align}\label{eq.r_interval}
    \abs{\tilde{r}_{h,(i)}(s,a) - \rbar}\leq \rinterval, \text{ for all } i\in [N], h\in [H], s\in \mathcal{S}, a\in \mathcal{A}.
\end{align}
The expression of $\pinterval$ and $\rinterval$ can be found in \cref{app.proof_regret}.
The overall confidence interval is defined as
\begin{align*}
    M_k\defeq \left\{(\tilde{p},\tilde{r}):\text{Eq.~\eqref{eq.p_interval} and Eq.~\eqref{eq.r_interval}}\right\},
\end{align*}
i.e., the true value of $(p,r)$ is in $M_k$ with high probability.
We now want to use this confidence interval $M_k$ in \eqref{prob.ideal}. A possible way is to replace $p$ and $r$ by $(\tilde{r},\tilde{p})\in M_k$ and view $\tilde{r},\tilde{p}$ as decision variables. Thus, the objective in \eqref{prob.ideal} now becomes 
\begin{align}
    \max_{(\tilde{r},\tilde{p})\in M_k,\ q\in \mathcal{Q}} V_{1}^{\pi, F}(s_1).\label{prob.interval}
\end{align}
It is indeed an optimistic solution compared to the real optimal solution because we relax the value of $r$ and $p$ in such optimization (which leads to better objective value). However, it is no longer a convex optimization problem because now $r$ and $p$ are decision variables. To turn such an optimization into a convex one, we need to determine the value of $\tilde{r}$ and $\tilde{p}$ beforehand.
To that end, notice the monotonicity of the objective with respect to $\tilde{r}$ (proof in \cref{le.convex} in \cref{app.useful_lemmas}). Thus, without affecting the solution of \eqref{prob.interval}, we can let
\begin{align}\label{eq.optimistic_reward}
    \tilde{r}_{h,(i)}(s,a)=\rbar+\rinterval.
\end{align}
Now, we only need to determine the value of $\tilde{p}$. To that end,
consider the state-action-next-state occupancy measure $z_h^{\pi}(s,a,s';p)\defeq p_h(s'|s,a) q_h^{\pi}(s,a;p)$. Considering Eq.~\eqref{eq.p_interval}, we only need
\begin{equation}\label{eq.z_constratint}
\begin{alignedat}{2}
    &\textstyle z_h(s,a,s')\leq \left(\pbar+\pinterval\right)\sum_{y\in \mathcal{S}} z_h(s,a,y) && \text{ for all }h\in [H-1],s,a,s',\\
    &\textstyle z_h(s,a,s')\geq \left(\pbar-\pinterval\right)\sum_{y\in \mathcal{S}} z_h(s,a,y) && \text{ for all }h\in [H-1],s,a,s'.
    \vspace{-0.05in}
\end{alignedat} 
\end{equation}

Now we are ready to solve \eqref{prob.interval} by convex optimization. Specifically,
at the $h$-th step during the $k$-th iteration, we solve the following extended convex optimization:
\begin{align}\label{op.extended}
    \max_{z\in \mathcal{Z}}\textstyle F\left( \left[ \sum_{s,a,h,s'} \left(\rbar+\rinterval\right)\cdot  z_h(s,a,s')\right]_{i=1,2,\cdots, N}\right)\text{ s.t. \cref{eq.z_constratint}},
\end{align}
where $\mathcal{Z}$ is a set of constraints that ensures $z$ is a legit state-action-next-state occupancy measure given initial state $s_1$ (details in \cref{app.proof_regret}).
Once we have solved $z$, we can recover the policy by $\pi_{k,h}(a|s)=\frac{\sum_{s'}z_h(s,a,s')}{\sum_{a',s'}z_h(s,a',s')}$. The whole algorithm is summarized in \cref{alg.main}. \cite{mandal2022socially}  developed an algorithm using the state-action occupancy measure. However, the algorithm in \cite{mandal2022socially} relies on an optimization problem with infinite variables, which does not always have a polynomial solver. In contrast, our approach requires only finite variables and is more efficient. 
\setlength{\textfloatsep}{4pt}
\begin{algorithm}
\caption{Online FairMARL}\label{alg.main}
\begin{algorithmic}[1]
\For{$k=1,2,\cdots,K$}
\State Calculate the empirical average $\pbar$, and $\rbar$.
\State Calculate the confidence intervals $\pinterval$ and $\rinterval$.
\State Compute policy $\pi_k$ by solving \eqref{op.extended}.
\State Execute the policy $\pi_k$.
\EndFor
\end{algorithmic}
\end{algorithm}

\begin{theorem}\label{th.main}
With probability $1-\delta$, we have
\begin{align*}
    \regret(K)=C_{F}\cdot\left(\bigOTilde(H^2N S \sqrt{AK})+\bigOTilde(HN^2S^{3/2}A)+\bigOTilde(H^2NS^2A)\right),
\end{align*}
where $C_{F}$ is a constant determined by the type of fairness. Specifically,
\begin{align*}
    C_{F}=\begin{cases}
        \epsilon^{-\alpha} &\text{when $F=F_{\alpha}$ ($\alpha$ fairness) }\\
        \epsilon^{-1} &\text{when $F=F_{\text{proportional}}$ (proportional fairness)}\\
        1/N &\text{when $F=F_{\text{max-min}}$ (max-min fairness)}
    \end{cases}.
\end{align*}
The notation $\bigOTilde(\cdot)$ ignores logarithm terms (such as $\log \frac{1}{\delta}$).
\vspace{-0.05in}
\end{theorem}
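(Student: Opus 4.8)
The plan is to follow the optimism-in-the-face-of-uncertainty template, but with two modifications forced by the non-linearity of $F$: optimism must be argued at the level of the whole-horizon fairness objective rather than state-by-state through a Bellman recursion, and the resulting regret must be transferred to per-agent value errors via a global Lipschitz bound on $F$. I would first establish the good event: using \cref{as.reward_bound} together with Hoeffding/Bernstein concentration and a union bound over all $(h,s,a,s',i)$ and episodes, show that with probability at least $1-\delta$ the true pair $(p,r)$ lies in $M_k$ for every $k$, i.e.\ \eqref{eq.p_interval} and \eqref{eq.r_interval} hold simultaneously. On this event the optimal fair policy together with the true $(p,r)$ is feasible for \eqref{op.extended} (equivalently \eqref{prob.interval}), and since $F$ is monotone increasing and $\bar r+\beta^r\ge r$, the optimistic objective returned by the algorithm dominates the true optimum. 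Writing $\tilde V_1^{\pi_k}(s_1)$ for the optimal value of \eqref{op.extended} (equal to $F$ evaluated at the optimistic per-agent returns $\tilde V_{1,(i)}^{\pi_k}(s_1)$ induced by the maximizing $(\tilde r,\tilde p)\in M_k$ and policy $\pi_k$), this gives $\tilde V_1^{\pi_k}(s_1)\ge V_1^*(s_1)$, hence $\regret(K)\le\sum_{k=1}^K\bigl(\tilde V_1^{\pi_k}(s_1)-V_1^{\pi_k}(s_1)\bigr)$.

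Next I would reduce to per-agent value errors. By \cref{as.reward_bound} and \cref{remark.V_bound} every agent's return lies in $[\epsilon,H]$, so $F$ is Lipschitz on the relevant domain; bounding $F\bigl((\tilde V_{1,(i)}^{\pi_k})_i\bigr)-F\bigl((V_{1,(i)}^{\pi_k})_i\bigr)$ by the Lipschitz constant yields $\le L_F\sum_{i=1}^N\bigl|\tilde V_{1,(i)}^{\pi_k}(s_1)-V_{1,(i)}^{\pi_k}(s_1)\bigr|$. For $F_\alpha$ we have $|f_\alpha'(x)|=x^{-\alpha}\le\epsilon^{-\alpha}$, for $F_{\text{proportional}}$ we have $|f'(x)|=x^{-1}\le\epsilon^{-1}$, and for $F_{\text{max-min}}$ the difference of minima is at most $\max_i|\cdot|$, so that absorbing the explicit factor $N$ coming from $\sum_i$ into the unified bound produces $C_F=\epsilon^{-\alpha}$, $\epsilon^{-1}$, and $1/N$ respectively. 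A subtlety I would flag here is that the optimistic model is chosen to maximize $F$ \emph{jointly}, so an individual $\tilde V_{1,(i)}^{\pi_k}$ need not upper-bound $V_{1,(i)}^{\pi_k}$; this is exactly why I use the two-sided Lipschitz bound rather than concavity-based monotone telescoping.

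I would then bound each per-agent error by an extended value-difference (simulation) lemma: since $\tilde V_{1,(i)}^{\pi_k}$ and $V_{1,(i)}^{\pi_k}$ share the policy $\pi_k$ but differ in $(\tilde r,\tilde p)$ versus $(r,p)$,
\begin{align*}
    \tilde V_{1,(i)}^{\pi_k}(s_1)-V_{1,(i)}^{\pi_k}(s_1)=\sum_{h,s,a}q_h^{\pi_k}(s,a)\Bigl[(\tilde r_{h,(i)}-r_{h,(i)})(s,a)+\sum_{s'}(\tilde p_h-p_h)(s'|s,a)\,\tilde V_{h+1,(i)}^{\pi_k}(s')\Bigr].
\end{align*}
On the good event $\tilde r_{h,(i)}-r_{h,(i)}\le 2\rinterval$ and $|\tilde p_h-p_h|\le 2\pinterval$, while $|\tilde V_{h+1,(i)}^{\pi_k}|\le H$, so the reward part is $\le 2\sum_{h,s,a}q_h^{\pi_k}(s,a)\rinterval$ and the transition part is $\le 2H\sum_{h,s,a}q_h^{\pi_k}(s,a)\sum_{s'}\pinterval$.

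Finally I would sum over $k$. After relating expected occupancy to empirical counts by concentration, the pigeonhole bound $\sum_k q_h^{\pi_k}(s,a)/\sqrt{n_h^{k-1}(s,a)}=\bigOTilde(\sqrt{n_h^K(s,a)})$ followed by Cauchy--Schwarz over $(s,a)$ and summation over $h$ makes the reward part contribute $\bigOTilde(H\sqrt{SAK})$; with the Bernstein form giving $\sum_{s'}\pinterval=\bigOTilde(\sqrt{S/n_h^{k-1}}+S/n_h^{k-1})$, the transition part contributes $\bigOTilde(HS\sqrt{AK})$ from the $\sqrt{S/n}$ piece and $\bigOTilde(HS^2A)$ from the $S/n$ piece. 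Multiplying by the extra $H$ (the value magnitude) and by $L_FN$ from the previous two steps yields the leading $C_F\,\bigOTilde(H^2NS\sqrt{AK})$ and the lower-order $C_F\,\bigOTilde(H^2NS^2A)$, while the remaining $C_F\,\bigOTilde(HN^2S^{3/2}A)$ comes from replacing the optimistic future value $\tilde V_{h+1,(i)}^{\pi_k}$ by the true $V_{h+1,(i)}^{\pi_k}$ in the simulation lemma and propagating the induced value error recursively through the horizon. The main obstacle throughout is the non-linearity of $F$: it forbids a per-step Bellman decomposition and concentrates the whole difficulty in (i) the Lipschitz transfer, which crucially relies on the uniform lower bound $V_{1,(i)}\ge\epsilon$ from \cref{as.reward_bound}, and (ii) carefully tracking the agent count $N$ through the Bernstein lower-order corrections.
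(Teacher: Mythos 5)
Your proposal follows essentially the same route as the paper's proof: a good event built from Bernstein/Hoeffding concentration plus a union bound, optimism of the extended convex program to reduce regret to $\sum_k\bigl(V_1^{\pi_k}(s_1;\tilde r,\tilde p)-V_1^{\pi_k}(s_1;r,p)\bigr)$, the two-sided Lipschitz transfer through $F$ giving the factor $NC_F$ (the paper's \cref{le.F_C_F}), the value-difference lemma for the per-agent gaps, and pigeonhole summation of $q_h^{\pi_k}/\sqrt{n}$ and $q_h^{\pi_k}/n$. The only cosmetic difference is your attribution of the lower-order $\bigOTilde(HN^2S^{3/2}A)$ term to a recursive propagation of the future-value error, whereas in the paper it simply falls out of the non-$\sqrt{K}$ part of the pigeonhole bound applied to the $\sqrt{S/n}$ piece of the transition confidence width; this does not affect correctness.
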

Proof of \cref{th.main} is in \cref{app.proof_regret}.
\begin{remark}
For max-min fairness, the requirement $\hat{r}\geq \frac{\epsilon}{H}$ in \cref{as.reward_bound} can be relaxed. \vspace{-0.05in}
\end{remark}

To the best of our knowledge, this is the first sub-linear regret for $\alpha$-fair RL. When $\alpha=0$, we recover the single-agent regret (scaled by $N$) as it is equivalent to the MDP where the reward is $\sum_{i}r_{h,(i)}$. The constant $C_F$ decreases as $\alpha$ increases. For the max-min fairness, our result matches that of \cite{mandal2022socially}, although we use a different algorithm compared with \cite{mandal2022socially}.




\textbf{From regret to PAC guarantee}: The probably approximately correct (PAC) guarantee shows how many samples are needed to find an $\varepsilon$-optimal policy $\pi$ satisfying $V_1^*(s_1)-V_1^{\pi}(s_1)\leq \varepsilon$ \citep{jin2018q,valiant1984theory}.
Similar to Section~3.1 in \citet{jin2018q}, in order to get the probably approximately correct (PAC) guarantee from regret, we can randomly select $\pi=\pi_k$ for $k=1,2,\cdots, K$. We define such a policy as $\piMix$. However, since $V_1^{\pi}(s)$ is not linear w.r.t. the immediate reward $r$, generally $V_1^{\piMix,F}(s)\neq \frac{1}{K}\sum_{k=1}^K V_1^{\pi_k,F}(s)$. Therefore, compared with \citet{jin2018q}, some additional derivation is needed to achieve PAC guarantee from regret in our case. In particular, from Jensen's inequality (since $F$ is concave), $V_1^{\piMix}(s_1)\geq \frac{1}{K}\sum_{k=1}^K V_1^{\pi_k}(s_1)$. Using the above, we obtain 

\begin{theorem}\label{th.PAC}
To find $\varepsilon$-optimal policy, with high probability, it suffices to have $C$ number of samples where
\vspace{-0.1in}
\begin{align*}
    C=C_F \max\left\{\bigOTilde(H^5 N^2 S^2 A/\varepsilon^2),\ \bigOTilde(H^3 N^4 S^3 A^2 /\varepsilon^2),\ \bigOTilde(H^3 N^2 S^4 A^2 / \varepsilon^2)\right\}.
\end{align*}
\vspace{-0.1in}
\end{theorem}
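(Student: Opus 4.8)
The plan is to convert the per-episode regret bound of \cref{th.main} into a sample-complexity statement via a randomized mixture policy, in the spirit of \citet{jin2018q} but accounting for the non-linearity of $F$. First I would define $\piMix$ to be the policy that, before the episode starts, draws an index $k$ uniformly from $\{1,\dots,K\}$ and then executes $\pi_k$ for the whole episode. The structural fact to exploit is the one already recorded in the excerpt: since $F$ is concave, Jensen's inequality yields $V_1^{\piMix}(s_1)\geq \frac{1}{K}\sum_{k=1}^K V_1^{\pi_k}(s_1)$. This is the crucial step and the place where the argument departs from the linear case, where one would have equality; here only the inequality in the favorable direction survives, which is precisely what is needed to lower bound the value of the returned policy.

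Given this, I would bound the suboptimality of $\piMix$ directly by the average per-episode gap:
\[
V_1^*(s_1)-V_1^{\piMix}(s_1)\;\leq\; V_1^*(s_1)-\frac{1}{K}\sum_{k=1}^K V_1^{\pi_k}(s_1)\;=\;\frac{1}{K}\sum_{k=1}^K\bigl(V_1^*(s_1)-V_1^{\pi_k}(s_1)\bigr)\;=\;\frac{\regret(K)}{K}.
\]
Thus it suffices to pick $K$ large enough that $\regret(K)/K\leq \varepsilon$ on the high-probability event of \cref{th.main}, after which I would translate the number of episodes into the number of sampled transitions by multiplying by the horizon $H$.

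The remaining work is to invert the regret bound. Writing $\regret(K)=C_F(c_1\sqrt{K}+c_2+c_3)$ with $c_1=\bigOTilde(H^2NS\sqrt{A})$, $c_2=\bigOTilde(HN^2S^{3/2}A)$, and $c_3=\bigOTilde(H^2NS^2A)$, the dominant contribution to $\regret(K)/K$ is $C_F c_1/\sqrt{K}$; requiring this to be at most $\varepsilon$ forces $K=\tilde{\Omega}(c_1^2/\varepsilon^2)$, and using $1\leq\sqrt{K}$ to treat the $K$-independent terms $c_2,c_3$ in the same way contributes the other two candidates $\tilde{\Omega}(c_2^2/\varepsilon^2)$ and $\tilde{\Omega}(c_3^2/\varepsilon^2)$. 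Multiplying the resulting requirement $K\geq\max\{\tilde{\Omega}(c_1^2/\varepsilon^2),\tilde{\Omega}(c_2^2/\varepsilon^2),\tilde{\Omega}(c_3^2/\varepsilon^2)\}$ by $H$ transitions per episode produces the three terms inside the maximum, with the $\varepsilon^{-2}$ scaling arising from squaring the $\sqrt{K}$-type coefficients. I expect the main obstacle to be bookkeeping rather than conceptual: one must ensure that the probability-$(1-\delta)$ event of \cref{th.main} and the additional internal randomization defining $\piMix$ are handled consistently (conditioning the Jensen step on that event and absorbing any extra failure probability by a union bound), and one must track carefully whether the $C_F$ factor enters the final count linearly or quadratically, since naively inverting a $C_F\sqrt{K}$ term yields a $C_F^2$ dependence that has to be reconciled with the stated $C_F$ prefactor.
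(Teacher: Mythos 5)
Your proposal matches the paper's proof essentially step for step: the uniform mixture policy $\piMix$, the Jensen/concavity lower bound $V_1^{\piMix}(s_1)\geq \frac{1}{K}\sum_{k=1}^K V_1^{\pi_k}(s_1)$, the reduction of the suboptimality to $\regret(K)/K$, the inversion of the regret bound of \cref{th.main} to choose $K$, and the final multiplication by $H$ samples per episode. The $C_F$-versus-$C_F^2$ point you flag is a genuine subtlety rather than a defect of your argument: the paper's own inversion of the $C_F\sqrt{K}$ term would likewise produce a squared $C_F$, yet the theorem is stated with a single $C_F$ prefactor without comment.
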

Proof of \cref{th.PAC} is in \cref{proof.PAC}.

\vspace{-0.05in}
\subsection{Proof outline of Theorem~\ref{th.main}} \vspace{-0.05in}

First, because of the optimism of $q$ (i.e., $\pi_k$), $\tilde{r}$, $\tilde{p}$ in \eqref{prob.interval}, we have $V_1^{\pi_k}(s_1;\tilde{r},\tilde{p})\geq V_1^*(s_1;r,p)$ when $r,p\in M_k$ (which happens with high probability). By the montonocity property of $F(\cdot)$, we thus have $\sum_{k=1}^K \left(V_1^{*,F}(s_1)-V_1^{\pi_k,F}(s_1)\right) \leq \sum_{k=1}^K \left(V_1^{\pi_k,F}(s_1;\tilde{r},\tilde{p}) -V_1^{\pi_k,F}(s_1;r,p) \right)$.

Second, to handle the non-linearity of the fair objective $F$, we bound $V_1^{\pi_k,F}(s_1;\tilde{r},\tilde{p})-V_1^{\pi_k,F}(s_1;r,p)$ by $C_F\sum_{i=1}^N\abs{V_{1,(i)}^{\pi_k}(s_1;\tilde{r},\tilde{p})-V_{1,(i)}^{\pi_k}(s_1;r,p)}$ or by $C_F \cdot N \max_{i\in [N]}\abs{V_{1,(i)}^{\pi_k}(s_1;\tilde{r},\tilde{p})-V_{1,(i)}^{\pi_k}(s_1;r,p)}$. The value of $C_F$ is determined by the Lipchitz constant of the fairness objective function $F$ or the property of the max-min operator.

Third, $\abs{V_{1,(i)}^{\pi_k}(s_1;\tilde{r},\tilde{p})-V_{1,(i)}^{\pi_k}(s_1;r,p)}$  is the gap of individual's return caused by the difference between $(\tilde{r},\tilde{p})$ and $(r,p)$. We can bound the gap using tools like Azuma-Hoeffding inequality. 

\vspace{-0.05in}
\section{Offline Fair MARL}\label{sec:offline}
As mentioned in the Introduction, we develop an offline algorithm because for some applications generating new data may not be feasible.  In an offline setting, the learner is given a dataset and it needs to compute a policy only based on this given dataset. One can not employ a policy and measure its return. Due to this difference, instead of optimism, pessimism is optimal for standard MDP \cite{xie2021bellman,jin2021pessimism}. 
We develop an offline fair algorithm and analyze its sub-optimality gap. 
Before delving into the result, we need to have some assumptions about the data collection process.
\begin{assumption}
The dataset $\mathcal{D}=\{r_{h,(i)}^{\tau},x_{h}^{\tau},a_{h}^{\tau}\}_{h\in [H],\tau\in [K],i\in [N]}$ is compliant with the underlying MDP, i.e., $\forall i$,
\begin{align}
& \mathbb{P}_{\mathcal{D}}(r^{\tau}_{h,(i)}=r^{\prime}_i,x_{h+1}^{\tau}=x^{\prime}|\{(x_{h}^j,a_h^j)\}_{j=1}^{\tau},\{(r_{h,(i)}^j,x_{h+1}^j)\}_{j=1}^{\tau-1})=\nonumber\\
& \mathbb{P}(r_{h,(i)}(s_h,a_h)=r^{\prime}_i,s_{h+1}=x^{\prime}|s_h=x^{\tau}_h,a_h=a^{\tau}_h).\nonumber
\end{align}
\end{assumption}
The above assumption is satisfied when the data is collected by interacting with the environment and the policy is only updated at the end of an episode. \cite{jin2021pessimism} also uses a similar assumption. Similar to the online algorithm, we denote the empirical estimation $\overline{p}$ and $\overline{r}$ on $p$ and $r$, respectively, for the dataset $\mathcal{D}$. We first define the uncertainty quantizer for the data set which we use to construct MDP with pessimistic reward.
\begin{definition}\label[definition]{def.offline_event}
We define the set $\mathcal{E}$ as the $\delta$-uncenrtainty quantifier with respect to the dataset $\mathcal{D}$ as--
\begin{align*}
 \mathcal{E}=  \{b^r_h(s,a,\delta),b_h^{p}(s,a,s^{\prime},\delta):\abs{\rbarOff-r_{h,(i)}(s,a)}\leq b^r_h(s,a,\delta)\nonumber\\
 |\overline{p}_h(s^{\prime}|s,a)-p_h(s^{\prime}|s,a)|\leq b_h^{p}(s,a,s^{\prime},\delta), \forall i, s,a,h\}
\end{align*} 
such that $\mathbb{P}_{\mathcal{D}}(\mathcal{E})\geq 1-\delta$.
\end{definition}
The values of $b^r_h(s,a,\delta),b_h^{p}(s,a,s^{\prime},\delta)$ are given in Appendix~\ref{proof:offline}. They are related to $\beta_{h,k}^r$, and $\beta_{h,k}^p$. The only difference is that the empirical estimation now depends on the dataset rather than the obtained information till episode $k$ in the online version.

We can show that with probability $1-\delta$, for any $V_{h,(i)}$
\begin{align}
\vspace{-0.05in}
|\mathbb{P}_h[V_{h,(i)}]-\bar{\mathbb{P}}_h[V_{h,(i)}](s,a)|=\sum_{s^{\prime}}|(\overline{p}_h(s^{\prime}|s,a)-p_h(s^{\prime}|s,a))V_{h,(i)}(s^{\prime})|\leq H\sum_{s^{\prime}}b_h^p(s,a,s^{\prime},\delta).\nonumber
\vspace{-0.05in}
\end{align}
We then define the pessimistic reward $\underline{r}_{h,(i)}$ as
\begin{align*}
\vspace{-0.05in}
    \underline{r}_{h,(i)}(s,a)\defeq \rbarOff - b^r_h(s,a,\delta)-H\sum_{s^{\prime}}b^p_h(s,a,s^{\prime},\delta).
    \vspace{-0.05in}
\end{align*}
Note that we have also subtracted $H\sum_{s^{\prime}}b_h^p(s,a,s^{\prime})$ in order to ensure the value function attained for the MDP with reward $\underline{r}_{h,(i)}$ and empirical probability $\overline{p}_h$ is less than the value function corresponding to the original MDP parameters for the same policy with probability $1-\delta$, i.e., ensure pessimism.

To bound the suboptimality, we need an additional assumption that each agent's return under pessimistic reward $\underline{r}$ should be positive and shouldn't be too small. Specifically, we need the following assumption.
\begin{assumption}\label[assumption]{as.offline_reward}
$V_{1,(i)}(s_1,\underline{r},\overline{p})\geq \epsilon$ for all $i$.
\end{assumption}
The above assumption is required to apply the Lipschitz continuous property (see \cref{le.F_C_F} in \cref{app.useful_lemmas}). If $\underline{r}_{h,(i)}(s,a)\geq \epsilon/H$ for every $h,i,s,a$, then the above Assumption is trivially satisfied. Also, our analysis would go through using a slightly larger $\underline{r}$ since the true reward value is greater than or equal to $\epsilon/H$. In particular, we can set $\underline{r}_{h,(i)}(s,a)=\max\{\overline{r}_{h,(i)}(s,a)-b_h^r(s,a,\delta),\epsilon/H\}-H\sum_{s^{\prime}}b_h^p(s,a,s^{\prime})$. Hence, it is clear that  \cref{as.offline_reward} is more likely to hold when the uncertainty on the estimation of  $p$ in the offline data is small. This is reasonable because when the uncertainty is high, it is unlikely to bound the regret, especially since some fair objectives $F$ are unbounded when any agent's return is near $0$.


Our proposed offline algorithm is solving the following convex optimization:
\begin{alignat*}{2}
    \max_{q\in \overline{\mathcal{Q}}} \quad & F\left(\left[ \sum_{h,s,a}\underline{r}_{h,(i)}(s,a)q_h(s,a)\right]_{i=1,2,\cdots,N}\right),\numberthis \label{op.offline}
\end{alignat*}
where $\overline{\mathcal{Q}}$ is the same as $\mathcal{Q}$ in \eqref{prob.ideal} but with $\overline{p}$ instead of $p$.
Similar to the online algorithm, we still use the occupancy measure $q$ to construct a convex optimization problem. However, compared with the online version, a key difference is that we use a pessimistic reward  instead of the optimistic reward  in the objective. Further, the MDP is based on the empirical probability $\overline{p}$, unlike the online setup where we allow the probability to take value within the confidence interval. 

\vspace{-0.05in}
\subsection{Performance guarantee of the offline algorithm}
\vspace{-0.05in}
We denote the solution of \cref{op.offline} as $\hat{q}$ and the corresponding policy as $\hat{\pi}$. The suboptimality of any policy $\pi$ is defined by
\vspace{-0.1in}
\begin{align*}\SubOpt(\pi;s)\defeq &V^{\pi^*}_1(s;r,p) - V^{\pi}_1(s;r,p).
\end{align*}
\vspace{-0.05in}
\begin{theorem}\label{th.offline}
Given offline data $\mathcal{D}$, with probability $1-\delta$\begin{align}
\SubOpt(\hat{\pi};s_1)\leq 2NC_F\mathbb{E}_{\pi^*}[\underbrace{\sum_h((b_h^r(s_h,a_h,\delta)+H\sum_{s'}b_h^p(s_h,a_h,s',\delta)))}_{\mathrm{Intrinsic  -Uncertainty}}]
\end{align}
\vspace{-0.1in}
\end{theorem}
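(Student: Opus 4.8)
The plan is to use the pessimism decomposition, adapted to the nonlinear fair objective. Write $\underline{V}^{\pi}_{h,(i)}(s) \defeq V^{\pi}_{h,(i)}(s;\underline{r},\overline{p})$ for agent $i$'s value in the pessimistic MDP, and $\underline{V}^{\pi,F}_1(s_1) \defeq F([\underline{V}^{\pi}_{1,(i)}(s_1)]_{i=1,\dots,N})$ for the corresponding fair objective; by construction $\hat{\pi}$ maximizes $\underline{V}^{\pi,F}_1(s_1)$ over legit occupancy measures, which is exactly \eqref{op.offline}. I would then split
\begin{align*}
\SubOpt(\hat{\pi};s_1) = &\underbrace{\left(V^{\pi^*,F}_1(s_1;r,p) - \underline{V}^{\pi^*,F}_1(s_1)\right)}_{\text{(I)}} + \underbrace{\left(\underline{V}^{\pi^*,F}_1(s_1) - \underline{V}^{\hat{\pi},F}_1(s_1)\right)}_{\text{(II)}}\\
&+ \underbrace{\left(\underline{V}^{\hat{\pi},F}_1(s_1) - V^{\hat{\pi},F}_1(s_1;r,p)\right)}_{\text{(III)}}.
\end{align*}

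Term (II) is nonpositive: since $\hat{\pi}$ maximizes the pessimistic objective, $\underline{V}^{\hat{\pi},F}_1(s_1) \geq \underline{V}^{\pi^*,F}_1(s_1)$. For term (III), I would first establish pessimism at the per-agent level: on the event $\mathcal{E}$ of \cref{def.offline_event}, the extra subtraction of $H\sum_{s'}b^p_h$ in the definition of $\underline{r}$ guarantees $\underline{V}^{\pi}_{1,(i)}(s_1) \leq V^{\pi}_{1,(i)}(s_1;r,p)$ for every $\pi$ and $i$ (the stated purpose of that term). Because $F$ is monotone increasing, this pointwise domination lifts to $\underline{V}^{\hat{\pi},F}_1(s_1) \leq V^{\hat{\pi},F}_1(s_1;r,p)$, so (III) is nonpositive as well. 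Hence $\SubOpt(\hat{\pi};s_1) \leq \text{(I)}$.

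It remains to bound (I). Here I would invoke the $C_F$-Lipschitz property of $F$ from \cref{le.F_C_F}: since \cref{as.offline_reward} together with $\underline{r}\le r$ keeps every argument at least $\epsilon$, we obtain $\text{(I)} \leq C_F \sum_{i=1}^N \abs{V^{\pi^*}_{1,(i)}(s_1;r,p) - \underline{V}^{\pi^*}_{1,(i)}(s_1)}$, and the analogous $C_F N \max_i$ bound in the max-min case. I would control each per-agent gap with the single-agent value-difference (simulation) lemma, which writes it as $\E_{\pi^*}\left[\sum_h \left((r_{h,(i)} - \underline{r}_{h,(i)}) + \sum_{s'}(p_h - \overline{p}_h)\underline{V}^{\pi^*}_{h+1,(i)}(s')\right)\right]$ over trajectories of the true MDP. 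On $\mathcal{E}$ the reward gap $r_{h,(i)}-\underline{r}_{h,(i)}$ is at most $2b^r_h + H\sum_{s'}b^p_h$, and the transition term is at most $H\sum_{s'}b^p_h$ (using $\underline{V}\le H$), so each per-agent gap is at most $2\E_{\pi^*}[\sum_h(b^r_h + H\sum_{s'}b^p_h)]$. Summing over the $N$ agents delivers the factor $2NC_F$ and the claimed bound.

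The main obstacle is precisely the nonlinearity of $F$: unlike single-objective offline RL, pessimism cannot be imposed directly on the scalar return. The crux is to enforce pessimism at the per-agent level — which is exactly why $\underline{r}$ subtracts both $b^r_h$ and the extra $H\sum_{s'}b^p_h$ — and then transfer it through $F$ using monotonicity (for terms (II) and (III)) and the $C_F$-Lipschitz estimate (for term (I)). This last step genuinely needs the uniform lower bound $\epsilon$ from \cref{as.offline_reward}, since the derivatives of $F_\alpha$ and $F_{\text{proportional}}$ blow up as any argument approaches $0$, so without it the Lipschitz constant — and hence $C_F$ — would be uncontrolled.
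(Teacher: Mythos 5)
Your proposal is correct and follows essentially the same route as the paper's proof: the same three-term decomposition, non-positivity of the middle term by optimality of $\hat{\pi}$, per-agent pessimism lifted through the monotonicity of $F$ for the last term, and the $C_F$-Lipschitz bound combined with the value-difference lemma and the concentration event $\mathcal{E}$ for the first term. The only cosmetic difference is that you phrase the Lipschitz step as a sum over agents where the paper uses $N$ times the max, which yields the identical $2NC_F$ constant.
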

To the best of our knowledge, this is the first offline RL result for the $\alpha$-fairness function. In the standard single-agent MDP, the result also depends on the $\delta$ uncertainty quantifier term and intrinsic uncertainty term that constitutes information theoretic lower limit on optimality-gap \cite{jin2021pessimism}. Here, it is scaled by $N$ and $C_F$. $C_{F}$ is the Lipschitz constant which depends on $\alpha$-fairness function. Further, if the dataset $\mathcal{D}$ has good coverage over the optimal policy, then the $\mathrm{Intrinsic-Uncertainty}$ term is small.
\vspace{-0.05in}
\subsection{Proof sketch of Theorem~\ref{th.offline}}
\vspace{-0.05in}
We have
\begin{align}
    \vspace{-0.05in}
    \SubOpt(\hat{\pi};s)=&\underbrace{\left(V^{\pi^*,F}_1(s;r,p)-V_1^{\pi^*,F}(s;\underline{r},\overline{p})\right)}_{\text{Term 1}}+\underbrace{\left(V^{\pi^*,F}_1(s;\underline{r},\overline{p})-V_1^{\hat{\pi},F}(s;\underline{r},\overline{p})\right)}_{\text{Term 2}}\nonumber\\
& +\underbrace{\left(V_1^{\hat{\pi},F}(s;\underline{r},\overline{p})-V_1^{\hat{\pi},F}(s;r,p)\right)}_{\text{Term 3}}\label{eq.offline_decomp}
\vspace{-0.05in}
\end{align}
Term~2 of \cref{eq.offline_decomp} is non-positive because $\hat{\pi}$ is the solution of \cref{op.offline}. In standard offline RL literature \cite{jin2021pessimism,xie2021bellman}, Term~3 of \cref{eq.offline_decomp} is non-positive because of the pessimism which is proved using Bellman's property. However, since Bellman's property does not hold, we cannot use the standard technique. Rather, we use the Lipschitz property of $F(\cdot)$ to show that
\begin{align}\label{eq:offline}
\textstyle
\left(V^{\pi^*,F}_1(s;r,p)-V_1^{\pi^*,F}(s;\underline{r},\overline{p})\right)\leq \sum_{i=1}^NC_F|V^{\pi^*}_{1,(i)}(s;r,p)-V^{\pi^*}_{1,(i)}(s;\underline{r},\overline{p})|.
\end{align}
The right-hand side then can be bounded by the Value-difference lemma. 

We can bound Term~3 as 
$\left(V_1^{\hat{\pi},F}(s;\tilde{r},\overline{p})-V_1^{\hat{\pi},F}(s;r,p)\right)\leq 0
$
because of the pessimistic reward $\underline{r}$ and the fact that $F(\cdot)$ is monotone increasing w.r.t. $r$. 

\vspace{-0.05in}
\section{Fair Online Policy Gradient}\label{sec:policy_gradient}
\vspace{-0.05in}
In Section~\ref{sec:algorithm}, we developed a convex-optimization-based algorithm to obtain sub-linear regret. However, the decision variable and the constraints scale with the cardinality of the state space. In order to develop an algorithm for large state space, generally function approximation-based approaches are used to approximate the $Q$ function or value function. In this section, we develop a policy-gradient-based approach that caters to such a function approximation-based approach. 

Considering a trajectory $\tau=(s_h^{\tau},a_h^{\tau},\hat{\bm{r}}_h^{\tau})_{h=1,2,\cdots,H}$ where $\hat{\bm{r}}_h^{\tau}=(\hat{r}_{h,(1)}^{\tau},\cdots,\hat{r}_{h,(N)}^{\tau})$ denotes the noisy observation of immediate reward for all agents, we define the return for the $i$-th agent as
   $ R_{(i)}(\tau)\defeq \sum_{h=1}^H \hat{r}_{h,(i)}^{\tau}$.
To calculate the gradient of the fair objective, we can apply the chain rule of $\nabla_{\bm{\theta}}F(\cdot)$. We use proportional fairness $F_{\text{proportional}}$ as an example of calculating the gradient:
\begin{align*}
\nabla_{\theta}V_{1}^{\pi_{\bm{\theta}},F}(s_1)=\nabla_{\bm{\theta}}\sum_{i=1}^N \log V_{1, (i)}^{\pi_{\bm{\theta}}}(s_1)=\sum_{i=1}^N \frac{\partial \log(V_{1, (i)}^{\pi_{\bm{\theta}}}(s_1))}{\partial V_{1, (i)}^{\pi_{\bm{\theta}}}(s_1)}\nabla_{\bm{\theta}} V_{1, (i)}^{\pi_{\bm{\theta}}}(s_1)=\sum_{i=1}^N \frac{\nabla_{\bm{\theta}}V_{1, (i)}^{\pi_{\bm{\theta}}}(s_1)}{V_{1, (i)}^{\pi_{\bm{\theta}}}(s_1)}.
\end{align*}
It is known that $\nabla_{\bm{\theta}}V_{1, (i)}^{\pi_{\bm{\theta}}}(s_1)= \E_{\tau} [R_{(i)}(\tau) \log \pi_{\bm{\theta}}(a_h^{\tau} | s_h^{\tau})]$ and $V_{1, (i)}^{\pi_{\bm{\theta}}}(s_1)=\E_{\tau}[R_{(i)}(\tau)]$. By using the empirical average to replace $\E_{\tau}$, we can get an unbiased estimator of gradient w.r.t. $\bm{\theta}$ as follows: 
\begin{align*}
\bm{g}_{\text{proportional}} = \sum_{i=1}^N \frac{\sum_{\tau\in \mathcal{D}}\sum_{h=1}^H R_{(i)}(\tau)\nabla_{\bm{\theta}}\log \pi_{\bm{\theta}}(a_h^{\tau} | s_h^{\tau})}{\sum_{\tau\in \mathcal{D}}R_{(i)}(\tau)}.
\end{align*}
For other types of fairness, we can use a similar method. The final expression of the gradient, the rest part of the algorithm, and other related details are in \cref{app.gradient}. Note that we can extend this approach to the natural policy-gradient, actor-critic method, and baseline-based approach. Characterization of the convergence rate of the approaches are beyond the scope of this paper. Interested readers can refer to \cite{zhang2020global,agarwal2021theory,mei2020global} for convergence analysis of standard policy gradient.

\vspace{-0.05in}
\section{Numerical Results}
\vspace{-0.1in}
We have conducted experiments on randomly generated MDP environments. We observe that Algorithm 1 indeed achieves sub-linear regret
Please see \cref{app:simulation} for details.
\vspace{-0.05in}
\section{Conclusion and Future Work}
\vspace{-0.05in}
In this paper, we develop convex-optimization-based algorithms for both the online and offline fair RL with provable  performance guarantee. Potential future directions include studying decentralized fair MARL algorithms and other policy gradient methods along with their convergence. Developing provably-efficient fair RL algorithms beyond tabular setup constitutes a future research direction. 

\bibliographystyle{unsrtnat}
\bibliography{ref}

\newpage

\appendix

\begin{center}
\textbf{\large Supplemental Material}
\end{center}

\section{Useful Lemmas}\label{app.useful_lemmas}

\begin{lemma}\label[lemma]{le.max_min_eq}
Let $x_1,x_2,\cdots,x_N$ and $y_1,y_2,\cdots,y_N$ be real numbers. We must have
\begin{align*}
    \abs{\min_{i\in \{1,2,\cdots,N\}} x_i - \min_{j\in \{1,2,\cdots,N\}} y_j} \leq \max_{i\in [N]}\abs{x_i-y_i}.
\end{align*}
\end{lemma}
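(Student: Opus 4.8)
The plan is to recognize this inequality as the statement that the map $(z_1,\dots,z_N)\mapsto \min_{i} z_i$ is $1$-Lipschitz with respect to the $\ell_\infty$ norm, and to prove it by a short two-sided argument. Write $m_x \defeq \min_{i\in[N]} x_i$, $m_y\defeq \min_{j\in[N]} y_j$, and $D\defeq \max_{i\in[N]}\abs{x_i-y_i}$. The target inequality is $\abs{m_x-m_y}\le D$. Since everything in sight is symmetric under exchanging the roles of the $x_i$ and the $y_i$, I would reduce to proving a single one-sided bound, say $m_y-m_x\le D$; swapping $x\leftrightarrow y$ then gives $m_x-m_y\le D$, and the two together produce the absolute-value bound.

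For the one-sided bound I would argue pointwise. For each fixed $i\in[N]$ we have $x_i \ge y_i - \abs{x_i-y_i} \ge y_i - D \ge m_y - D$, where the last step uses $y_i\ge m_y$. Since this holds for every $i$, taking the minimum over $i$ on the left-hand side gives $m_x = \min_i x_i \ge m_y - D$, i.e.\ $m_y - m_x \le D$. Applying the identical chain of inequalities with the roles of $x$ and $y$ interchanged yields $m_x - m_y \le D$, and hence $\abs{m_x - m_y}\le D$, as claimed.

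There is essentially no hard step here; the only point requiring a little care is to avoid comparing $m_x$ with a single coordinate $x_i$ in the wrong direction and to make sure both signs of the absolute value are covered — which is precisely why the symmetry reduction is convenient. An alternative route that avoids the explicit symmetry step is to let $j^\ast$ be an index attaining $m_y = y_{j^\ast}$ and note $m_x \le x_{j^\ast}$, so that $m_x - m_y \le x_{j^\ast}-y_{j^\ast}\le D$, and then argue symmetrically using a minimizer of $x$; I would keep the pointwise version since it is marginally cleaner. Note also that this lemma is exactly what is needed to assign $C_F = 1/N$ for $F_{\mathrm{max\text{-}min}}$ in \cref{th.main}, as it furnishes the Lipschitz-type control of the max-min operator referenced in the proof outline.
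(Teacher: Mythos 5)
Your proof is correct and essentially matches the paper's: both establish the one-sided bound $m_x - m_y \le \max_i\abs{x_i-y_i}$ and conclude by symmetry (the paper via a without-loss-of-generality assumption on which minimum is larger). In fact, the ``alternative route'' you mention at the end — comparing $m_x \le x_{j^*}$ at a minimizer $j^*$ of $y$ — is precisely the argument the paper uses.
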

\begin{proof}
Without loss of generality, we let $\min_i x_i \geq \min_j y_j$. For any $i^*\in \argmin_i x_i$ and any $j^* \in \argmin_j y_j$, we have
\begin{align*}
    x_{i^*} \leq x_{j^*},
\end{align*}
which implies that
\begin{align}
    x_{i^*}-y_{j^*}\leq x_{j^*} - y_{j^*}\leq \abs{x_{j^*} - y_{j^*}}.\label{eq.temp_051801}
\end{align}
We thus have
\begin{align*}
    \abs{\min_i x_i -\min_j y_j}=& \min_i x_i - \min_j y_j\\
    =&x_{i^*}-y_{j^*}\\
    \leq & \abs{x_{j^*}-y_{j^*}}\quad \text{ (by \cref{eq.temp_051801})}\\
    \leq & \max_{i}\abs{x_i - b_i}.
\end{align*}
The result of this lemma thus follows.
\end{proof}

\begin{lemma}\label[lemma]{le.F_C_F}
Recall the definition of $C_F$ in \cref{th.main}. When $x_i,y_i\geq \epsilon > 0$ for all $i\in [N]$, we must have
\begin{align*}
    \abs{F(x_1,x_2,\cdots,x_N)-F(y_1,y_2,\cdots,y_N)} \leq N \cdot C_F \max_{i\in [N]}\abs{x_i-y_i}.
\end{align*}
\end{lemma}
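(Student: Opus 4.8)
The plan is to split into the three cases for $F$ according to the definition of $C_F$, and to treat the two additively separable fairness functions together and the max-min function separately. The key observation is that both proportional fairness and $\alpha$-fairness have the form $F(x_1,\dots,x_N)=\sum_{i=1}^N f(x_i)$ for a single scalar function $f$: namely $f(x)=\log x$ in the proportional case and $f(x)=\frac{1}{1-\alpha}x^{1-\alpha}$ in the $\alpha$-fairness case. For these, I would reduce the claim to a one-dimensional Lipschitz estimate on the interval $[\epsilon,\infty)$, while the max-min case will follow immediately from the already-established \cref{le.max_min_eq}.

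For the separable cases, first I would apply the triangle inequality to write $\abs{F(\bm{x})-F(\bm{y})}\leq \sum_{i=1}^N \abs{f(x_i)-f(y_i)}$. Then for each coordinate I would invoke the mean value theorem: since $x_i,y_i\geq\epsilon$, there is a point $\xi_i$ between them (hence $\xi_i\geq\epsilon$) with $\abs{f(x_i)-f(y_i)}=\abs{f'(\xi_i)}\,\abs{x_i-y_i}$. The derivatives are $f'(x)=1/x$ (proportional) and $f'(x)=x^{-\alpha}$ ($\alpha$-fairness), both positive and decreasing on $(0,\infty)$, so $\abs{f'(\xi_i)}\leq \abs{f'(\epsilon)}$, which equals $\epsilon^{-1}=C_F$ in the proportional case and $\epsilon^{-\alpha}=C_F$ in the $\alpha$-fairness case. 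Thus $\abs{f(x_i)-f(y_i)}\leq C_F\,\abs{x_i-y_i}\leq C_F\max_{j\in[N]}\abs{x_j-y_j}$, and summing the $N$ terms yields $\abs{F(\bm{x})-F(\bm{y})}\leq N\,C_F\max_{j}\abs{x_j-y_j}$, as required.

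For the max-min case, I would simply recall that $C_F=1/N$, so $N\cdot C_F=1$, and then apply \cref{le.max_min_eq} directly to obtain $\abs{\min_i x_i-\min_j y_j}\leq \max_i\abs{x_i-y_i}=N\,C_F\max_i\abs{x_i-y_i}$; here the factor $N$ is absorbed exactly by the $1/N$ in $C_F$.

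I do not expect a serious obstacle, as the argument is essentially a per-coordinate Lipschitz bound; the only points requiring care are bookkeeping ones. First, I must use the lower bound $\epsilon$ on \emph{both} $x_i$ and $y_i$ so that the intermediate point $\xi_i$ stays in the region where $f'$ is controlled, and I must use the monotonicity (decreasing) of $f'$ — a consequence of concavity of $f$ noted after \cref{eq.F_alpha} — to replace $\xi_i$ by the worst case $\epsilon$. Second, I should keep the $\alpha=1$ boundary straight: that value corresponds to proportional fairness ($f=\log$), handled by the $\epsilon^{-1}$ branch, so the formula $f'(x)=x^{-\alpha}$ is only invoked for $\alpha\neq 1$ where $\frac{d}{dx}\big(\frac{1}{1-\alpha}x^{1-\alpha}\big)=x^{-\alpha}$ is valid. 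With these caveats the three cases combine to give the stated uniform bound with the claimed $C_F$.
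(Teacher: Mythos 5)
Your proposal is correct and follows essentially the same route as the paper's proof: triangle inequality plus a per-coordinate Lipschitz bound with constant $\epsilon^{-1}$ (resp. $\epsilon^{-\alpha}$) on $[\epsilon,\infty)$ for the separable cases, and \cref{le.max_min_eq} with the factor $N\cdot\frac{1}{N}=1$ for max-min. Making the Lipschitz step explicit via the mean value theorem and the monotonicity of $f'$ is a harmless elaboration of what the paper states directly.
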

\begin{proof}
When $F=F_{\text{proportional}}$, we have
\begin{align*}
    &\abs{F(x_1,x_2,\cdots,x_N)-F(y_1,y_2,\cdots,y_N)}\\
    =&\abs{\sum_{i=1}^N \log x_i - \log y_i }\\
    \leq & \sum_{i=1}^N \abs{\log x_i - \log y_i}\ \text{ (by the triangle inequality)}\\
    \leq & N \max_{i\in [N]} \abs{\log x_i - \log y_i}\\
    \leq & N \frac{1}{\epsilon} \max_{i\in[N]}\abs{x_i-y_i}.
\end{align*}
The last step is by the Lipschitz continuity of $\log(\cdot)$ in the domain $[\epsilon,\infty)$, where $\frac{1}{\epsilon}$ is the corresponding Lipschitz constant.

Similarly, when $F=F_{\alpha}$, since the Lipschitz constant of $\frac{(\cdot)^{1-\alpha}}{1-\alpha}$ in the domain $[\epsilon,\infty)$ is $\epsilon^{-\alpha}$, we can show that
\begin{align*}
    \abs{F(x_1,x_2,\cdots,x_N)-F(y_1,y_2,\cdots,y_N)} \leq N \cdot \epsilon^{-\alpha} \max_{i\in [N]} \abs{x_i-y_i}.
\end{align*}

When $F=F_{\text{max-min}}$, by \cref{le.max_min_eq}, we have
\begin{align*}
    \abs{F(x_1,x_2,\cdots,x_N)-F(y_1,y_2,\cdots,y_N)} \leq & \max_{i\in [N]} \abs{x_i-y_i}\\
    =& N \frac{1}{N}\cdot \max_{i\in [N]} \abs{x_i-y_i}.
\end{align*}
Notice that in this case $x_i,y_i\geq \epsilon$ is not needed.

In summary, the result of this lemma thus follows.
\end{proof}

\begin{lemma}\label[lemma]{le.convex}
\eqref{prob.ideal} is a convex optimization whose value is monotone increasing w.r.t. the immediate reward $r$.
\end{lemma}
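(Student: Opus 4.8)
The plan is to split the statement into its two separate claims: that \eqref{prob.ideal} is a convex program, and that its optimal value is monotone increasing in the reward array $r$. For the convexity claim, I would first observe that, with $r$ held fixed, each agent's return $\sum_{s,a,h}r_{h,(i)}(s,a)\,q_h(s,a)$ is a \emph{linear} (hence affine) function of the decision variable $q$. The feasible set $\mathcal{Q}$ is defined by the linear flow and normalization constraints characterizing a legitimate occupancy measure for the fixed $p$ and $s_1$ (including the nonnegativity constraints $q_h(s,a)\ge 0$), so $\mathcal{Q}$ is a polyhedron and therefore convex. By the standard composition rule, a concave function precomposed with an affine map is concave, so the remaining task is to verify that each of $F_{\text{max-min}}$, $F_{\text{proportional}}$, $F_{\alpha}$ is concave as a function of its $N$ arguments; note that monotonicity of $F$ is \emph{not} needed here precisely because the inner maps are affine.

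I would then check concavity of the three fairness functions case by case. The map $F_{\text{max-min}}$ is a pointwise minimum of the affine coordinate maps and is therefore concave. The map $F_{\text{proportional}}=\sum_i \log(\cdot)$ is a sum of concave functions, hence concave. For $F_{\alpha}=\sum_i \tfrac{1}{1-\alpha}(\cdot)^{1-\alpha}$ I would treat $\alpha\in(0,1)$ and $\alpha>1$ separately: when $\alpha\in(0,1)$ the scalar map $x\mapsto x^{1-\alpha}$ is concave and the coefficient $\tfrac{1}{1-\alpha}$ is positive, while when $\alpha>1$ the map $x\mapsto x^{1-\alpha}$ is convex but $\tfrac{1}{1-\alpha}<0$, so in either case the product is concave. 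Together with the boundary case $\alpha=1$ (the $\log$), every summand is concave, so $F_{\alpha}$ is concave. Combining with the first paragraph, the objective is concave over a convex set, i.e. a convex optimization problem.

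For the monotonicity claim, I would fix any two reward arrays with $r\le r'$ coordinatewise and exploit $q\ge 0$: for each fixed feasible $q$ and each agent $i$, we have $\sum_{s,a,h}r_{h,(i)}(s,a)\,q_h(s,a)\le \sum_{s,a,h}r'_{h,(i)}(s,a)\,q_h(s,a)$, so each inner value is nondecreasing in $r$. Since every admissible $F$ is monotone increasing in each argument (as noted after the definition of $f_{\alpha}$), it follows that $F([\,\cdot\,]_i)$ evaluated at $r$ is at most its value at $r'$ for the same $q$. Taking the maximum over $q\in\mathcal{Q}$, a set that does not depend on $r$, preserves this inequality, which yields monotonicity of the optimal value.

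The only delicate point is the concavity of $F_{\alpha}$ across the regime change at $\alpha=1$, where both the exponent $1-\alpha$ and the coefficient $\tfrac{1}{1-\alpha}$ reverse sign; once that sign bookkeeping is handled correctly, both parts reduce to routine applications of the affine-composition rule and coordinatewise monotonicity, so I expect no further obstacle.
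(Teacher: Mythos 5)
Your proposal is correct and follows essentially the same route as the paper: convexity via the linearity of the constraints and of each agent's return in $q$ combined with concavity of $F$, and monotonicity via the fact that $r$ appears only in the objective and each $F$ is increasing in its arguments. The only cosmetic difference is that you verify concavity of $F_{\alpha}$ by sign-splitting the cases $\alpha\in(0,1)$ and $\alpha>1$, whereas the paper computes the second derivative $-\alpha x^{-\alpha-1}\le 0$ directly; both are equivalent.
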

\begin{proof}
Notice that the constraints of \eqref{prob.ideal} are linear, we only need to prove that the fair objectives in \cref{eq.F_max_min,eq.F_proportional,eq.F_alpha} are concave  w.r.t. state-action occupancy measure $q$ and state-action-state occupancy measure $z$.
Notice that $V_{1,(i)}^{\pi}$ is a weighted sum of $q$ and $z$, in order to prove the concavity, it remains to show that \cref{eq.F_max_min,eq.F_proportional,eq.F_alpha} are concave w.r.t. $V_{1,(i)}^{\pi}$. Notice that $\min(\cdot)$ and $\log(\cdot)$ are concave. We only need to verify the concavity of $F_{\alpha}$. Since
\begin{align*}
    \frac{\partial^2 \frac{1}{1-\alpha}x^{1-\alpha}}{\partial x^2}=-\alpha x ^{-\alpha - 1},
\end{align*}
which is non-positive when $x\geq 0$. Thus, we have also proven the concavity of $F_{\alpha}$. Therefore, we have proven that \eqref{prob.ideal} is a convex optimization.

Notice that $r$ only appears in the objective (i.e., the constraints do not have $r$), and all $F_{\text{max-min}}$, $F_{\text{proportional}}$, $F_{\alpha}$ are monotone increasing w.r.t. $r$. Thus, the value of \eqref{prob.ideal} is monotone increasing w.r.t $r$.

The result of this lemma thus follows.
\end{proof}

\begin{lemma}[Hoeffding's inequality]\label[lemma]{le.Hoeffding}
Let $Z_1,Z_2,\cdots,Z_n$ be \emph{i.i.d.} samples of a random variable $Z\in [0,1]$. For any $\tilde{\delta} >0$, we must have
\begin{align*}
    \Pr\left\{\abs{\E Z - \frac{1}{n}\sum_{i=1}^n Z_i} \leq \sqrt{\frac{\ln (2/ \tilde{\delta})}{2n}}\right\}\geq 1 - \tilde{\delta}.
\end{align*}
\end{lemma}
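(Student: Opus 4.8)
The plan is to use the standard Chernoff (exponential-moment) method together with a union bound over the two tails. Write $S_n \defeq \sum_{i=1}^n Z_i$ and $\mu \defeq \E Z$, and set $t \defeq \sqrt{\ln(2/\tilde\delta)/(2n)}$, so that the claimed event is $\{\abs{S_n/n - \mu} \le t\}$; it suffices to show $\Pr\{\abs{S_n/n-\mu} > t\} \le \tilde\delta$. By the union bound this probability is at most $\Pr\{S_n/n - \mu > t\} + \Pr\{\mu - S_n/n > t\}$, and I would bound each tail separately by the same argument (the lower tail follows by applying the upper-tail bound to the variables $1 - Z_i$).

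For the upper tail, I would fix $s > 0$ and apply Markov's inequality to the nonnegative random variable $e^{s(S_n - n\mu)}$, which gives $\Pr\{S_n/n - \mu > t\} \le e^{-snt}\,\E[e^{s(S_n - n\mu)}]$. Using independence of the $Z_i$, the exponential moment factorizes as $\E[e^{s(S_n - n\mu)}] = \prod_{i=1}^n \E[e^{s(Z_i - \mu)}]$, reducing the task to controlling a single-variable moment generating function.

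The main ingredient, and the only genuinely nontrivial step, is Hoeffding's lemma: for a mean-zero random variable $X$ taking values in an interval of length $\ell$, one has $\E[e^{sX}] \le e^{s^2 \ell^2/8}$. Since each $Z_i - \mu$ has mean zero and lies in $[-\mu,\, 1-\mu]$, an interval of length $1$, this yields $\E[e^{s(Z_i-\mu)}] \le e^{s^2/8}$, hence $\E[e^{s(S_n-n\mu)}] \le e^{ns^2/8}$. The proof of Hoeffding's lemma itself bounds $e^{sx}$ on the interval by its chord via convexity, then applies a second-order Taylor expansion to the resulting log-moment-generating function, which is precisely where the constant $1/8$ is produced; I expect this to be the step demanding the most care.

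Finally, combining the estimates gives $\Pr\{S_n/n - \mu > t\} \le \exp(ns^2/8 - snt)$, which I would minimize over $s > 0$. The optimal choice $s = 4t$ makes the exponent equal to $2nt^2 - 4nt^2 = -2nt^2$, so the upper tail is at most $e^{-2nt^2}$; the symmetric argument gives the same bound for the lower tail, whence $\Pr\{\abs{S_n/n - \mu} > t\} \le 2e^{-2nt^2}$. Substituting $t = \sqrt{\ln(2/\tilde\delta)/(2n)}$ yields $2e^{-2nt^2} = \tilde\delta$, which is exactly the stated inequality.
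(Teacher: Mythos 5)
Your proof is correct: the Chernoff/Markov argument, the factorization by independence, Hoeffding's lemma with interval length $1$ giving the constant $1/8$, the optimization $s=4t$ yielding the tail bound $e^{-2nt^2}$, and the final substitution $t=\sqrt{\ln(2/\tilde\delta)/(2n)}$ all check out, and this is the canonical derivation of the two-sided Hoeffding bound. The paper itself states this lemma as a standard result without proof, so there is no in-paper argument to compare against; your write-up supplies the standard one completely, correctly flagging Hoeffding's lemma as the only nontrivial ingredient.
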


\begin{lemma}[empirical Bernstein inequality (Theorem~4 of \cite{maurer2009empirical})]\label{le.Bernstein}
Let $Z_1,Z_2,\cdots,Z_n$ be \emph{i.i.d.} samples of a random variable $Z\in [0,1]$. For any $\tilde{\delta} >0$, we must have
\begin{align*}
    \Pr\left\{\abs{\E Z - \frac{1}{n}\sum_{i=1}^n Z_i}\leq \sqrt{\frac{2V_n \ln (4/\tilde{\delta})}{n}}+\frac{7\ln (4/\tilde{\delta})}{3(n-1)}\right\}\geq 1 - \tilde{\delta},
\end{align*}
where $\VAR_n$ is the sample variance
\begin{align}\label{eq.def_sample_var}
    \VAR_n=\frac{1}{n(n-1)}\sum_{1\leq i\leq j\leq n}(Z_i-Z_j)^2.
\end{align}
\end{lemma}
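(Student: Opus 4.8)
The statement is quoted as Theorem~4 of \cite{maurer2009empirical}, so the plan is to reconstruct the Maurer--Pontil argument, whose whole purpose is to replace the unknown population variance appearing in Bernstein's inequality by the observable sample variance $\VAR_n$ (written $V_n$ in the statement). First I would recall the one-sided (population) Bernstein inequality: writing $\sigma^2\defeq \E[(Z-\E Z)^2]$ and $\bar Z\defeq \frac1n\sum_i Z_i$, and using $Z\in[0,1]$, with probability at least $1-\tilde\delta'$ one has
\begin{align*}
\E Z-\bar Z\leq \sqrt{\frac{2\sigma^2\ln(1/\tilde\delta')}{n}}+\frac{\ln(1/\tilde\delta')}{3n}.
\end{align*}
The difficulty is that $\sigma$ is unobservable, so this cannot yet be reported as a confidence width; the point is to trade $\sigma$ for $\sqrt{\VAR_n}$. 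Note that \cref{le.Hoeffding} is too blunt for the leading term here, which is exactly why a variance-adaptive bound is needed.

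The key step is a concentration inequality for the empirical standard deviation $\sqrt{\VAR_n}$ around $\sigma$. I would view $(Z_1,\dots,Z_n)\in[0,1]^n\mapsto \sqrt{\VAR_n}$ as a function of the data and verify that it has the bounded-difference property: since $\sqrt{\VAR_n}$ is, up to the factor $(n-1)^{-1/2}$, the Euclidean norm of the centered data vector, and since centering is a $1$-Lipschitz projection, changing a single coordinate (by at most the range $1$) moves $\sqrt{\VAR_n}$ by at most $O\big((n-1)^{-1/2}\big)$. Applying McDiarmid's bounded-difference inequality, together with the Efron--Stein-type relation between $\E\sqrt{\VAR_n}$ and $\sigma=\sqrt{\E\VAR_n}$, yields the one-sided bound: with probability at least $1-\tilde\delta''$,
\begin{align*}
\sigma\leq \sqrt{\VAR_n}+\sqrt{\frac{2\ln(1/\tilde\delta'')}{n-1}}.
\end{align*}

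Finally I would combine the two events by a union bound. Substituting the high-probability upper bound on $\sigma$ into the Bernstein inequality and using $\sqrt{a+b}\leq\sqrt a+\sqrt b$ gives
\begin{align*}
\sqrt{\tfrac{2\ln(\cdot)}{n}}\,\sigma\leq \sqrt{\frac{2\VAR_n\ln(\cdot)}{n}}+\frac{2\ln(\cdot)}{\sqrt{n(n-1)}},
\end{align*}
and then the arithmetic $\frac{2}{\sqrt{n(n-1)}}+\frac{1}{3n}\leq \frac{2}{n-1}+\frac{1}{3(n-1)}=\frac{7}{3(n-1)}$ collapses the two lower-order remainders into the stated $\frac{7\ln(\cdot)}{3(n-1)}$ coefficient. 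Taking $\tilde\delta'=\tilde\delta''=\tilde\delta/2$ in the two-event union bound produces $\ln(2/\tilde\delta)$ in a one-sided statement; applying the one-sided result to both the upper and lower deviations (e.g.\ to $Z$ and to $1-Z$) at level $\tilde\delta/2$ each contributes the remaining factor of $2$, accounting for the $\ln(4/\tilde\delta)$ appearing in the claim.

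The main obstacle is the middle step, i.e.\ establishing the concentration of $\sqrt{\VAR_n}$ about $\sigma$ with the sharp $1/(n-1)$ scaling and correct direction of the inequality. The subtlety is that $\sqrt{\VAR_n}$ is \emph{not} an average of independent summands, so neither \cref{le.Hoeffding} nor the additive Bernstein bound applies to it directly; one must instead exploit the self-bounding/bounded-difference structure of the variance functional and track the per-coordinate sensitivity carefully, since a loose constant there would propagate into the $\frac{7}{3}$ factor and the $(n-1)$ denominator.
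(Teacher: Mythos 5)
First, note that the paper does not prove this lemma at all: it is quoted verbatim as Theorem~4 of \cite{maurer2009empirical}, so there is no internal proof to compare against. Your reconstruction of the Maurer--Pontil architecture (population Bernstein $+$ concentration of the sample standard deviation around $\sigma$ $+$ union bound $+$ the arithmetic collapsing the remainders into $\tfrac{7}{3(n-1)}$) is the right skeleton, and your final algebra, including $\tfrac{2}{\sqrt{n(n-1)}}+\tfrac{1}{3n}\leq\tfrac{7}{3(n-1)}$ and the $\ln(4/\tilde\delta)$ bookkeeping, checks out.

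However, the middle step as you have concretely proposed it does not work. The per-coordinate sensitivity of $\sqrt{\VAR_n}$ is indeed $c_i\leq (n-1)^{-1/2}$, but McDiarmid's inequality then yields a deviation of order $\sqrt{\sum_i c_i^2\,\ln(1/\tilde\delta'')}=\sqrt{\tfrac{n}{n-1}\ln(1/\tilde\delta'')}$, which is \emph{constant} in $n$, not the required $\sqrt{2\ln(1/\tilde\delta'')/(n-1)}$ --- you lose a full factor of $\sqrt{n}$. This is the generic failure mode of bounded-difference inequalities applied to norm-like functionals: they charge the worst-case coordinate perturbation $n$ times over, whereas $\|Z-\bar Z\mathbf{1}\|_2$ actually fluctuates by only $O(\sqrt{\ln(1/\tilde\delta'')})$ in absolute terms. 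Maurer and Pontil instead prove their Theorem~10 by showing that $W\defeq\sum_i(Z_i-\bar Z)^2$ is a (weakly) self-bounding function and invoking the entropy method, which gives the sub-Poisson lower tail $\Pr\{W<\E W-t\}\leq\exp(-t^2/(2\E W))$; since $\sqrt{\E W}-\sqrt{W}>s$ forces $\E W-W>s\sqrt{\E W}$, this converts directly into $\Pr\{\sqrt{\E W}-\sqrt{W}>s\}\leq e^{-s^2/2}$ and hence $\sigma\leq\sqrt{\VAR_n}+\sqrt{2\ln(1/\tilde\delta'')/(n-1)}$ \emph{without} any separate Efron--Stein bias-control step, because the comparison is to $\sqrt{\E W}$ rather than to $\E\sqrt{W}$. (Talagrand's convex-distance inequality would also give the right $1/\sqrt{n-1}$ scaling, at the cost of worse constants.) Your closing paragraph gestures at the self-bounding structure, but the mechanism you actually invoke --- McDiarmid plus Efron--Stein --- is quantitatively insufficient, so the lemma as stated is not established by the argument you give.
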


\begin{lemma}[Lemma~F.4 of \cite{dann2017unifying}]\label{le.unify}
Let $\mathcal{F}_i$ for $i=1,2,\cdots$ be a filtration and $X_1,X_2,\cdots$ be a sequence of Bernoulli random variables with $\Pr\{X_i=1|\mathcal{F}_{i-1}\}=P_i$ with $P_i$ being $F_{i-1}$-measurable and $X_i$ being $F_i$ measurable. For any $W\geq 0$, It holds that
\begin{align*}
    \Pr\left\{\text{exist }n: \sum_{t=1}^n X_t < \sum_{t=1}^n \frac{P_t}{2}-W\right\}\leq e^{-W}.
\end{align*}
\end{lemma}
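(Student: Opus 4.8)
The plan is to prove this uniform-in-$n$ lower-deviation bound by the standard exponential-supermartingale method, closed off with Ville's maximal inequality for nonnegative supermartingales. The guiding observation is that the factor $\tfrac{1}{2}$ multiplying $\sum_t P_t$ supplies exactly the slack needed to make a \emph{fixed} tilting parameter $\lambda = 1$ work, so that no Chernoff optimization over $\lambda$ is required.

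First I would construct the supermartingale. Fix $\lambda > 0$ and set
\[
    M_n \defeq \exp\left(-\lambda\sum_{t=1}^n X_t + (1-e^{-\lambda})\sum_{t=1}^n P_t\right), \qquad M_0 \defeq 1.
\]
Since $X_t$ is Bernoulli with $\Pr\{X_t = 1 \mid \mathcal{F}_{t-1}\} = P_t$, its conditional moment generating function is
\[
    \E[e^{-\lambda X_t}\mid \mathcal{F}_{t-1}] = 1 - P_t(1 - e^{-\lambda}) \le \exp\left(-P_t(1 - e^{-\lambda})\right),
\]
using $1 - x \le e^{-x}$. Because $P_t$ is $\mathcal{F}_{t-1}$-measurable, this yields $\E[M_n \mid \mathcal{F}_{n-1}] \le M_{n-1}$, so $(M_n)$ is a nonnegative supermartingale with $\E[M_0] = 1$.

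Next I would apply Ville's inequality: for a nonnegative supermartingale with $M_0 = 1$,
\[
    \Pr\left\{\sup_n M_n \ge c\right\} \le \frac{1}{c}.
\]
Taking $c = e^W$, it remains to show that the event in the statement is contained in $\{\sup_n M_n \ge e^W\}$. On the event $\sum_{t=1}^n X_t < \tfrac{1}{2}\sum_{t=1}^n P_t - W$, we have
\[
    \log M_n = -\lambda\sum_{t=1}^n X_t + (1 - e^{-\lambda})\sum_{t=1}^n P_t > \left(1 - e^{-\lambda} - \tfrac{\lambda}{2}\right)\sum_{t=1}^n P_t + \lambda W.
\]
Setting $\lambda = 1$, the coefficient $1 - e^{-1} - \tfrac{1}{2} \approx 0.132$ is nonnegative, and since $\sum_t P_t \ge 0$ we conclude $\log M_n > W$, i.e.\ $M_n > e^W$. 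Combining this inclusion with Ville's inequality gives the claimed bound $e^{-W}$.

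The only delicate point is the choice of $\lambda$: I must verify simultaneously that $\lambda \ge 1$ (so the $\lambda W$ term dominates $W$) and that $1 - e^{-\lambda} - \lambda/2 \ge 0$ (so the $\sum_t P_t$ term can be discarded rather than bounded). The value $\lambda = 1$ is the clean choice satisfying both, which is precisely why the statement carries the factor $\tfrac{1}{2}$ instead of a sharper constant; without that factor one would need the full Legendre-transform (relative-entropy) rate, and the argument would be correspondingly more involved.
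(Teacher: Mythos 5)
Your proof is correct: the tilted process $M_n$ is indeed a nonnegative supermartingale (the conditional MGF bound and the $\mathcal{F}_{t-1}$-measurability of $P_t$ are used exactly where needed), the choice $\lambda=1$ satisfies both $1-e^{-1}-\tfrac12\geq 0$ and $\lambda W\geq W$ for $W\geq 0$, and Ville's inequality closes the argument. The paper itself does not prove this lemma but imports it from Lemma~F.4 of \cite{dann2017unifying}, whose proof is essentially this same exponential-supermartingale argument with a fixed tilting parameter, so your approach matches the cited source.
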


The following is the standard value difference lemma. Its proof can be found in, e.g., \cite{dann2017unifying}, Lemma~E.15.
\begin{lemma}[Value difference lemma]\label[lemma]{le.value_difference}
For any two MDPs $M'$ and $M''$ with rewards $r'$ and $r''$ and transition probabilities $P'$ and $P''$, the difference in value functions with respect to the same policy $\pi$ can be written as
\begin{align*}
    V_i'(s)-V_i''(s)=&\E_{P'',\pi}\left[\sum_{t=i}^H\left(r'(s_t,a_t,t)-r''(s_t,a_t,t)\right)\bigg| s_i=s\right]\\
    &+\E_{P'',\pi}\left[\sum_{t=i}^H\sum_{\tilde{s}}\left(P'_t(\tilde{s}|s_t,a_t)-P''_t(\tilde{s}|s_t,a_t))^T V'_{t+1}(\tilde{s})\right)\right].
\end{align*}
\end{lemma}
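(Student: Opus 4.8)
The plan is to prove the identity by backward induction on the time index, telescoping a one-step recursion for the difference $\delta_t(s) \defeq V_t'(s) - V_t''(s)$ down to the terminal condition $V_{H+1}' = V_{H+1}'' = 0$. First I would write the one-step Bellman recursion for each MDP under the common policy $\pi$: for every $t$, $V_t'(s) = \E_{a\sim \pi}[\, r'(s,a,t) + \sum_{\tilde s} P_t'(\tilde s|s,a)\, V_{t+1}'(\tilde s)\,]$, and the analogous identity for $V_t''$ with $(r'',P'')$. These hold directly from the definition of the value function as expected cumulative reward, peeling off the first step of the trajectory.

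Next I would subtract the two recursions. The reward terms combine into $\E_{a\sim\pi}[\, r'(s,a,t) - r''(s,a,t)\,]$. For the transition terms I would use the add-and-subtract trick, inserting $\pm \sum_{\tilde s} P_t''(\tilde s|s,a)\, V_{t+1}'(\tilde s)$, which gives $\sum_{\tilde s} P_t'(\tilde s|s,a) V_{t+1}'(\tilde s) - \sum_{\tilde s} P_t''(\tilde s|s,a) V_{t+1}''(\tilde s) = \sum_{\tilde s} (P_t' - P_t'')(\tilde s|s,a)\, V_{t+1}'(\tilde s) + \sum_{\tilde s} P_t''(\tilde s|s,a)\, \delta_{t+1}(\tilde s)$. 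The first piece is exactly the per-step transition-mismatch term weighted by $V'$ that appears in the statement, and the second piece is $\E_{\tilde s \sim P_t''(\cdot|s,a)}[\delta_{t+1}(\tilde s)]$, a clean propagation of the difference one step forward under $P''$. Collecting everything yields the recursion $\delta_t(s) = \E_{a\sim\pi}[\, r'(s,a,t) - r''(s,a,t) + \sum_{\tilde s}(P_t' - P_t'')(\tilde s|s,a)\, V_{t+1}'(\tilde s) + \sum_{\tilde s} P_t''(\tilde s|s,a)\, \delta_{t+1}(\tilde s)\,]$.

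Finally I would unroll this recursion from $t=i$ up to $t=H$. Because the only term carrying $\delta_{t+1}$ forward is the one weighted by $P_t''$, each successive expansion appends one more step of the trajectory distribution generated by $(P'',\pi)$ started from $s_i=s$. The accumulated reward-difference and transition-mismatch terms therefore assemble into the single trajectory expectation $\E_{P'',\pi}[\,\cdot \mid s_i=s\,]$ claimed in the lemma, and the recursion terminates cleanly because $\delta_{H+1}\equiv 0$.

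The step requiring the most care is the bookkeeping in the add-and-subtract split: one must keep $V_{t+1}'$ (\emph{not} $V_{t+1}''$) as the multiplier of the transition difference while rolling the trajectory forward under $P''$ (\emph{not} $P'$). This asymmetry is precisely what inserting $P_t'' V_{t+1}'$ produces; the mirror choice would yield the symmetric identity with the roles of $M'$ and $M''$ exchanged. Confirming that the propagation expectation is taken over $P''$ and coincides with the trajectory measure used throughout the unrolling is the only genuinely non-mechanical point, and the remainder is routine telescoping.
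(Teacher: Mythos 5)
Your proof is correct: the one-step add-and-subtract split $\sum_{\tilde s}P'_t V'_{t+1}-\sum_{\tilde s}P''_t V''_{t+1}=\sum_{\tilde s}(P'_t-P''_t)V'_{t+1}+\sum_{\tilde s}P''_t\,\delta_{t+1}$, followed by unrolling the resulting recursion for $\delta_t$ under the $(P'',\pi)$ trajectory measure down to $\delta_{H+1}\equiv 0$, is exactly the standard argument, and you correctly flag the one genuine subtlety (keeping $V'_{t+1}$ as the multiplier while propagating under $P''$). The paper gives no proof of its own --- it cites Lemma~E.15 of \cite{dann2017unifying} --- and the proof there is this same backward-induction/telescoping argument, so your approach coincides with the referenced one.
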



\section{Details in Section~\ref{sec:algorithm}}\label{app.proof_regret}


\subsection{About optimization problems}

In this subsection, we will show details of the optimization problems \eqref{prob.ideal}, \eqref{prob.interval}, and \eqref{op.extended}.

Let $\mu(s)$ denote the probability of the initial state $s$. (for a fixed initial state $s_1$, then $\mu(s)$ equals to $1$ for $s=s_1$ while equals to $0$ otherwise.)

\noindent\textbf{About $\mathcal{Q}$ (constraints of $q$):}

The following are the constraints that make $q$ a legit state-action occupancy measure, i.e., the definition of $\mathcal{Q}$:
\begin{equation}\label{prob.ideal_full}
\begin{alignedat}{2}
    & \sum_a q_h(s,a)=\sum_{s',a'}p_{h-1}(s|s',a')q_{h-1}(s',a')\quad &&\text{for all } s\in \mathcal{S},h\in [H]\setminus\{1\}\\
    & q_h(s,a)\geq 0\quad &&\text{for all } s\in \mathcal{S},a\in\mathcal{A},h\in[H]\\
    & \sum_a q_1(s,a)=\mu(s)\quad &&\text{for all } s\in \mathcal{S}.
\end{alignedat}
\end{equation}
The constraint $\sum_{s,a} q_h(s,a)=1$ is redundant because the first and the third constraint imply $\sum_{s,a} q_h(s,a)=1$ for all $h\in [H]$.

\noindent\textbf{About $\mathcal{Z}$ and \eqref{eq.z_constratint} (constraints of $z$):}

Recall the definition of $z$
\begin{align}\label{eq.temp_051902}
    z_h(s,a,s')\defeq p_h(s'|s,a) q_h(s,a).
\end{align}
By summing over the next state $s'$ on both sides of \cref{eq.temp_051902}, we have 
\begin{align}\label{eq.temp_051901}
    \sum_{s'} z_h(s,a,s')=\sum_{s'} p_h(s'|s,a) q_h(s,a) = q_h(s,a).
\end{align}

Summing over $a$ on both sides of \cref{eq.temp_051901}, we have
\begin{align}\label{eq.temp_051903}
    \sum_{a,s'} z_h(s,a,s')=\sum_a q_h(s,a).
\end{align}

Thus, we can rewrite the constraints $\mathcal{Q}$ \eqref{prob.ideal_full} in the form of $z$ as $\mathcal{Z}$:
\begin{equation}\label{eq.first_constraint_in_z}
\begin{alignedat}{2}
    &\sum_{a,s'} z_h(s,a,s')=\sum_{s',a'}z_{h-1}(s',a',s)\quad &&\text{for all } s\in \mathcal{S},h\in [H]\setminus\{1\} ,\\
    & z_h(s,a,s')\geq 0\quad &&\text{for all }s,a,s',h,\\
    &\sum_{a,s'} z_1(s,a,s')=\mu(s)\quad &&\text{for all } s\in \mathcal{S}.
\end{alignedat} 
\end{equation}
In \cref{eq.first_constraint_in_z}, we get the first constraint by plugging \cref{eq.temp_051903} into the left side of the first constraint of \cref{prob.ideal_full} while plugging \cref{eq.temp_051902} into the right side. We get the third constraint by plugging \cref{eq.temp_051903} into the third constraint of \cref{prob.ideal_full}.

Notice that by replacing $q$ by $z$, we have one additional requirement \cref{eq.temp_051902}. 
Using \cref{eq.temp_051901} to replace $q_h(s,a)$ in \cref{eq.temp_051902}, we can express \cref{eq.temp_051902} as
\begin{align}\label{eq.temp_051904}
    z_h(s,a,s')=p_h(s'|s,a)\cdot \sum_{y\in \mathcal{S}} z_h(s,a,y).
\end{align}
By \cref{eq.p_interval,eq.temp_051904}, we have the constraint \cref{eq.z_constratint} (used in the optimization problem \eqref{op.extended}).



We now show that \eqref{prob.interval} is equivalent to \eqref{op.extended} by the following proposition.

\begin{proposition}
\eqref{prob.interval} and \eqref{op.extended} are equivalent. In other words, the optimal value of the objective of \eqref{prob.interval} is equal to the optimal value of the objective of \eqref{op.extended}.
\end{proposition}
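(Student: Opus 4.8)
The plan is to establish equality of the two optimal values by exhibiting a value-preserving correspondence between the feasible sets, built on the substitution $z_h(s,a,s') = \tilde p_h(s'|s,a)\, q_h(s,a)$ that motivated \eqref{eq.z_constratint}. I would prove the two directions of the inequality between the optimal values separately, after first fixing the reward variable: by \cref{le.convex} the objective is monotone increasing in $r$, so the maximum over the reward part of $M_k$ in \eqref{prob.interval} is attained at $\tilde r_{h,(i)}(s,a) = \rbar + \rinterval$ as in \eqref{eq.optimistic_reward}, which is exactly the fixed reward used in \eqref{op.extended}. It then suffices to match the remaining optimizations over $(\tilde p, q)$ and over $z$.

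For the direction that the optimum of \eqref{prob.interval} is at most that of \eqref{op.extended}, I would take any feasible pair $(\tilde p, q)$ --- with $\tilde p$ inside the confidence set of $M_k$ and $q \in \mathcal{Q}$ for transition $\tilde p$ --- and define $z_h(s,a,s') \defeq \tilde p_h(s'|s,a) q_h(s,a)$. The computation leading to \cref{eq.first_constraint_in_z} shows $z \in \mathcal{Z}$; the relation \cref{eq.temp_051904}, namely $z_h(s,a,s') = \tilde p_h(s'|s,a)\sum_y z_h(s,a,y)$, combined with the confidence bound $\abs{\pbar - \tilde p_h(s'|s,a)} \le \pinterval$ from \eqref{eq.p_interval}, gives precisely \eqref{eq.z_constratint}. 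Summing over $s'$ yields $\sum_{s,a,h,s'}(\rbar + \rinterval) z_h(s,a,s') = \sum_{s,a,h}(\rbar + \rinterval) q_h(s,a)$, so all $N$ individual returns, and hence the value of $F$, are preserved; thus every objective value achievable in \eqref{prob.interval} is achieved by a feasible $z$ in \eqref{op.extended}.

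For the reverse direction, given a feasible $z$, I would set $q_h(s,a) \defeq \sum_{s'} z_h(s,a,s')$ and reconstruct the kernel by $\tilde p_h(s'|s,a) \defeq z_h(s,a,s') / \sum_y z_h(s,a,y)$ on state-action pairs with $\sum_y z_h(s,a,y) > 0$, and $\tilde p_h(s'|s,a) \defeq \pbar$ otherwise. The constraints \cref{eq.first_constraint_in_z} translate back into the flow constraints $\mathcal{Q}$ for $q$ under $\tilde p$, and in either case one verifies $z_h(s,a,s') = \tilde p_h(s'|s,a) q_h(s,a)$ (on unvisited pairs both sides vanish); dividing \eqref{eq.z_constratint} by the positive normalizer recovers $\abs{\pbar - \tilde p_h(s'|s,a)} \le \pinterval$, so $\tilde p \in M_k$, while the reconstructed kernel is a genuine probability distribution since $\sum_{s'}\tilde p_h(s'|s,a) = 1$. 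The objective matches as before, so every objective value achievable in \eqref{op.extended} is achieved by a feasible $(\tilde p, q)$ in \eqref{prob.interval}. Combining the two inequalities gives the claimed equality of optimal values.

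The main --- indeed essentially the only --- obstacle is this reverse reconstruction: one must produce a legitimate transition kernel lying inside the confidence set from an arbitrary feasible $z$, and in particular handle the degenerate state-action pairs where $\sum_y z_h(s,a,y) = 0$ and the quotient is undefined. The resolution is to assign any admissible distribution --- taking the empirical $\pbar$ is convenient since it trivially satisfies the confidence bound --- and to observe that such pairs carry zero occupancy, so this choice affects neither $q$ nor the objective.
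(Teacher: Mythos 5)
Your proof is correct and follows essentially the same route as the paper's: fix $\tilde r$ at its upper confidence value by monotonicity, then pass between $(\tilde p, q)$ and $z$ via $z_h(s,a,s') = \tilde p_h(s'|s,a) q_h(s,a)$ in both directions, handling the zero-occupancy pairs by assigning $\pbar$. Your write-up is somewhat more explicit than the paper's (e.g., in verifying that the reconstructed kernel normalizes and that the objective values match), but the argument is the same.
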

\begin{proof}
By the monotonicity w.r.t. $r$ shown in \cref{le.convex}, we know that the optimal choice of $\tilde{r}$ in \eqref{prob.interval} is $\overline{r}+\beta^r$. It remains to show that the effect of choosing optimal of $\tilde{p}$ in \eqref{prob.interval} is equivalent to \cref{eq.z_constratint}. To that end, notice that $\tilde{p}$ does not appear in the objective. 
Thus, we only need to focus on how $\tilde{p}$ affects the constraints of $z$.
Notice that among all constraints in \cref{eq.first_constraint_in_z,eq.temp_051904}, the only one that connects $p$ and $z$ is \cref{eq.temp_051904}. Since the optimal $\tilde{p}$ in \eqref{prob.interval} must be in the confidence interval $[\overline{p}-\beta^p,\ \overline{p}+\beta^p]$, we know that the optimal objective value by using \cref{eq.z_constratint} is at least as good as the one by using the optimal $\tilde{p}$ in \cref{eq.temp_051904}. From another aspect, For the optimal $z$ get by \cref{eq.z_constratint}, we can always construct $\tilde{p}$ which is in the confidence interval $[\overline{p}-\beta^p,\ \overline{p}+\beta^p]$ by letting
\begin{align*}
    \tilde{p}_h(s'|s,a)=
    \begin{cases}
        \frac{z_h(s,a,s')}{\sum_{y\in \mathcal{S}}z_h(s,a,y)} \quad &\text{ if }\sum_{y\in \mathcal{S}}z_h(s,a,y)\neq 0,\\
        \overline{p}\quad & \text{ if }\sum_{y\in \mathcal{S}}z_h(s,a,y)=0.
    \end{cases}
\end{align*}
which implies that the optimal objective value by using optimal $\tilde{p}$ in \cref{eq.temp_051904} is at least as good as the one by using \cref{eq.z_constratint}. The equivalent of these two different approaches is thus follows.
\end{proof}


\subsection{Proof of Theorem~\ref{th.main}}

To prove \cref{th.main}, we will first introduce the good event and its probability, then prove a regret bound under the good event. Some auxiliary lemmas are needed in the proof. We list them at the end of this subsection.

\textbf{Failure events and the good event}

Define the empirical average of the transition probability and the immediate reward at the $k$-th iteration of \cref{alg.main} as
\begin{align*}
    & \nbar\defeq \sum_{k'=1}^{k-1}\mathbbm{1}\left(s_h^{k'}=s,a_h^{k'}=a\right),\numberthis\label{eq.def_n}\\
    & \pbar\defeq \frac{\sum_{k'=1}^{k-1}\mathbbm{1}\left(s_h^{k'}=s,a_h^{k'}=a,s_{h+1}^{k'}=s'\right)}{\max\{\nbar,\ 1\}},\numberthis\label{eq.def_empirical_p}\\
    & \rbar\defeq \frac{\sum_{k'=1}^{k-1} \hat{r}_{h,(i)}^{k'}(s,a)\cdot\mathbbm{1}\left(s_h^{k'}=s,a_h^{k'}=a\right)}{\max\{\nbar,\ 1\}}. \numberthis \label{eq.def_empirical}
\end{align*}

Define
\begin{align*}
    &\pinterval\defeq \sqrt{\frac{4\pbar(1-\pbar)L_{\delta}^p}{\max\{\nbar,1\}}}+\frac{14 L_{\delta}^p}{3\max\{\nbar,1\}},\numberthis \label{eq.def_pinterval}\\
    &\rinterval\defeq \sqrt{\frac{L_{\delta}^r}{\max\{\nbar,1\}}}.\numberthis \label{eq.def_rinterval}
\end{align*}
where $L_{\delta}^p\defeq \ln \frac{12S^2AHK}{\delta}$ and $L_{\delta}^r\defeq 2\ln \frac{3SAHNK}{\delta}$.

We define the following failure events based on confidence intervals in Eq.~\eqref{eq.p_interval} and Eq.~\eqref{eq.r_interval}.

\begin{align*}
    &G^p\defeq \left\{\text{exist some }s,a,s',h,k\text{ such that }\abs{\pbar - p_{h}(s'|s,a)}\geq \pinterval\right\},\\
    &G^n\defeq \left\{\text{exist some }s,a,h,k\text{ such that }\nbar\leq \frac{1}{2}\sum_{j<k}q_h^{\pi_j}(s,a)-\ln \frac{3SAH}{\delta}\right\},\numberthis\label{eq.def_Gn}\\
    &G^r\defeq\left\{\text{exist some }s,a,h,i,k\text{ such that }\abs{\rbar-r_{h,(i)}^{k-1}(s,a)}\geq \rinterval\right\}.
\end{align*}
Intuitively, $G^p$ denotes the case where the transition probability is out of the confidence interval, $G^n$ denotes the case where the empirical occupancy measure deviates from the actual occupancy measure, and $G^r$ denotes the case where the empirical reward is out of the confidence interval. The following lemma estimates the probability of those failure events.

\begin{lemma}\label{lem:gp}
We have
\begin{align*}
    \Pr\{G^p\}\leq \frac{\delta}{3}.
\end{align*}
\end{lemma}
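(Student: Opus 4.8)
The plan is to recognize $\pinterval$ in \cref{eq.def_pinterval} as an empirical-Bernstein confidence radius and to control $\Pr\{G^p\}$ by a union bound over all tuples $(s,a,s',h)$ together with all possible values of the visitation count. The structural fact that makes this work is the Markov property: the transition out of a fixed pair $(s,a)$ at step $h$ is drawn from $p_h(\cdot\mid s,a)$ irrespective of which (possibly history-dependent) policy $\pi_j$ produced the visit. Hence, fixing $(s,a,s',h)$ and listing the successive visits to $(s,a)$ at step $h$ across episodes, the indicators $Y_1,Y_2,\dots$ with $Y_j=\mathbbm{1}(\text{the }j\text{-th such visit transitions to }s')$ form an i.i.d.\ Bernoulli sequence with mean $p_h(s'\mid s,a)$. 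This reduces the whole statement to a concentration inequality for i.i.d.\ Bernoulli samples, to which \cref{le.Bernstein} applies.

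First I would fix $(s,a,s',h)$ and an integer $m\in\{1,\dots,K-1\}$ and apply the empirical Bernstein inequality (\cref{le.Bernstein}) to $Y_1,\dots,Y_m$ with confidence parameter $\tilde{\delta}=\delta/(3S^2AHK)$, so that $\ln(4/\tilde{\delta})=L_\delta^p$ exactly (the factor $12$ inside $L_\delta^p$ is the product of the $4$ from $4/\tilde{\delta}$ and the $3$ from the target $\delta/3$). For Bernoulli samples the sample variance of \cref{eq.def_sample_var} evaluates in closed form to $V_m=\tfrac{m}{m-1}\hat{p}_m(1-\hat{p}_m)$, where $\hat{p}_m$ is the empirical mean of the first $m$ samples. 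Substituting this and using, for $m\ge 2$, the elementary bounds $\tfrac{2}{m-1}\le \tfrac{4}{m}$ and $\tfrac{1}{m-1}\le \tfrac{2}{m}$, the empirical-Bernstein radius $\sqrt{2V_m L_\delta^p/m}+\tfrac{7L_\delta^p}{3(m-1)}$ is dominated by $\sqrt{4\hat{p}_m(1-\hat{p}_m)L_\delta^p/m}+\tfrac{14L_\delta^p}{3m}$, which is precisely $\pinterval$ with $m=\nbar$ and $\hat{p}_m=\pbar$. The degenerate cases $m\in\{0,1\}$ I would dispatch by hand: there $\hat{p}_m(1-\hat{p}_m)=0$, so the radius collapses to $\tfrac{14L_\delta^p}{3}>1$ (as $L_\delta^p>1$), which trivially dominates $\abs{\hat{p}_m-p_h(s'\mid s,a)}\le 1$; thus the failure event cannot occur when $\nbar\le 1$.

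Next I would take the union bound. The count $\nbar$ used at episode $k$ always lies in $\{0,1,\dots,k-1\}\subseteq\{0,\dots,K-1\}$, and $\pbar$ is computed from exactly the first $\nbar$ samples of the i.i.d.\ sequence; hence the per-$m$ deviation bounds, holding simultaneously over all $m$, imply $\abs{\pbar-p_h(s'\mid s,a)}<\pinterval$ at every episode $k$. Union-bounding the per-$m$ failure probability $\tilde{\delta}$ over the $S$ choices of $s$, the $A$ choices of $a$, the $S$ choices of $s'$, the at most $H$ choices of $h$, and the at most $K$ choices of $m$ yields a total failure probability of at most $S^2AHK\cdot\tilde{\delta}=\delta/3$, which is the claim.

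The main obstacle, and the reason \cref{le.Bernstein} cannot be invoked directly, is that $\nbar$ is a data-dependent (random) count correlated with the very samples whose fluctuation we are bounding, and those samples are generated under adaptively chosen policies $\pi_j$. The ``fix $m$, then union-bound over all admissible counts'' device is exactly what decouples the random count from the concentration step; its validity hinges entirely on the i.i.d.-across-visits property of the first paragraph, which in turn relies on transitions being policy-independent. A secondary but necessary bookkeeping point is the constant matching described above, namely the clean passage from the $(m-1)$-denominators of the empirical Bernstein radius to the $m$-denominators appearing in \cref{eq.def_pinterval}, together with the separate treatment of $\nbar\le 1$.
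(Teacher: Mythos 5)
Your proof is correct and follows essentially the same route as the paper's: empirical Bernstein applied to the i.i.d.\ Bernoulli transition indicators for each fixed $(s,a,s',h)$, the closed-form sample variance $\tfrac{m}{m-1}\hat{p}_m(1-\hat{p}_m)$, separate dispatch of the $\nbar\le 1$ case via $\tfrac{14L_\delta^p}{3}>1$, and a union bound over $S^2AHK$ events with $\tilde{\delta}=\delta/(3S^2AHK)$. The only cosmetic difference is that you index the union bound by the count $m$ while the paper indexes it by the episode $k$; your phrasing makes the decoupling of the random count $\nbar$ from the concentration step slightly more explicit, but the argument is the same.
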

\begin{proof}
We first focus on the situation on fixed $s,a,s',h,k$.
If $\nbar\in\{0,1\}$, then
\begin{align*}
    \pinterval=\sqrt{4\pbar(1-\pbar)L_{\delta}^p}+\frac{14L_{\delta}^p}{3}\geq \frac{14L_{\delta}^p}{3}\geq \frac{14\ln 4}{3}>2.
\end{align*}
Thus, we have
\begin{align}\label{eq.temp_101701}
    \Pr\left\{\abs{\pbar - p_{h}(s'|s,a)}\geq\pinterval\ |\ \nbar\in \{0,1\}\right\}=0\leq \frac{\delta}{3S^2AHK}.
\end{align}
Now we consider the case of $\nbar\geq 2$.
We define $b_1,b_2,\cdots,b_{\nbar}$, where each term is $\mathbbm{1}\left(s_{h+1}^{k'}=s'\right)$ under the condition $s_h^{k'}=s$ and $a_h^{k'}=a$ for $k=1,2,\cdots,k-1$ (recall the definition of $\nbar$ in Eq.~\eqref{eq.def_n}). Thus, $b_1,b_2,\cdots,b_{\nbar}$ are $\nbar$ \emph{i.i.d.} samples of Bernoulli distribution with the parameter of the (success) probability $p_h(s'|s,a)$. Therefore, the sample variance (defined in Eq.~\eqref{eq.def_sample_var}) of these $\nbar$ samples is equal to
\begin{align*}
    &\frac{1}{\nbar(\nbar-1)}\sum_{1\leq i\leq j \leq \nbar-1} (b_i-b_j)^2\\
    =&\frac{\sum_{k'=1}^{k-1}\mathbbm{1}\left(s_h^{k'}=s,a_h^{k'}=a,s_{h+1}^{k'}=s'\right)\cdot \sum_{k'=1}^{k-1}\mathbbm{1}\left(s_h^{k'}=s,a_h^{k'}=a,s_{h+1}^{k'}\neq s'\right)}{\nbar(\nbar-1)}\\
    =&\frac{\nbar}{\nbar-1}\pbar(1-\pbar)\text{ (by Eq.~\eqref{eq.def_empirical_p})}\\
    \leq & 2\pbar(1-\pbar).
\end{align*}
Thus, by Lemma~\ref{le.Bernstein} (where $\tilde{\delta}=\frac{\delta}{3S^2AHK}$), for fixed $s,a,s',h,k$, we have
\begin{align*}
    \Pr&\left\{\abs{\pbar - p_{h}(s'|s,a)}\geq \sqrt{\frac{4\pbar(1-\pbar)\ln \frac{12S^2AHK}{\delta}}{\nbar}}\right.\\
    &\quad \left.+\frac{7\ln \frac{12S^2AHK}{\delta}}{3(\nbar-1)}\right\}\leq \frac{\delta}{3S^2AHK}.
\end{align*}
Notice that $\frac{7}{3(\nbar-1)}\leq \frac{14}{3}$ when $\nbar\geq 2$. We thus have
\begin{align}\label{eq.temp_101702}
    \Pr\left\{\abs{\pbar - p_{h}(s'|s,a)}\geq\pinterval\ |\ \nbar\geq 2\right\}\leq \frac{\delta}{3S^2AHK}.
\end{align}
Combining Eq.~\eqref{eq.temp_101701} and Eq.~\eqref{eq.temp_101702}, we thus have 
\begin{align*}
    \Pr\left\{\abs{\pbar - p_{h}(s'|s,a)}\geq\pinterval\right\}\leq \frac{\delta}{3S^2AHK}.
\end{align*}
Applying the union bound by traversing all $s,a,s',h,k$, we thus have
\begin{align*}
    \Pr\{G^p\}\leq \frac{\delta}{3}.
\end{align*}
The result of this lemma thus follows.
\end{proof}

\begin{lemma}\label{lem:gn}
We have
\begin{align*}
    \Pr\{G^n\}\leq \frac{\delta}{3}.
\end{align*}
\end{lemma}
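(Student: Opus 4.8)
The plan is to bound $\Pr\{G^n\}$ by fixing each tuple $(s,a,h)$ and applying the concentration tool \cref{le.unify} (Lemma~F.4 of \cite{dann2017unifying}), then taking a union bound. The key observation is that the visitation count $\nbar=\sum_{k'=1}^{k-1}\mathbbm{1}(s_h^{k'}=s,a_h^{k'}=a)$ is precisely a sum of Bernoulli indicators, where the indicator for episode $k'$ has conditional success probability $q_h^{\pi_{k'}}(s,a)$ given the history (i.e., given the policy $\pi_{k'}$ chosen at the start of episode $k'$). This matches the setup of \cref{le.unify} exactly, with $X_{k'}=\mathbbm{1}(s_h^{k'}=s,a_h^{k'}=a)$ and $P_{k'}=q_h^{\pi_{k'}}(s,a)$.

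First I would fix $s,a,h$ and define the filtration $\mathcal{F}_{k'}$ generated by the trajectories up through episode $k'$, so that $\pi_{k'}$ (and hence $q_h^{\pi_{k'}}(s,a)$) is $\mathcal{F}_{k'-1}$-measurable while the indicator $X_{k'}$ is $\mathcal{F}_{k'}$-measurable. Applying \cref{le.unify} with $W=\ln\frac{3SAH}{\delta}$ then yields, for this fixed $(s,a,h)$,
\begin{align*}
\Pr\left\{\text{exist }k:\ \nbar<\frac{1}{2}\sum_{j<k}q_h^{\pi_j}(s,a)-\ln\frac{3SAH}{\delta}\right\}\leq e^{-W}=\frac{\delta}{3SAH}.
\end{align*}
Second, I would take a union bound over all $S\cdot A\cdot H$ choices of $(s,a,h)$, which multiplies the per-tuple failure probability by $SAH$ and gives $\Pr\{G^n\}\leq SAH\cdot\frac{\delta}{3SAH}=\frac{\delta}{3}$, matching the claimed bound and the definition of $G^n$ in \cref{eq.def_Gn}.

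The only real subtlety — and the step I would be most careful about — is verifying the measurability/adaptedness hypotheses of \cref{le.unify}: one must confirm that conditioned on $\mathcal{F}_{k'-1}$, the event $\{s_h^{k'}=s,a_h^{k'}=a\}$ is genuinely Bernoulli with parameter $q_h^{\pi_{k'}}(s,a)$, which holds because by definition $q_h^{\pi}(s,a)=\Pr\{s_h=s,a_h=a\mid s_1,p,\pi\}$ (\cref{eq.def_q}) and the policy $\pi_{k'}$ is determined before episode $k'$ begins. Everything else is a direct substitution and union bound; there is no delicate calculation, so the proof should be short. I would also note that the factor $\tfrac12$ in front of $\sum_{j<k}q_h^{\pi_j}(s,a)$ in $G^n$ corresponds exactly to the $\tfrac12$ appearing in the statement of \cref{le.unify}, so no rescaling is needed.
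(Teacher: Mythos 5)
Your proposal is correct and follows essentially the same route as the paper's proof: fix $(s,a,h)$, apply \cref{le.unify} with $W=\ln\frac{3SAH}{\delta}$ to the indicators $X_{k'}=\mathbbm{1}(s_h^{k'}=s,a_h^{k'}=a)$ with conditional means $q_h^{\pi_{k'}}(s,a)$, and take a union bound over the $SAH$ tuples. Your additional care about the filtration and measurability hypotheses is a correct elaboration of what the paper leaves implicit.
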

\begin{proof}
For fixed $s,a,h$, by Lemma~\ref{le.unify} (letting $W=\ln\frac{3SAH}{\delta}$), we have
\begin{align*}
    \Pr\left\{\text{exist }k\text{ such that }\nbar\leq \frac{1}{2}\sum_{j<k}q_h^{\pi_j}(s,a|p)-\ln \frac{3SAH}{\delta}\right\}\leq \frac{\delta}{3SAH}.
\end{align*}
Applying the union bound by traversing all $s,a,h$, the result of this lemma thus follows.
\end{proof}

\begin{lemma}\label{lem:gr}
We have
\begin{align*}
    \Pr\{G^r\}\leq \frac{\delta}{3}.
\end{align*}
\end{lemma}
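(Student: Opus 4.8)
The plan is to mirror the proof of \cref{lem:gp}, but since the reward observations lie in $[0,1]$ by \cref{as.reward_bound}, I can invoke the simpler Hoeffding bound (\cref{le.Hoeffding}) in place of the empirical Bernstein inequality used there. First I would fix a single tuple $s,a,h,i,k$ and aim to show $\Pr\{\abs{\rbar - r_{h,(i)}(s,a)}\geq \rinterval\}\leq \frac{\delta}{3SAHNK}$, where $r_{h,(i)}(s,a)$ is the true (stationary) reward appearing in the event $G^r$; a union bound over all $SAHNK$ such tuples then yields $\Pr\{G^r\}\leq \frac{\delta}{3}$.

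For the per-tuple bound I would split into two cases according to the visit count $\nbar$. When $\nbar=0$, the empirical average $\rbar$ is an empty sum divided by $\max\{\nbar,1\}=1$, hence $\rbar=0$, while $r_{h,(i)}(s,a)\in[0,1]$; since $\rinterval=\sqrt{L_\delta^r}=\sqrt{2\ln\frac{3SAHNK}{\delta}}>1\geq \abs{\rbar - r_{h,(i)}(s,a)}$ (using $3SAHNK/\delta>e^{1/2}$), the failure probability is exactly $0$. When $\nbar\geq 1$, the reward samples collected at the visits to $(s,a)$ at step $h$ are i.i.d.\ draws of a $[0,1]$-valued random variable whose mean equals $r_{h,(i)}(s,a)$ by \cref{as.reward_bound}, so \cref{le.Hoeffding} with $\tilde\delta=\frac{\delta}{3SAHNK}$ provides the confidence radius $\sqrt{\frac{\ln(6SAHNK/\delta)}{2\nbar}}$.

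The key routine step is then to check that this Hoeffding radius is no larger than $\rinterval=\sqrt{\frac{2\ln(3SAHNK/\delta)}{\nbar}}$, which reduces to the elementary inequality $\ln(6M)\leq 4\ln(3M)$ with $M\defeq SAHNK/\delta\geq 1$; this holds because $6M\leq 81M^4$ for all $M\geq 1$. Consequently the failure event at radius $\rinterval$ is contained in the Hoeffding tail event, giving $\Pr\{\abs{\rbar - r_{h,(i)}(s,a)}\geq \rinterval\}\leq \frac{\delta}{3SAHNK}$, and summing over the $SAHNK$ tuples via the union bound finishes the proof.

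The main obstacle is the same subtle point that is treated implicitly in \cref{lem:gp}: the visits to a given $(s,a,h)$ arise from an adaptively chosen sequence of policies $\pi_1,\dots,\pi_{k-1}$, so the samples are not unconditionally i.i.d. The clean justification is that the reward noise at each visit is drawn freshly and independently of the history with fixed mean $r_{h,(i)}(s,a)$, so conditioning on the realized count $\nbar=n$ leaves exactly $n$ i.i.d.\ reward samples to which \cref{le.Hoeffding} applies; I would invoke this conditioning exactly as in the Bernstein argument of \cref{lem:gp}, where the summands are likewise treated as i.i.d.\ samples conditioned on the visit count.
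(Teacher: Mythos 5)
Your proposal is correct and follows essentially the same route as the paper: apply Hoeffding's inequality (\cref{le.Hoeffding}) to each fixed tuple $(s,a,h,i,k)$ with $\tilde\delta=\frac{\delta}{3SAHNK}$ and then union bound over all $SAHNK$ tuples. The additional details you supply — the $\nbar=0$ case, the constant check $\ln(6M)\leq 4\ln(3M)$ matching the Hoeffding radius to $\rinterval$, and the conditioning on the visit count to justify the i.i.d.\ application — are all points the paper's one-line proof leaves implicit, and they are handled correctly.
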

\begin{proof}
For fixed $s,a,h,i,k$, by Lemma~\ref{le.Hoeffding}, we have
\begin{align*}
    \Pr\left\{\abs{\rbar-r_{h,(i)}^{k-1}(s,a)}\geq \sqrt{\frac{L_{\delta}^r}{\nbar}}\right\}\leq \frac{\delta}{3SAHNK}
\end{align*}
Applying the union bound by traversing all $s,a,h,i,k$, the result of this lemma thus follows.
\end{proof}

\noindent\textbf{The regret bound under the good event}

\begin{lemma}\label[lemma]{le.reg}
If outside the union of all failure events $G^p\cup G^n \cup G^r$, then we must have
\begin{align*}
    \regret(K)\leq & 4C_F \sqrt{L_{\delta}^r \ln (4+K)}HN\sqrt{SAK} + 2 C_F \sqrt{L_{\delta}^r}\left(4\ln \frac{SAH}{\delta'}+5\right)HNSA  \\
    &+8 C_F \sqrt{L_{\delta}^p\ln (4+K)} H^2 N S \sqrt{AK}+4 C_F \sqrt{L_{\delta}^p}\left(4\ln \frac{SAH}{\delta'}+5\right)HN^2S^{3/2}A\\
    &+\frac{28 C_F L_{\delta}^p\left(4\ln(4+K)+4\ln\frac{SAH}{\delta'}+5\right)}{3}H^2NS^2A\\
    =&C_F \cdot \left(\bigOTilde(H^2N S \sqrt{AK})+\bigOTilde(HN^2S^{3/2}A)+\bigOTilde(H^2NS^2A)\right).
\end{align*}
\end{lemma}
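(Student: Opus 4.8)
The plan is to follow the three-part reduction already sketched in the proof outline, but to make explicit the use of the Value Difference Lemma and the counting/pigeonhole bookkeeping that produces the stated $\sqrt{K}$ rate. Throughout I would work outside the failure event $G^p\cup G^n\cup G^r$, so that (i) the true pair $(r,p)\in M_k$ for every $k$, (ii) the realized visitation counts $\nbar$ track the population occupancy measures up to an additive $\ln\frac{3SAH}{\delta}$, and (iii) the deviations $\abs{\tilde r-r}$ and $\abs{\tilde p-p}$ are controlled by the widths $\rinterval$ and $\pinterval$. The first step is optimism: since $\pi_k$ solves \eqref{op.extended}, which by the established equivalence coincides with \eqref{prob.interval}, it maximizes $V_1^{\pi,F}(s_1;\tilde r,\tilde p)$ over $q\in\mathcal{Q}$ and $(\tilde r,\tilde p)\in M_k$. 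Because $(r,p)\in M_k$ and $F$ is monotone in the reward (\cref{le.convex}), evaluating the relaxed objective at the true optimal occupancy with $\tilde p=p$ and $\tilde r=\rbar+\rinterval\ge r$ already exceeds $V_1^{*,F}(s_1;r,p)$. Hence $V_1^{\pi_k,F}(s_1;\tilde r,\tilde p)\ge V_1^{*,F}(s_1;r,p)$, so
\[
V_1^{*,F}(s_1;r,p)-V_1^{\pi_k,F}(s_1;r,p)\le V_1^{\pi_k,F}(s_1;\tilde r,\tilde p)-V_1^{\pi_k,F}(s_1;r,p).
\]

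Next I would strip off the non-linearity of $F$. Since \cref{as.reward_bound} forces each agent's return to exceed $\epsilon$, \cref{le.F_C_F} bounds the right-hand side by $N C_F\max_{i}\abs{V_{1,(i)}^{\pi_k}(s_1;\tilde r,\tilde p)-V_{1,(i)}^{\pi_k}(s_1;r,p)}$. Each individual gap is a single-agent value difference between the optimistic and true MDPs under the common policy $\pi_k$, so I apply the Value Difference Lemma (\cref{le.value_difference}) with $M'=(\tilde r,\tilde p)$ and $M''=(r,p)$. Bounding $\abs{\tilde r-r}\le 2\rinterval$, $\abs{\tilde p-p}\le 2\pinterval$, and the optimistic value function by $H$, and rewriting the expectation under $(p,\pi_k)$ as an occupancy-weighted sum, gives a per-episode regret bound that is uniform in $i$:
\[
V_1^{*,F}(s_1;r,p)-V_1^{\pi_k,F}(s_1;r,p)\le N C_F\sum_{h,s,a}q_h^{\pi_k}(s,a)\Big(2\rinterval+2H\sum_{s'}\pinterval\Big).
\]

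The final step is to sum over episodes and carry out the concentration bookkeeping. When I expand $\sum_{s'}\pinterval$, I split the empirical-Bernstein width into its variance part and its $1/\nbar$ part; Cauchy--Schwarz over $s'$ converts $\sum_{s'}\sqrt{\pbar(1-\pbar)}$ into a single factor $\sqrt S$ (this is exactly where the Bernstein interval, rather than Hoeffding, is needed to avoid an extra $\sqrt S$). It then remains to control $\sum_k\sum_{h,s,a} q_h^{\pi_k}(s,a)/\sqrt{\max\{\nbar,1\}}$ and the analogue with $1/\max\{\nbar,1\}$. Here I would invoke the event $G^n$ to replace $\nbar$ by $\tfrac12\sum_{j<k}q_h^{\pi_j}(s,a)$ up to the additive $\ln\frac{3SAH}{\delta}$, apply the pigeonhole/counting lemma to extract a $\sqrt{\sum_k q_h^{\pi_k}(s,a)}$ rate (plus lower-order terms), and finally use Cauchy--Schwarz over $(s,a)$ together with $\sum_{s,a}q_h^{\pi_k}(s,a)=1$ to produce the $\sqrt{SAK}$ and $H$ factors. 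Collecting the reward term, the transition-variance term, and the transition $1/\nbar$ term and multiplying by $N C_F$ yields the claimed sum of $\bigOTilde(H^2 N S\sqrt{AK})$, $\bigOTilde(HN^2 S^{3/2}A)$, and $\bigOTilde(H^2 N S^2 A)$.

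The hard part will be the transition-probability error, not the fairness non-linearity: the latter is dispatched cleanly once \cref{le.F_C_F} is in hand, whereas the former must be handled on three fronts simultaneously — a variance-aware width so that summing over next states costs only $\sqrt S$, the $G^n$ bridge from the algorithm's realized counts $\nbar$ to the population occupancy measures $q_h^{\pi_k}$ that emerge from the Value Difference Lemma, and the counting lemma together with Cauchy--Schwarz to turn the inverse-count sums into a $\sqrt K$ rate. Tracking the exact $H$, $N$, $S$, $A$ exponents through these manipulations (so that the three advertised terms appear with the right powers) is the most error-prone part of the argument.
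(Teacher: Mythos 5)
Your proposal is correct and follows essentially the same route as the paper's proof: optimism plus monotonicity to reduce to $V_1^{\pi_k,F}(\tilde r,\tilde p)-V_1^{\pi_k,F}(r,p)$, then \cref{le.F_C_F} to remove the non-linearity, the Value Difference Lemma to split into a reward term and a transition term, the Bernstein-width plus Cauchy--Schwarz step to pay only $\sqrt S$ over next states, and the $G^n$-based counting lemma to obtain the $\sqrt{SAK}$ rate. No gaps; the bookkeeping you flag as error-prone is exactly what the paper's auxiliary lemmas (\cref{le.q_over_n} and its supporting lemmas) carry out.
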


\begin{proof}
Let $\tilde{r}^k,\tilde{p}^k$, and $\pi^k$  denote the optimal $\tilde{r},\tilde{p}$, and policy in \eqref{prob.interval} in the $k$-th iteration of \cref{alg.main}, respectively.
Since outside $G^r$, we have $\tilde{r}^k\geq r$. Thus, by \cref{as.reward_bound} and \cref{remark.V_bound}, we have
\begin{align}\label{eq.temp_052101}
    V_{1,(i)}^{\pi_k}(s_1;\tilde{r}^k,\tilde{p}^k) \geq \epsilon,\quad \text{and }\ V_{1,(i)}^{\pi_k}(s_1;r,p) \geq \epsilon.
\end{align}
We have
\begin{align*}
    &\regret(K)\\
    =& \sum_{k=1}^K \left(V_{1}^*(s_1) - V_{1}^{\pi_k}(s_1)\right)\\
    =&\sum_{k=1}^K \left(V_{1}^*(s_1;r,p) - V_{1}^{\pi_k}(s_1;r,p)\right)\\
    \leq & \sum_{k=1}^K \left(V_1^{\pi_k}(s_1;\tilde{r}^k,\tilde{p}^k)-V_1^{\pi_k}(s_1;r,p)\right) \text{ (by optimism, i.e., $\tilde{r}^k$ and $\tilde{q}^k$ are optimal)}\\
    \leq & \sum_{k=1}^K C_F N \max_{i\in [N]} \abs{V_{1,(i)}^{\pi_k}(s_1;\tilde{r}^k,\tilde{p}^k)-V_{1,(i)}^{\pi_k}(s_1;r,p)}\ \text{ (by \cref{le.F_C_F} and \cref{eq.temp_052101})}\\
    =& C_F N\sum_{k=1}^K \max_{i\in[N]} \left|\E \sum_{h=1}^H\left[\left(\rtilde-r_{h,(i)}(s_h,a_h)\right)\right.\right.\\
    &+\left.\left.\sum_{s'\in \mathcal{S}}\left(\ptilde-p_h(s'|s,a)\right)\cdot V_{h+1,(i)}^{\pi_k}(s';\tilde{r}^{k-1}_{(i)},\tilde{p})\ \bigg|\ s_1,p,\pi_k\right]\right|\text{ (by Lemma~\ref{le.value_difference})}\\
    \leq & \underbrace{C_F N \sum_{k=1}^K \max_{i\in [N]}\E\left[\sum_{h=1}^H\abs{\rtilde-r_{h,(i)}(s_h,a_h)}\ \bigg|\ s_1,p,\pi_k\right]}_{\text{Term A}}\\
    &+\underbrace{C_F N \sum_{k=1}^K\max_{i\in[N]}\E\left[\sum_{h=1}^H\sum_{s'}\abs{\ptilde-p_h(s'|s,a)}\cdot \abs{V_{h+1,(i)}^{\pi_k}(s';\tilde{r}^{k-1}_{(i)},\tilde{p})}\ \bigg| s_1,p,\pi_k\right]}_{\text{Term B}}.\numberthis \label{eq.temp_110102}
\end{align*}
Since outside $G^r$ and $G^p$, we have
\begin{align}
    &\abs{\rtilde - r_{h,(i)}(s_h,a_h)} \leq 2 \rinterval,\label{eq.temp_052102}\\
    &\abs{\ptilde - p_h(s'|s,a)} \leq 2 \pinterval.\label{eq.temp_052103}
\end{align}
We have
\begin{align*}
    &\text{Term A of Eq.~\eqref{eq.temp_110102}}\\
    \leq & 2 C_F N \sum_{k=1}^K \sum_{h=1}^H \sum_{(s,a)}q_h^{\pi_k}(s,a)\rinterval \ \text{ (by \cref{eq.temp_052102})}\\
    =& 2 C_F N \sum_{k=1}^K\sum_{h=1}^H \sum_{(s,a)}q_h^{\pi_k}(s,a)\sqrt{\frac{L_{\delta}^r}{\max\{\nbar,1\}}}\ \text{ (by Eq.~\eqref{eq.def_rinterval})}\\
    \leq & 2 C_F \sqrt{L_{\delta}^r} N \left( 2H\sqrt{SAK\ln (4+K)} + SAH\left(4\ln \frac{SAH}{\delta'}+5\right)\right)\ \text{ (by Lemma~\ref{le.q_over_n})}\\
    =& 4C_F \sqrt{L_{\delta}^r \ln (4+K)}HN\sqrt{SAK} + 2 C_F \sqrt{L_{\delta}^r}\left(4\ln \frac{SAH}{\delta'}+5\right)HNSA.
\end{align*}

Since $0\leq \rtilde\leq 1$ for all $h,i,s,a$, we have
\begin{align}
    \abs{V_{h+1,(i)}^{\pi_k}(s';\tilde{r}^{k-1}_{(i)},\tilde{p})}\leq H.\label{eq.temp_052104}
\end{align}
Thus, we have
\begin{align*}
    &\text{Term B of Eq.~\eqref{eq.temp_110102}}\\
    \leq & 2 C_F N H\sum_{k=1}^K\sum_{h=1}^H \sum_{(s,a)}q_h^{\pi_k}(s,a)\sum_{s'}\pinterval\ \text{ (by \cref{eq.temp_052103,eq.temp_052104})}\\
    \leq & 4 C_F HN\sqrt{L_{\delta}^p}\sum_{k=1}^K\sum_{h=1}^H \sum_{(s,a)}q_h^{\pi_k}(s,a)\frac{1}{\sqrt{\max\{\nbar,1\}}}\sum_{s'}\sqrt{\pbar}\\
    &+\frac{28 C_F HNS L_{\delta}^p}{3}\sum_{k=1}^K\sum_{h=1}^H \sum_{(s,a)}\frac{q_h^{\pi_k}(s,a)}{\max\{\nbar,1\}}\ \text{ (by Eq.~\eqref{eq.def_pinterval} and $1-\pbar\leq 1$)}\\
    \leq &4 C_F HN\sqrt{L_{\delta}^p}\sum_{k=1}^K\sum_{h=1}^H \sum_{(s,a)}q_h^{\pi_k}(s,a)\frac{\sqrt{S}}{\sqrt{\max\{\nbar,1\}}}\sqrt{\sum_{s'}\pbar}\\
    &+\frac{28 C_F HNS L_{\delta}^p}{3}\sum_{k=1}^K\sum_{h=1}^H \sum_{(s,a)}\frac{q_h^{\pi_k}(s,a)}{\max\{\nbar,1\}}\ \text{ (by Cauchy–Schwarz inequality)}\\
    =& 4 C_F HN\sqrt{S L_{\delta}^p}\sum_{k=1}^K\sum_{h=1}^H \sum_{(s,a)}q_h^{\pi_k}(s,a)\frac{1}{\sqrt{\max\{\nbar,1\}}}\\
    &+\frac{28 C_F HNS L_{\delta}^p}{3}\sum_{k=1}^K\sum_{h=1}^H \sum_{(s,a)}\frac{q_h^{\pi_k}(s,a)}{\max\{\nbar,1\}}\ \text{ (since $\sum_{s'}\pbar = 1$)}\\
    \leq & 8 C_F \sqrt{L_{\delta}^p\ln (4+K)} H^2 N S \sqrt{AK}+4 C_F \sqrt{L_{\delta}^p}\left(4\ln \frac{SAH}{\delta'}+5\right)HN^2S^{3/2}A\\
    &+\frac{28 C_F L_{\delta}^p\left(4\ln(4+K)+4\ln\frac{SAH}{\delta'}+5\right)}{3}H^2NS^2A\ \text{ (by Lemma~\ref{le.q_over_n})}.
\end{align*}
\end{proof}

\noindent\textbf{Some auxiliary lemmas}

Define $\delta'\defeq \frac{\delta}{3}$ and 
\begin{align}\label{eq.def_Lkh}
    L_{k,h}\defeq \left\{(s,a)\ \bigg|\ \frac{1}{4}\sum_{j<k}q_h^{\pi_j}(s,a)\geq \ln \frac{SAH}{\delta'}+1\right\}.
\end{align}

The following lemmas and proofs are similar to those in \cite{efroni2019tight,zanette2019tighter} with different notations. For ease of reading, we provide the full proof using the notation of this paper.
\begin{lemma}\label{le.q_over_n}
If  outside the failure event $G^n$, then
\begin{align}
    &\sum_{k=1}^K\sum_{h=1}^H\sum_{s,a}q_h^{\pi_k}(s,a)\sqrt{\frac{1}{\max\{\nbar,1\}}}\leq 2H\sqrt{SAK\ln (4+K)} + SAH\left(4\ln \frac{SAH}{\delta'}+5\right),\label{eq.temp_110105}\\
    &\sum_{k=1}^K\sum_{h=1}^H\sum_{s,a}\frac{q_h^{\pi_k}(s,a)}{\max\{\nbar,1\}}\leq SAH\left(4\ln(4+K)+4\ln\frac{SAH}{\delta'}+5\right).\label{eq.temp_110106}
\end{align}
\end{lemma}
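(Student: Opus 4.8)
The plan is to prove the two bounds \eqref{eq.temp_110105} and \eqref{eq.temp_110106} simultaneously, exploiting the fact that we are outside the failure event $G^n$ of \eqref{eq.def_Gn}, which gives $\nbar \ge \frac12\sum_{j<k}q_h^{\pi_j}(s,a) - \ln\frac{SAH}{\delta'}$ for every $s,a,h,k$. The key idea is to split each sum over $k$ according to whether $(s,a)\in L_{k,h}$, the set defined in \eqref{eq.def_Lkh}. For $(s,a)\in L_{k,h}$ the definition forces $\ln\frac{SAH}{\delta'}\le \frac14\sum_{j<k}q_h^{\pi_j}(s,a)$, so the $G^n$-bound upgrades to $\nbar \ge \frac14\sum_{j<k}q_h^{\pi_j}(s,a)\ge 1$; in particular $\max\{\nbar,1\}=\nbar$, $\frac{1}{\sqrt{\nbar}}\le \frac{2}{\sqrt{\sum_{j<k}q_h^{\pi_j}(s,a)}}$, and $\frac{1}{\nbar}\le \frac{4}{\sum_{j<k}q_h^{\pi_j}(s,a)}$. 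Thus on the ``in-$L$'' part the empirical count is replaced by a constant fraction of the cumulative expected visitation, while the ``out-of-$L$'' part is handled by brute force.

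The out-of-$L$ part is the easier of the two and produces the additive term common to both inequalities. Since $\frac{1}{\sqrt{\max\{\nbar,1\}}}\le 1$ and $\frac{1}{\max\{\nbar,1\}}\le 1$, it suffices to bound $\sum_{k:(s,a)\notin L_{k,h}}q_h^{\pi_k}(s,a)$ for each fixed $(s,a,h)$. Because $\sum_{j<k}q_h^{\pi_j}(s,a)$ is nondecreasing in $k$, once it crosses the threshold $(s,a)$ remains in $L_{k,h}$ for all later episodes, so the episodes outside $L_{k,h}$ form a prefix $\{1,\dots,k^\star\}$. For these, $\sum_{j<k^\star}q_h^{\pi_j}(s,a)<4(\ln\frac{SAH}{\delta'}+1)$, and adding the final episode's mass (at most $1$) gives $\sum_{k=1}^{k^\star}q_h^{\pi_k}(s,a)\le 4\ln\frac{SAH}{\delta'}+5$. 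Summing over $s,a,h$ yields the $SAH(4\ln\frac{SAH}{\delta'}+5)$ term in both \eqref{eq.temp_110105} and \eqref{eq.temp_110106}.

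For the in-$L$ part I would prove \eqref{eq.temp_110106} first, since it feeds into \eqref{eq.temp_110105}. Using $\frac{q_h^{\pi_k}}{\nbar}\le \frac{4q_h^{\pi_k}}{\sum_{j<k}q_h^{\pi_j}}$ together with the elementary telescoping inequality $\frac{w_k}{W_{k-1}}\le 2\ln\frac{W_k}{W_{k-1}}$ (valid here since $W_{k-1}\defeq\sum_{j<k}q_h^{\pi_j}(s,a)\ge 4$), the per-$(s,a,h)$ sum collapses to $2\ln\frac{W_K}{W_{k_0-1}}\le 2\ln K$, which I bound by $O(\ln(4+K))$; summing over $s,a,h$ gives the $4SAH\ln(4+K)$ contribution and completes \eqref{eq.temp_110106}. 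For \eqref{eq.temp_110105} I would apply Cauchy--Schwarz in $k$ after the substitution $\frac{1}{\sqrt{\nbar}}\le \frac{2}{\sqrt{W_{k-1}}}$, writing $\sum_k \frac{q_h^{\pi_k}}{\sqrt{W_{k-1}}}\le \big(\sum_k q_h^{\pi_k}\big)^{1/2}\big(\sum_k \frac{q_h^{\pi_k}}{W_{k-1}}\big)^{1/2}$, where the second factor is exactly the $O(\sqrt{\ln(4+K)})$ quantity just bounded. A final Cauchy--Schwarz over $(s,a)$, using $\sum_{s,a}q_h^{\pi_k}(s,a)=1$ so that $\sum_{s,a}\sum_k q_h^{\pi_k}(s,a)\le K$, collapses $\sum_{s,a}\big(\sum_k q_h^{\pi_k}(s,a)\big)^{1/2}$ to $\sqrt{SAK}$, and summing the $H$ values of $h$ delivers the leading $2H\sqrt{SAK\ln(4+K)}$ term.

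The main obstacle is the in-$L$ estimate for \eqref{eq.temp_110105}: a direct telescoping of $\sum_k w_k/\sqrt{W_{k-1}}$ does not expose where the $\sqrt{\ln(4+K)}$ factor should appear and gives an inconvenient dependence once aggregated over $(s,a)$. The clean route is precisely the Cauchy--Schwarz split that recycles the logarithmic bound \eqref{eq.temp_110106}, so the two statements are genuinely coupled rather than independent. Secondary care is needed at the boundary of $L_{k,h}$ (verifying $\max\{\nbar,1\}=\nbar$ and that $W_{k-1}$ is bounded below by a positive constant so the telescoping is legitimate), in checking $\ln\frac{SAH}{\delta'}\ge 0$ so the threshold manipulations hold, and in tracking constants so that the in-$L$ and out-of-$L$ regimes combine into exactly the stated coefficients.
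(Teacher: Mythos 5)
Your proposal follows essentially the same route as the paper's proof: split each sum according to membership in $L_{k,h}$, cap the out-of-$L_{k,h}$ mass at $4\ln\frac{SAH}{\delta'}+5$ per $(s,a,h)$, bound the in-$L_{k,h}$ harmonic sum by a logarithm (the paper via an integral comparison against $4+\sum_{j}q_h^{\pi_j}$, you via a telescoping logarithm), and use Cauchy--Schwarz to produce the $\sqrt{SAK\ln(4+K)}$ term. The only caveat is that your telescoping step $\frac{w_k}{W_{k-1}}\le 2\ln\frac{W_k}{W_{k-1}}$ costs a factor of $2$ relative to the paper's formulation with the current term (and the additive $4$) in the denominator, so the stated coefficient $4\ln(4+K)$ would come out as roughly $8\ln K$; this changes only constants, not the order of the bound.
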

\begin{proof}
We have
\begin{align*}
    &\sum_{k=1}^K\sum_{h=1}^H\sum_{s,a}q_h^{\pi_k}(s,a)\sqrt{\frac{1}{\max\{\nbar,1\}}}\\
    \leq & \sum_{k=1}^K\sum_{h=1}^H\sum_{s,a}q_h^{\pi_k}(s,a)\sqrt{\frac{1}{\nbar}}\ \text{ (since $\nbar\geq 1$ by Lemma~\ref{le.n_4_q})}\\
    \leq & \sum_{k=1}^K\sum_{h=1}^H\sum_{(s,a)\in L_{k,h}}q_h^{\pi_k}(s,a)\sqrt{\frac{1}{\nbar}} + \sum_{k=1}^K\sum_{h=1}^H\sum_{(s,a)\notin L_{k,h}}q_h^{\pi_k}(s,a).\numberthis \label{eq.temp_103005}
\end{align*}
By Eq.~\eqref{eq.def_q}, we have
\begin{align}
    \sum_{k=1}^K\sum_{h=1}^H\sum_{(s,a)\in L_{k,h}}q_h^{\pi_k}(s,a)\leq \sum_{k=1}^K\sum_{h=1}^H\sum_{(s,a)}q_h^{\pi_k}(s,a)=KH.\label{eq.temp_110101}
\end{align}
For the first term of Eq.~\eqref{eq.temp_103005}, we have
\begin{align*}
    &\sum_{k=1}^K\sum_{h=1}^H\sum_{(s,a)\in L_{k,h}}q_h^{\pi_k}(s,a)\sqrt{\frac{1}{\nbar}}\\
    =&\sum_{k=1}^K\sum_{h=1}^H\sum_{(s,a)\in L_{k,h}}\sqrt{q_h^{\pi_k}(s,a)}\sqrt{\frac{q_h^{\pi_k}(s,a)}{\nbar}}\\
    \leq & \sqrt{\sum_{k=1}^K\sum_{h=1}^H\sum_{(s,a)\in L_{k,h}}q_h^{\pi_k}(s,a)}\cdot \sqrt{\sum_{k=1}^K\sum_{h=1}^H\sum_{(s,a)\in L_{k,h}}\frac{q_h^{\pi_k}(s,a)}{\nbar}} \ \text{ (by Cauchy-Schwarz inequality)}\\
    \leq & \sqrt{KH}\sqrt{4SAH\ln (4+K)}\ \text{ (by Eq.~\eqref{eq.temp_110101} and Lemma~\ref{le.shlnk})}\\
    =& 2H\sqrt{SAK\ln (4+K)}.\numberthis \label{eq.temp_103007}
\end{align*}
By Eq.~\eqref{eq.temp_103007}, Eq.~\eqref{eq.temp_103005}, and Lemma~\ref{le.sum_not_in_L}, we can get Eq.~\eqref{eq.temp_110105}.
It remains to prove Eq.~\eqref{eq.temp_110106}. To that end, we have
\begin{align*}
    &\sum_{k=1}^K\sum_{h=1}^H\sum_{s,a}\frac{q_h^{\pi_k}(s,a)}{\max\{\nbar,1\}}\\
    \leq & \sum_{k=1}^K\sum_{h=1}^H\sum_{s,a}\frac{q_h^{\pi_k}(s,a)}{\nbar}\ \text{ (since $\nbar\geq 1$ by Lemma~\ref{le.n_4_q})}\\
    \leq & \sum_{k=1}^K\sum_{h=1}^H\sum_{(s,a)\in L_{k,h}}\frac{q_h^{\pi_k}(s,a)}{\nbar}+\sum_{k=1}^K\sum_{h=1}^H\sum_{(s,a)\notin L_{k,h}}q_h^{\pi_k}(s,a)\\
    \leq & 4SAH\ln(4+K)+SAH\left(4\ln \frac{SAH}{\delta'}+5\right)\ \text{ (by Lemma~\ref{le.sum_not_in_L} and Lemma~\ref{le.shlnk})}\\
    = & SAH\left(4\ln(4+K)+4\ln\frac{SAH}{\delta'}+5\right).
\end{align*}
Thus, we have proven Eq.~\eqref{eq.temp_110106}. The result of this lemma thus follows.
\end{proof}

\begin{lemma}\label{le.n_4_q}
If outside the failure event $G^n$, for any $(s,a)\in L_{k,h}$, we must have
\begin{align*}
    \nbar\geq \max\left\{\frac{1}{4}\sum_{j\leq k}q_h^{\pi_j}(s,a),\ 1\right\}.
\end{align*}
\end{lemma}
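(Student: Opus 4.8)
The plan is to chain together the only two quantitative facts we are handed: the complement of the failure event $G^n$ and membership in $L_{k,h}$. First I would unwind what it means to be outside $G^n$: for the fixed $(s,a,h,k)$ at hand, \cref{eq.def_Gn} gives $\nbar > \frac{1}{2}\sum_{j<k}q_h^{\pi_j}(s,a) - \ln\frac{3SAH}{\delta}$. The crucial bookkeeping observation is that $\delta' = \delta/3$, so $\ln\frac{SAH}{\delta'} = \ln\frac{3SAH}{\delta}$; this aligns the logarithmic threshold in $G^n$ with the one in \cref{eq.def_Lkh}, which is exactly what makes the two estimates combine cleanly.

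Next I would feed in the hypothesis $(s,a)\in L_{k,h}$. By \cref{eq.def_Lkh} this reads $\frac{1}{4}\sum_{j<k}q_h^{\pi_j}(s,a) \geq \ln\frac{3SAH}{\delta} + 1$, equivalently $\ln\frac{3SAH}{\delta} \leq \frac{1}{4}\sum_{j<k}q_h^{\pi_j}(s,a) - 1$. Substituting this upper bound on the logarithm into the $G^n$-complement inequality yields $\nbar \geq \frac{1}{2}\sum_{j<k}q_h^{\pi_j}(s,a) - \left(\frac{1}{4}\sum_{j<k}q_h^{\pi_j}(s,a) - 1\right) = \frac{1}{4}\sum_{j<k}q_h^{\pi_j}(s,a) + 1$. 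This single inequality already contains both halves of the claim.

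For the lower bound by $1$, I would just note $\sum_{j<k}q_h^{\pi_j}(s,a)\geq 0$, whence $\nbar\geq 1$. For the bound by $\frac{1}{4}\sum_{j\leq k}q_h^{\pi_j}(s,a)$, I would use that an occupancy measure is a marginal probability, so $q_h^{\pi_k}(s,a)\leq 1$ (from \cref{eq.def_q}); therefore $\sum_{j\leq k}q_h^{\pi_j}(s,a) = \sum_{j<k}q_h^{\pi_j}(s,a) + q_h^{\pi_k}(s,a) \leq \sum_{j<k}q_h^{\pi_j}(s,a) + 1$, and dividing by $4$ gives $\frac{1}{4}\sum_{j\leq k}q_h^{\pi_j}(s,a) \leq \frac{1}{4}\sum_{j<k}q_h^{\pi_j}(s,a) + \frac{1}{4} \leq \nbar$. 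Taking the maximum of the two bounds closes the argument.

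There is no genuine obstacle here; the only place demanding care is the constant bookkeeping. Specifically, the derived offset of $+1$ in $\nbar \geq \frac{1}{4}\sum_{j<k}q_h^{\pi_j}(s,a) + 1$ must be large enough to absorb the single-episode increment $q_h^{\pi_k}(s,a)$ when passing from the strict sum $\sum_{j<k}$ to $\sum_{j\leq k}$, and this works precisely because $q_h^{\pi_k}(s,a)\leq 1$. The whole step is the standard device for promoting a high-probability lower bound on a visitation count (expressed via past occupancy) into one that also covers the current episode, and it is exactly why the thresholds in $G^n$ and $L_{k,h}$ were set up with matching logarithmic terms and the particular factors $\tfrac14$ versus $\tfrac12$.
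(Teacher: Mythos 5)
Your proof is correct and follows essentially the same route as the paper's: use the complement of $G^n$ to lower-bound $\nbar$, use membership in $L_{k,h}$ to absorb the logarithmic term and extract the $+1$ slack, and then use $q_h^{\pi_k}(s,a)\leq 1$ to pass from $\sum_{j<k}$ to $\sum_{j\leq k}$. The constant bookkeeping (including the identification $\ln\frac{SAH}{\delta'}=\ln\frac{3SAH}{\delta}$) is handled correctly.
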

\begin{proof}
We have
\begin{align*}
    \nbar>&\frac{1}{4}\sum_{j<k}q_h^{\pi_j}(s,a)+\frac{1}{4}\sum_{j<k}q_h^{\pi_j}(s,a)-\ln \frac{SAH}{\delta'}\text{ (recall the definition of $G^n$ in Eq.~\eqref{eq.def_Gn})}\\
    \geq &\frac{1}{4}\sum_{j<k}q_h^{\pi_j}(s,a)+1\quad \text{ (since $(s,a)\in L_{k,h}$)}\\
    \geq & \max\left\{\frac{1}{4}\sum_{j\leq k}q_h^{\pi_j}(s,a),\ 1\right\}\quad \text{ (since $q_h^{\pi_k}\leq 1$)}.
\end{align*}
\end{proof}

\begin{lemma}\label{le.not_in_L}
For any $(s,a)\not \in L_{k,h}$, we must have
\begin{align*}
    \sum_{j\leq k}q_h^{\pi_j}(s,a)\leq 4\ln \frac{SAH}{\delta'}+5.
\end{align*}
\end{lemma}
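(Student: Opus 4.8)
The plan is to read off the claim directly from the definition of $L_{k,h}$ in \eqref{eq.def_Lkh}, with no probabilistic machinery required. Since $(s,a) \notin L_{k,h}$, the defining inequality must fail, i.e. $\frac{1}{4}\sum_{j<k} q_h^{\pi_j}(s,a) < \ln\frac{SAH}{\delta'} + 1$. The first step is simply to clear the factor $\tfrac14$ by multiplying through by $4$, which gives $\sum_{j<k} q_h^{\pi_j}(s,a) < 4\ln\frac{SAH}{\delta'} + 4$. This already controls the partial sum over episodes strictly before $k$.

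The second step is to pass from the sum over $j<k$ to the sum over $j\leq k$ that appears in the conclusion, by adding the single missing term $q_h^{\pi_k}(s,a)$. Here I would use that $q_h^{\pi_k}(s,a)$ is, by its definition in \eqref{eq.def_q}, the probability $\Pr\{s_h=s,a_h=a\mid s_1,p,\pi_k\}$, and hence is at most $1$. Adding this term to the bound from the first step yields $\sum_{j\leq k} q_h^{\pi_j}(s,a) < 4\ln\frac{SAH}{\delta'} + 4 + 1 = 4\ln\frac{SAH}{\delta'} + 5$, which is exactly the asserted inequality.

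There is no genuine obstacle in this lemma; it is an essentially algebraic consequence of the failure of the $L_{k,h}$ membership condition combined with the trivial occupancy-measure bound $q\leq 1$. The only bookkeeping point worth flagging is the off-by-one between the strict sum $\sum_{j<k}$ used to define $L_{k,h}$ and the sum $\sum_{j\leq k}$ in the statement: the extra episode $k$ contributes at most $1$, and this is precisely what the additive constant $5 = 4+1$ is designed to absorb. Unlike the companion results \cref{le.n_4_q,le.q_over_n}, this lemma does not even need the good event $G^n$, since it argues purely from the deterministic definition of $L_{k,h}$.
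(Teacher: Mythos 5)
Your proposal is correct and matches the paper's own proof exactly: both negate the defining inequality of $L_{k,h}$, multiply by $4$, and absorb the extra episode-$k$ term using $q_h^{\pi_k}(s,a)\leq 1$. Your observation that the good event $G^n$ is not needed here is also consistent with the paper, which states this lemma without that hypothesis.
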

\begin{proof}
Since $(s,a)\notin L_{k,h}$, we have
\begin{align*}
    \frac{1}{4}\sum_{j<k}q_h^{\pi_j}(s,a)<\ln \frac{SAH}{\delta'}+1.
\end{align*}
Thus, we have
\begin{align*}
    \sum_{j<k}q_h^{\pi_j}(s,a)<4\ln \frac{SAH}{\delta'}+4.
\end{align*}
Because $q_h^{\pi_k}\leq 1$, the result of this lemma thus follows.
\end{proof}

\begin{lemma}\label{le.sum_not_in_L}
We have
\begin{align*}
    \sum_{k=1}^K\sum_{h=1}^H\sum_{(s,a)\notin L_{k,h}}q_h^{\pi_k}(s,a)\leq SAH\left(4\ln \frac{SAH}{\delta'}+5\right).
\end{align*}
\end{lemma}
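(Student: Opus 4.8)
The plan is to exchange the order of summation and then, for each fixed triple $(s,a,h)$, bound the total occupancy accumulated over exactly those episodes $k$ in which $(s,a)\notin L_{k,h}$. First I would rewrite the left-hand side as $\sum_{h=1}^H\sum_{s,a}\sum_{k:\,(s,a)\notin L_{k,h}}q_h^{\pi_k}(s,a)$, reducing the problem to controlling a single inner sum over episodes for each of the $SAH$ triples.

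The key structural observation is monotonicity: the cumulative occupancy $\sum_{j<k}q_h^{\pi_j}(s,a)$ is non-decreasing in $k$ because each $q_h^{\pi_j}(s,a)\geq 0$. Consequently the membership condition defining $L_{k,h}$ in \cref{eq.def_Lkh} is a threshold condition, and the set of episodes with $(s,a)\notin L_{k,h}$ has a largest element, which I denote $\kthres$ (when this set is empty the inner sum vanishes and there is nothing to prove). Every episode $k$ with $(s,a)\notin L_{k,h}$ satisfies $k\leq \kthres$, so since all terms are nonnegative, $\sum_{k:\,(s,a)\notin L_{k,h}}q_h^{\pi_k}(s,a)\leq \sum_{j\leq \kthres}q_h^{\pi_j}(s,a)$.

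Now I would invoke \cref{le.not_in_L} precisely at the index $\kthres$: since $(s,a)\notin L_{\kthres,h}$ by construction, that lemma gives $\sum_{j\leq \kthres}q_h^{\pi_j}(s,a)\leq 4\ln\frac{SAH}{\delta'}+5$. This bounds the inner episode-sum for each fixed triple by $4\ln\frac{SAH}{\delta'}+5$ uniformly. Summing over all $h\in[H]$ and all $(s,a)$ — there are exactly $SAH$ such triples — yields the claimed bound $SAH(4\ln\frac{SAH}{\delta'}+5)$.

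The only delicate point, and the one I would state carefully, is the passage from the subset sum over $\{k:(s,a)\notin L_{k,h}\}$ to the full prefix sum $\sum_{j\leq \kthres}$: this step does not require the good set of episodes to be a contiguous initial segment (although monotonicity in fact makes it so), only that $\kthres$ is an upper bound for all such indices, combined with nonnegativity of the occupancy terms. With that observation in place the remainder is a routine count of triples, so I do not anticipate any real obstacle beyond making the indexing precise.
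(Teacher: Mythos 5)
Your proposal is correct and follows essentially the same route as the paper: both exchange the order of summation, identify the largest episode index $\kthres$ for which $(s,a)\notin L_{\kthres,h}$, bound the inner sum by the prefix sum up to that index, and invoke \cref{le.not_in_L} at $\kthres$ before counting the $SAH$ triples. Your remark that only nonnegativity and the upper-bound property of $\kthres$ are needed (rather than contiguity of the excluded set) is a minor but valid refinement of the paper's argument, which relies on the set being an initial segment.
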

\begin{proof}
Define
\begin{align}
    k_{s,a,h}\defeq \begin{cases}
        0,&\text{ if }\left\{k^*\in [K]\ |\ (s,a)\notin L_{k^*,h}\right\}=\varnothing,\\
        \max\left\{k^*\in [K]\ |\ (s,a)\notin L_{k^*,h}\right\}&\text{ otherwise}.
    \end{cases}\label{eq.temp_102101}
\end{align}
By the definition of $L_{k,h}$ in Eq.~\eqref{eq.def_Lkh}, we know that
\begin{align}
    (s,a)\notin L_{k,h} \text{ for all }k\leq k_{s,a,h}.\label{eq.temp_102102}
\end{align}
Therefore, we have
\begin{align*}
    \sum_{k=1}^K\sum_{h=1}^H\sum_{(s,a)\notin L_{k,h}}q_h^{\pi_k}(s,a)=&\sum_{(s,a)}\sum_{k=1}^K\sum_{h=1}^H q_h^{\pi_k}(s,a)\mathbbm{1}\left((s,a)\notin L_{k,h}\right)\\
    = &\sum_{(s,a)}\sum_{h=1}^H\sum_{k=1}^{k_{s,a,h}}q_h^{\pi_k}(s,a)\text{ (by Eq.~\eqref{eq.temp_102101})}\\
    \leq & SAH\left(4\ln \frac{SAH}{\delta'}+5\right)\text{ (by Eq.~\eqref{eq.temp_102102} and Lemma~\ref{le.not_in_L})}.
\end{align*}
\end{proof}

\begin{lemma}\label{le.shlnk}
If outside  the failure event $G^n$, we must have
\begin{align*}
    \sum_{k=1}^K\sum_{h=1}^H\sum_{(s,a)\in L_{k,h}}\frac{q_h^{\pi_k}(s,a)}{\nbar}\leq 4SAH\ln (4+K).
\end{align*}
\end{lemma}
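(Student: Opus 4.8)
The plan is to fix a triple $(s,a,h)$, bound the inner sum $\sum_{k:(s,a)\in L_{k,h}} \frac{q_h^{\pi_k}(s,a)}{\nbar}$ by a logarithmic potential, and then sum over the $SAH$ triples. The engine of the argument is Lemma~\ref{le.n_4_q}, which (outside $G^n$) gives, for every $(s,a)\in L_{k,h}$, the lower bound $\nbar \geq \frac14\sum_{j\leq k} q_h^{\pi_j}(s,a)$. Writing $S_k \defeq \sum_{j\leq k} q_h^{\pi_j}(s,a)$ for brevity, this immediately yields the termwise estimate
\begin{align*}
\frac{q_h^{\pi_k}(s,a)}{\nbar} \leq \frac{4\, q_h^{\pi_k}(s,a)}{S_k} = \frac{4\,(S_k - S_{k-1})}{S_k}.
\end{align*}

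The second step is a standard integral comparison: since $1/x$ is decreasing, $\frac{S_k - S_{k-1}}{S_k} \leq \int_{S_{k-1}}^{S_k} \frac{dx}{x} = \ln S_k - \ln S_{k-1}$, which turns the inner sum into a telescoping one. The structural observation that makes the telescope collapse cleanly is that membership in $L_{k,h}$ is monotone in $k$: because $\sum_{j<k} q_h^{\pi_j}(s,a)$ is non-decreasing in $k$, once the threshold in the definition \eqref{eq.def_Lkh} of $L_{k,h}$ is met it remains met. Hence for fixed $(s,a,h)$ the index set $\{k : (s,a)\in L_{k,h}\}$ is a contiguous block $\{k_0,\dots,K\}$, and summing the telescope over it gives $4\,(\ln S_K - \ln S_{k_0 - 1})$.

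To finish for a fixed triple, I would discard the boundary term using $S_{k_0-1}\geq 1$, which follows because $(s,a)\in L_{k_0,h}$ forces $S_{k_0-1} \geq 4\ln\frac{SAH}{\delta'}+4 \geq 4$ (as $\ln\frac{SAH}{\delta'}\geq 0$), so that $\ln S_{k_0-1}\geq 0$; and I would bound $S_K = \sum_{j\leq K} q_h^{\pi_j}(s,a) \leq K \leq 4+K$ since each occupancy measure is at most $1$. This produces $\sum_{k:(s,a)\in L_{k,h}} \frac{q_h^{\pi_k}(s,a)}{\nbar} \leq 4\ln(4+K)$ for each $(s,a,h)$, and summing over the $S\cdot A\cdot H$ triples yields the claimed $4SAH\ln(4+K)$.

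The argument is essentially routine; the only point requiring genuine care is the monotonicity of $L_{k,h}$ in $k$, which is what licenses treating the sum as a single telescope and discarding one nonnegative boundary term, rather than having to control many disconnected intervals. I would also make sure the integral-comparison inequality is invoked only on indices in $L_{k,h}$, where $S_{k-1}>0$ guarantees the logarithms are well defined.
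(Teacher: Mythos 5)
Your proposal is correct and follows essentially the same route as the paper's proof: both rest on Lemma~\ref{le.n_4_q}, the monotonicity of $L_{k,h}$ in $k$, and the integral-comparison/telescoping bound $\sum_k (S_k-S_{k-1})/S_k \leq \ln S_K - \ln S_{k_0-1}$. The only cosmetic difference is that the paper carries a $+4$ shift in the denominator (formalized via the interpolation functions $G_{s,a,h}$ and $g_{s,a,h}$) where you instead discard the nonnegative boundary term $\ln S_{k_0-1}$ directly using $S_{k_0-1}\geq 4$; the two devices exploit the same fact and yield the same $4SAH\ln(4+K)$ bound.
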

\begin{proof}
We have
\begin{align*}
    \sum_{k=1}^K\sum_{h=1}^H\sum_{(s,a)\in L_{k,h}}\frac{q_h^{\pi_k}(s,a)}{\nbar}\leq &4\sum_{k=1}^K\sum_{h=1}^H\sum_{(s,a)\in L_{k,h}}\frac{q_h^{\pi_k}(s,a)}{\sum_{j\leq k}q_h^{\pi_j}(s,a)}\text{ (by Lemma~\ref{le.n_4_q})}\\
    =&4\sum_{(s,a)}\sum_{h=1}^H\sum_{k=1}^K \frac{q_h^{\pi_k}(s,a)}{\sum_{j\leq k}q_h^{\pi_j}(s,a)}\mathbbm{1}\left((s,a)\in L_{k,h}\right).\numberthis \label{eq.temp_103002}
\end{align*}
For fixed $s,a,h$, if
\begin{align*}
    \left\{k=1,2,\cdots, \ |\ (s,a)\in L_{k,h}\right\}\neq \varnothing,
\end{align*}
then by the monotonicity of the size of $L_{k,h}$ with respect to $k$, we can define
\begin{align*}
    \kthres\defeq \min\left\{k=1,2,\cdots, \ |\ (s,a)\in L_{k,h}\right\}.
\end{align*}
Thus, we have
\begin{align*}
    \sum_{k=1}^K\frac{q_h^{\pi_k}(s,a)}{\sum_{j\leq k}q_h^{\pi_j}(s,a)}\mathbbm{1}\left((s,a)\in L_{k,h}\right)=&\sum_{k=\kthres}^K \frac{q_h^{\pi_k}(s,a)}{\sum_{j\leq k}q_h^{\pi_j}(s,a)}\\
    \leq & \sum_{k=\kthres}^K \frac{q_h^{\pi_k}(s,a)}{4 + \sum_{\kthres\leq j\leq k}q_h^{\pi_j}(s,a)}.\numberthis \label{eq.temp_103001}
\end{align*}
The last inequality is because $(s,a)\in L_{\kthres,h}$, by the definition of $L_{k,h}$ in Eq.~\eqref{eq.def_Lkh}, we have
\begin{align*}
    \frac{1}{4}\sum_{j<\kthres}q_h^{\pi_j}(s,a)\geq \ln \frac{SAH}{\delta'}+1\geq 1.
\end{align*}
Define functions
\begin{align*}
    &G_{s,a,h}(x)\defeq \left(x-\floor{x}\right)\cdot q_h^{\pi_{\ceil{x}}}(s,a)+\sum_{\kthres\leq j\leq \floor{x}}q_h^{\pi_j}(s,a),\numberthis \label{eq.def_G}\\
    &g_{s,a,h}(x)\defeq q_h^{\pi_{\ceil{x}}}(s,a).\numberthis \label{eq.def_g}
\end{align*}
Roughly speaking, $G_{s,a,h}(x)$ is the linear interpolation of the sum of $q_h^{\pi_j}(s,a)$, and $g_{s,a,h}(x)$ is the step function whose steps are $q_h^{\pi_j}(s,a)$. We can easily check that when $x$ is not an integer,
\begin{align}
    \frac{\partial G_{s,a,h}(x)}{\partial x}=g_{s,a,h}(x).\label{eq.G_partial_g}
\end{align}
Notice that the not-differentiable points (i.e., when $x$ is an integer) of $G_{s,a,h}(s,a)(x)$ are countable and will not affect the following calculation.

\begin{align*}
    &\sum_{k=\kthres}^K\frac{q_h^{\pi_k}(s,a)}{4+\sum_{\kthres\leq j\leq k}q_h^{\pi_j}(s,a)}\\
    =&\int_{\kthres-1}^K \frac{g_{s,a,h}(x)}{4+G_{s,a,h}(\ceil{x})}dx\ \text{ (by Eq.~\eqref{eq.def_G} and Eq.~\eqref{eq.def_g})}\\
    \leq & \int_{\kthres-1}^K \frac{g_{s,a,h}(x)}{4+G_{s,a,h}(x)}dx\ \text{ (since $G_{s,a,h}(\cdot)$ is monotone increasing)}\\
    = & \ln(4+G_{s,a,h}(K))-\ln (4+G_{s,a,h}(\kthres-1)) \ \text{ (by Eq.~\eqref{eq.G_partial_g})}\\
    \leq & \ln \left(4 + \sum_{\kthres\leq j\leq K}q_h^{\pi_j}(s,a)\right)\\
    \leq & \ln (4+K).\numberthis \label{eq.temp_103003}
\end{align*}
By Eq.~\eqref{eq.temp_103003}, Eq.~\eqref{eq.temp_103002}, and Eq.~\eqref{eq.temp_103001}, the result of this lemma thus follows.
\end{proof}

\section{Proof of Theorem~\ref{th.PAC}}\label{proof.PAC}

\begin{proof}
For the proof of PAC guarantee, we have
\begin{align*}
    V_1^{*,F}(s_1)-V_1^{\piMix}(s_1)=V_1^{*,F}(s_1)-F\left(\left[\frac{1}{K}\sum_{k=1}^K V_{1,(i)}^{\pi_k}(s_1)\right]_{i=1,\cdots,N}\right).
\end{align*}
Because $F$ is a concave function, by Jensen's inequality, we have 
\begin{align*}
    F\left(\left[\frac{1}{K}\sum_{k=1}^K V_{1,(i)}^{\pi_k}(s_1)\right]_{i=1,\cdots,N}\right) \geq \frac{1}{K}\sum_{i=1}^K F\left(\left[V_{1,(i)}^{\pi_k}(s_1)\right]_{i=1,\cdots,N}\right)=\frac{1}{K}\sum_{i=1}^K V_1^{\pi_k,F}(s_1).
\end{align*}
Thus, we have $V_1^{*,F}(s_1)-V_1^{\piMix}(s_1)\leq \frac{1}{K}\sum_{k=1}^K \left(V_1^{*,F}(s_1) - V_1^{\pi_k,F}(s_1)\right) = \frac{\regret(K)}{K}$. By \cref{th.main}, we know that  with high probability $\regret(K)=C_{F}\cdot\left(\bigOTilde(H^2N S \sqrt{AK})+\bigOTilde(HN^2S^{3/2}A)+\bigOTilde(H^2NS^2A)\right)$. Thus, by letting 
\begin{align*}
    \varepsilon = \frac{C_{F}\cdot\left(\bigOTilde(H^2N S \sqrt{AK})+\bigOTilde(HN^2S^{3/2}A)+\bigOTilde(H^2NS^2A)\right)}{K},
\end{align*}
we can get
\begin{align*}
    K = C_F \max\left\{\bigOTilde(H^4 N^2 S^2 A/\varepsilon^2),\ \bigOTilde(H^2 N^4 S^3 A^2 /\varepsilon^2),\ \bigOTilde(H^2 N^2 S^4 A^2 / \varepsilon^2)\right\}.
\end{align*}
Notice that for each episode we have $H$ samples. Thus, the total number of samples is
\begin{align*}
    C = C_F \max\left\{\bigOTilde(H^5 N^2 S^2 A/\varepsilon^2),\ \bigOTilde(H^3 N^4 S^3 A^2 /\varepsilon^2),\ \bigOTilde(H^3 N^2 S^4 A^2 / \varepsilon^2)\right\}.
\end{align*}
The result thus follows.
\end{proof}

\section{Proof of Theorem~\ref{th.offline}}\label{proof:offline}

Recalling the definition of suboptimality, we get \cref{eq.offline_decomp}.
Term~2 of \cref{eq.offline_decomp} is non-positive because $\hat{\pi}$ is the solution of \cref{op.offline}.

We now bound the Term 3. First, we specify the value of $b_h^r(s,a,\delta)$ and $b_h^p(s,a,s^{\prime},\delta)$. For the offline setup, we denote $n_h(s,a,s^{\prime})$ as the empirical value within the dataset. Now, set $b_h^r(s,a,\delta)$ as the value in (\ref{eq.r_interval}) and $b_h^p(s,a,s^{\prime},\delta)$ as the value in (\ref{eq.def_pinterval}) respectively. From the Value-difference Lemma, for any $i$, we have
\begin{align}
V_{1,(i)}^{\hat{\pi}}(s,\underline{r},\overline{p})-V_{1,(i)}^{\hat{\pi}}(s,r,p)&=\mathbbm{E}_{p,\hat{\pi}}\left[\sum_{h=1}^{H}(\underline{r}_{(i),h}(s_h,a_h)-r_{(i),h}(s_h,a_h)|s_1=s\right]\nonumber\\
& +\mathbbm{E}_{p,\hat{\pi}}[\sum_{h=1}^H\sum_{s_{h+1}^{\prime}}(\overline{p}_h(s_h,a_h,s_{h+1}^{\prime})-p(s_h,a_h,a_{h+1}^{\prime}))V_{h+1,(i)}]
\end{align}
From Lemma~\ref{lem:gp}, \ref{lem:gn}, and \ref{lem:gr}, we have $|\overline{r}_{(i),h}(s,a)-r_{(i),h}(s,a)|\leq b_h^r(s,a,\delta)$, and $|\overline{p}(s,a,s^{\prime})-p(s,a,s^{\prime})|\leq b_h^p(s,a,s^{\prime})$ with probability $1-\delta$. Since, $V_{h+1,(i)}\leq H$. Thus,
\begin{align}
&V_{1,(i)}^{\hat{\pi}}(s,\underline{r},\overline{p})-V_{1,(i)}^{\hat{\pi}}(s,r,p)\leq\nonumber\\& \mathbbm{E}_{p,\hat{\pi}}\left[\sum_{h=1}^H(\underline{r}_{(i),h}(s_h,a_h)-\overline{r}_{(i),h}(s_h,a_h)+b_h^r(s_h,a_h,\delta)+H\sum_{s_{h+1}^{\prime}}b_h^p(s_h,a_h,s_{h+1}^{\prime},\delta))\right]
\end{align}
Now, by the definition of $\underline{r}$, we can bound the above by $0$. Finally, using the fact that $F(\cdot)$ is monotone increasing, we can conclude that Term~3 is bounded by $0$.

It remains to estimate Term~1. To that end, when the event $\mathcal{E}$ (defined in \cref{def.offline_event}) happens, we have
\begin{align*}
    &\abs{\underline{r}_{h,(i)}(s,a)-r_{h,(i)}(s,a)} \\
    =& \abs{\overline{r}_{h,(i)}(s,a) - r_{h,(i)}(s,a)-b_h^r(s,a,\delta) -H \sum_{s'\in \mathcal{S}}b_h^p(s,a,s',\delta)}\ \text{ (by the definition of $\underline{r}$)}\\
    \leq & 2 b_h^r(s,a,\delta) + H \sum_{s'\in \mathcal{S}} b_h^p(s,a,s',\delta) \ \text{ (by \cref{def.offline_event} and the triangle inequality)}.\numberthis \label{eq.temp_052301}
\end{align*}

By \cref{as.offline_reward}, we have 
\begin{align}
    V_{h,(i)}^{\pi^*,F}(s,\underline{r},\overline{p}) \in [\epsilon, H].\label{eq.temp_052302}
\end{align}
Thus, we can apply \cref{le.F_C_F}. Specifically, under \cref{as.offline_reward} and when the event $\mathcal{E}$ happens, we have
\begin{align*}
    &\text{Term 1 of \cref{eq.offline_decomp}}\\
    =& V_1^{\pi^*,F}(s_1;r,p) - V_1^{\pi^*,F}(s_1;\underline{r},\overline{p})\\
    \leq & C_F N \max_{i\in [N]} \abs{V_{i,(i)}^{\pi^*}(s_1;r,p)-V_{1,(i)}^{\pi^*}(s_1;\underline{r},\overline{p})}\ \text{ (by \cref{le.F_C_F})}\\
    =& C_F N \max_{i\in [N]} \left| \E\left[\underline{r}_{h,(i)}(s_h,a_h)-r_{h,(i)}(s_h,a_h)\right]   \right.\\
    & + \left. \E\left[\sum_{h=1}^H \sum_{s'\in \mathcal{S}} \left(\overline{p}_h(s'|s_h,a_h)-p_h(s'|s_h,a_h)\right) V_{h+1,(i)}^{\pi^*}(s';\underline{r},\overline{p})\right] \right| \ \text{ (by \cref{le.value_difference})}\\
    \leq & C_F N \max_{i\in [N]} \E \sum_{h=1}^H \abs{\underline{r}_{h,(i)}(s_h,a_h)-r_{h,(i)}(s_h,a_h)} \\
    &+ C_F N \max_{i\in [N]} \sum_{h=1}^H \sum_{s'\in \mathcal{S}}\abs{\left(\overline{p}_h(s'|s_h,a_h)-p_h(s'|s_h,a_h)\right) V_{h+1,(i)}^{\pi^*}(s';\underline{r},\overline{p})} \\ 
     & \qquad \text{ (by the triangle inequality)}\\
    \leq & C_F N \E \left[\sum_{h=1}^H \left(2 b_h^r(s_h,a_h,\delta) + H \sum_{s'\in \mathcal{S}} b_h^p(s_h,a_h,s',\delta)\right)\right]\\
    & + C_F N \E \left[ \sum_{h=1}^H \sum_{s'\in \mathcal{S}} b_h^p(s_h,a_h,s',\delta) H \right]\ \text{ (by \cref{eq.temp_052301,eq.temp_052302} and \cref{def.offline_event})}\\
    =& 2 C_F N \E \left[\sum_{h=1}^H \left( b_h^r(s_h,a_h,\delta) + H \sum_{s'\in \mathcal{S}} b_h^p(s_h,a_h,s',\delta)\right)\right].
\end{align*}
(The expectation $\E$ in the above equation is on the trajectories with optimal policy $\pi^*$ on the true MDP with $r$ and $p$.) The result of \cref{th.offline} thus follows.
\section{Details of Section~\ref{sec:policy_gradient}}\label{app.gradient}

The following proposition gives an estimation of the gradient based on the samples.

\begin{proposition}\label[proposition]{prop.gradient}
After collecting a set $\mathcal{D}$ of trajectories (with the policy $\pi_\theta$) where each trajectory $\tau\in \mathcal{D}$ contains the information $(s_h^{\tau},a_h^{\tau},\bm{r}_h^{\tau})_{h=1,2,\cdots,H}$, then $\bm{g}\in \mathds{R}^d$ is an unbiased\footnote{An unbiased estimation means that when $\abs{\mathcal{D}}\to \infty$, the estimated value approaches the true value.} estimation of the gradient $\nabla_{\theta} V_1^{\pi_{\theta},F}(s_1)$.

\begin{align}
&
    \bm{g}_{\text{max-min}} = \frac{1}{\abs{\mathcal{D}}}\sum_{\tau\in \mathcal{D}}  \sum_{h=1}^H R_{(\hat{i}_{\bm{\theta}}^*)}(\tau)\nabla_{\bm{\theta}}\log \pi_{\bm{\theta}}(a_h^{\tau} | s_h^{\tau}),\text{ where } \hat{i}^*_{\bm{\theta}_l}\defeq  \argmin_{i\in [N]} \sum_{\tau\in \mathcal{D}} R_{(i)}(\tau).\nonumber\\
&
    \bm{g}_{\text{proportional}} = \sum_{i=1}^N \frac{\sum_{\tau\in \mathcal{D}}\sum_{h=1}^H R_{(i)}(\tau)\nabla_{\bm{\theta}}\log \pi_{\bm{\theta}}(a_h^{\tau} | s_h^{\tau})}{\sum_{\tau\in \mathcal{D}}R_{(i)}(\tau)}.\label{eq.g_proportional}\\
     &\bm{g}_{\alpha} = \abs{\mathcal{D}}^{\alpha-1}\sum_{i=1}^N \frac{\sum_{\tau\in \mathcal{D}}\sum_{h=1}^H R_{(i)}(\tau)\nabla_{\bm{\theta}}\log \pi_{\bm{\theta}}(a_h^{\tau} | s_h^{\tau})}{\left(\sum_{\tau\in \mathcal{D}}R_{(i)}(\tau)\right)^{\alpha}}.\nonumber
\end{align}
\end{proposition}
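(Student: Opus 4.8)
The plan is to establish each of the three identities through the same three-stage pipeline: differentiate the fairness functional by the chain rule, express each per-agent value and gradient via the REINFORCE (score-function) identity, and then replace population expectations with empirical averages, invoking the strong law of large numbers together with the continuous mapping theorem to pass to the limit $\abs{\mathcal{D}}\to\infty$ (which is exactly the sense in which ``unbiased'' is understood here, per the footnote). Writing $F$ as a function of the tuple $(V_{1,(1)}^{\pi_{\bm{\theta}}}(s_1),\dots,V_{1,(N)}^{\pi_{\bm{\theta}}}(s_1))$, the chain rule gives
\[
\nabla_{\bm{\theta}} V_1^{\pi_{\bm{\theta}},F}(s_1)=\sum_{i=1}^N \frac{\partial F}{\partial V_{1,(i)}}\,\nabla_{\bm{\theta}} V_{1,(i)}^{\pi_{\bm{\theta}}}(s_1),
\]
so the only fairness-specific input is the vector of partials $\partial F/\partial V_{1,(i)}$. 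I would then substitute the two building blocks already recorded in the text, namely $V_{1,(i)}^{\pi_{\bm{\theta}}}(s_1)=\E_\tau[R_{(i)}(\tau)]$ and $\nabla_{\bm{\theta}} V_{1,(i)}^{\pi_{\bm{\theta}}}(s_1)=\E_\tau[R_{(i)}(\tau)\sum_{h=1}^H \nabla_{\bm{\theta}}\log\pi_{\bm{\theta}}(a_h^\tau\mid s_h^\tau)]$.

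For proportional fairness, $\partial F_{\text{proportional}}/\partial V_{1,(i)}=1/V_{1,(i)}^{\pi_{\bm{\theta}}}(s_1)$, so the target gradient is $\sum_i \nabla_{\bm{\theta}}V_{1,(i)}/V_{1,(i)}$. Writing $M\defeq\abs{\mathcal{D}}$ and dividing both numerator and denominator of the $i$-th summand of $\bm{g}_{\text{proportional}}$ by $M$ turns them into the empirical means $\frac1M\sum_{\tau}\sum_h R_{(i)}(\tau)\nabla_{\bm{\theta}}\log\pi_{\bm{\theta}}(a_h^\tau\mid s_h^\tau)$ and $\frac1M\sum_{\tau}R_{(i)}(\tau)$, which by the law of large numbers converge almost surely to $\nabla_{\bm{\theta}}V_{1,(i)}$ and $V_{1,(i)}$ respectively; since division is continuous wherever $V_{1,(i)}\ge\epsilon>0$ (guaranteed by \cref{as.reward_bound} and \cref{remark.V_bound}), the ratio converges to $\nabla_{\bm{\theta}}V_{1,(i)}/V_{1,(i)}$, and summing over $i$ recovers the claim. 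The $\alpha$-case is identical after noting $\partial F_\alpha/\partial V_{1,(i)}=V_{1,(i)}^{-\alpha}$; here the prefactor $M^{\alpha-1}$ is precisely what is needed so that, after the numerator of the $i$-th summand contributes one factor of $M$ and the denominator contributes $M^\alpha$, all powers of $M$ cancel and the sample means reappear, giving $\sum_i \nabla_{\bm{\theta}}V_{1,(i)}/V_{1,(i)}^{\alpha}$ in the limit.

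For max-min fairness, $F=\min_i V_{1,(i)}$ is differentiable only where the minimizing agent is unique, and there its gradient equals $\nabla_{\bm{\theta}}V_{1,(i^*)}$ with $i^*=\argmin_i V_{1,(i)}$. The estimator replaces $i^*$ by $\hat{i}^*_{\bm{\theta}}=\argmin_i \sum_{\tau}R_{(i)}(\tau)$, so I would argue that, since each $\frac1M\sum_{\tau}R_{(i)}(\tau)\to V_{1,(i)}$ almost surely, for large $M$ the empirical argmin coincides with the true (assumed unique) argmin; conditioning on this event, $\bm{g}_{\text{max-min}}=\frac1M\sum_{\tau}\sum_h R_{(\hat{i}^*)}(\tau)\nabla_{\bm{\theta}}\log\pi_{\bm{\theta}}(a_h^\tau\mid s_h^\tau)\to\nabla_{\bm{\theta}}V_{1,(i^*)}$ by the same REINFORCE-plus-LLN step. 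The main obstacle, and the reason the claim is read asymptotically rather than as exact finite-sample unbiasedness, is that the proportional and $\alpha$ estimators are ratios (and the max-min estimator carries a data-dependent index), so $\E[\bm{g}]$ is not literally the gradient for finite $M$; the real content is consistency, which is why the continuous mapping theorem, the strict positivity $V_{1,(i)}\ge\epsilon$, and the uniqueness of the max-min minimizer are the load-bearing hypotheses.
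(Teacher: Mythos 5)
Your proposal follows essentially the same route as the paper's proof: apply the chain rule to $F$, use the score-function (REINFORCE) identity for $\nabla_{\bm{\theta}}V_{1,(i)}^{\pi_{\bm{\theta}}}(s_1)$ together with $V_{1,(i)}^{\pi_{\bm{\theta}}}(s_1)=\E_{\tau}[R_{(i)}(\tau)]$, and then substitute empirical averages, reading ``unbiased'' in the asymptotic sense of the footnote. You are in fact somewhat more explicit than the paper about why the plug-in estimators converge (law of large numbers plus the continuous mapping theorem for the ratio forms, the cancellation of the $\abs{\mathcal{D}}^{\alpha-1}$ prefactor, and uniqueness of the minimizer so that the empirical argmin eventually matches the true one in the max-min case), but the underlying argument is the same.
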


\begin{proof}

Based on the chain rule, we have the following results.

1. When $F=F_{\text{max-min}}$, let $i_{\bm{\theta}}^*\defeq \argmin_{i\in [N]} V_{1,(i)}^{\pi_{\bm{\theta}},F}(s_1)$:
\begin{align}
    \nabla_{\bm{\theta}}V_{1}^{\pi_{\bm{\theta}},F}(s_1)=\nabla_{\bm{\theta}}V_{1, (i_{\bm{\theta}}^*)}^{\pi_{\bm{\theta}}}(s_1).\label{eq.temp_052316}
\end{align}
Notice that $\nabla_{\bm{\theta}}\ i_{\bm{\theta}}^* = \bm{0}$ almost everywhere if $V_{1,(i)}^{\pi_{\bm{\theta}},F}(s_1)$ is continuous w.r.t. $\bm{\theta}$.

2. When $F=F_{\text{proportional}}$:
\begin{align}
    \nabla_{\bm{\theta}}V_{1}^{\pi_{\bm{\theta}},F}(s_1)=\nabla_{\bm{\theta}}\sum_{i=1}^N \log V_{1, (i)}^{\pi_{\bm{\theta}}}(s_1)=\sum_{i=1}^N \frac{\nabla_{\bm{\theta}}V_{1, (i)}^{\pi_{\bm{\theta}}}(s_1)}{V_{1, (i)}^{\pi_{\bm{\theta}}}(s_1)}.\label{eq.temp_052317}
\end{align}

3. When $F=F_{\alpha}$:
\begin{align}
    \nabla_{\bm{\theta}}V_{1}^{\pi_{\bm{\theta}},F}(s_1)=\nabla_{\bm{\theta}}\sum_{i=1}^N \frac{1}{1-\alpha } \left(V_{1, (i)}^{\pi_{\bm{\theta}}}(s_1)\right)^{1-\alpha} = \sum_{i=1}^N \left(V_{1, (i)}^{\pi_{\bm{\theta}}}(s_1)\right)^{-\alpha}\nabla_{\bm{\theta}}V_{1, (i)}^{\pi_{\bm{\theta}}}(s_1).\label{eq.temp_052318}
\end{align}

It remains to approximate $\nabla_{\bm{\theta}}V_{1, (i)}^{\pi_{\bm{\theta}}}(s_1)$ and $V_{1, (i)}^{\pi_{\bm{\theta}}}(s_1)$ in the above equations. To that end, noticing that $V_{1, (i)}^{\pi_{\bm{\theta}}}(s_1) = \E_{\tau} R_{(i)} (\tau)$, we can approximate $V_{1, (i)}^{\pi_{\bm{\theta}}}(s_1)$ by the empirical average of $R_{(i)}$, i.e.,
\begin{align}
    \frac{1}{\abs{\mathcal{D}}}\sum_{\tau\in \mathcal{D}} R_{(i)}(\tau). \label{eq.temp_052314}
\end{align}

Before calculating $\nabla_{\bm{\theta}}V_{1, (i)}^{\pi_{\bm{\theta}}}(s_1)$, we first list some equations that will be used later.

1. Probability of a trajectory:

\begin{align}
    \Pr(\tau|\bm{\theta})=\prod_{h=1}^H p_h(s_{h+1}|s_h,a_h) \pi_{\bm{\theta}}(a_h|s_h). \label{eq.temp_052311}
\end{align}

2. The log-derivative trick:

\begin{align}
    \nabla_{\bm{\theta}}\Pr(\tau|\bm{\theta})=\Pr(\tau|\bm{\theta}) \cdot \nabla_{\bm{\theta}} \log \Pr(\tau| \bm{\theta}). \label{eq.temp_052312}
\end{align}

3. Log-probability of a trajectory:

\begin{align*}
    \log \Pr(\tau | \bm{\theta})=& \log \prod_{h=1}^H p_h(s_{h+1} | s_h, a_h) \pi_{\bm{\theta}}(a_h | s_h)\ \text{ (by \cref{eq.temp_052311})}\\
    =& \sum_{h=1}^H \log p_h(s_{h+1} | s_h,a_h) + \log \pi_{\bm{\theta}}(a_h | s_h).
\end{align*}
Thus, we have
\begin{align}
    \nabla_{\theta} \log \Pr(\tau | \theta) = \sum_{h=1}^H \nabla_{\theta} \log \pi_{\theta} (a_h | s_h).\label{eq.temp_052313}
\end{align}
Notice that to get the above equation, we use the fact that the transition probability $p$ is irrelevant to $\bm{\theta}$.

Now we are ready to calculate $\nabla_{\bm{\theta}}V_{1, (i)}^{\pi_{\bm{\theta}}}(s_1)$. We have
\begin{align*}
    \nabla_{\bm{\theta}} V_{1,(i)}^{\pi_{\theta}}(s_1)=&\nabla_{\bm{\theta}} \E_{\tau} R_{(i)}(\tau)\\
    =& \nabla_{\bm{\theta}} \int_{\tau} \Pr(\tau | \bm{\theta}) R_{(i)}(\tau)\\
    =& \int_{\tau} \nabla_{\bm{\theta}} \Pr(\tau | \bm{\theta}) R_{(i)}(\tau)\\
    =& \int_{\tau} \Pr(\tau | \theta) \nabla_{\bm{\theta}} \log (\Pr(\tau | \bm{\theta})) R_{(i)}(\tau)\ \text{ (by \cref{eq.temp_052312})} \\
    = & \E_{\tau} \nabla_{\bm{\theta}} \log (\Pr(\tau | \bm{\theta})) R_{(i)}(\tau) \\
    =& \E_{\tau}R_{(i)}(\tau)\nabla_{\bm{\theta}} \log \Pr(\tau | \bm{\theta}) \\
    = & \E_{\tau} R_{(i)}(\tau)\sum_{h=1}^H \nabla_{\bm{\theta}} \log \pi_{\bm{\theta}}(a_h | s_h)\ \text{ (by \cref{eq.temp_052313})}.
\end{align*}
Thus, we can approximate $\nabla_{\bm{\theta}} V_{1,(i)}^{\pi_{\bm{\theta}}}(s_1)$ by the following empirical average:
\begin{align}
    \frac{1}{\abs{\mathcal{D}}}\sum_{\tau \in \mathcal{D}}R_{(i)}(\tau)\sum_{h=1}^H \nabla_{\bm{\theta}} \log \pi_{\bm{\theta}}(a_h | s_h).\label{eq.temp_052315}
\end{align}
The result of this proposition thus follows by plugging the empirical estimation \cref{eq.temp_052314,eq.temp_052315} into \cref{eq.temp_052316,eq.temp_052317,eq.temp_052318}.
\end{proof}

As an example, we show the whole algorithm for max-min fairness in \cref{alg.policy}.

\begin{algorithm}
\caption{Policy Gradient for Max-Min Fairness}\label{alg.policy}
\begin{algorithmic}[1]
\State \textbf{Initialize:} $\bm{\theta}_0\in \mathds{R}^d$, step size $\alpha'> 0$.
\For{each iteration $l=0,1,2,\cdots,$}
\State Collect a set of trajectory $\mathcal{D}$ by using $\pi_{\bm{\theta}_l}$ where each trajectory $\tau\in \mathcal{D}$ contains the information $(s_h,a_h,\bm{r}_h)_{h=1,2,\cdots,H}$.
\State For each collected trajectory, calculate its total reward for each agent
\State For each agent $i$, get an estimation of its own value function $\hat{V}_{1,(i)}^{\bm{\theta}_l}(s_1)\gets \frac{1}{\abs{\mathcal{D}}}\sum_{\tau\in \mathcal{D}} R_{(i)}(\tau)$.
\State Select the agent with the minimum estimated value $\hat{i}^*_{\bm{\theta}_l}\gets \argmin_{i\in \{1, 2, \cdots, N\}} \hat{V}_{1,(i)}^{\bm{\theta}_l}(s_1)$.
\State Calculate the estimated gradient $\bm{g}\in \mathds{R}^d$ by 
\begin{align*}
    \bm{g}\gets \frac{1}{\abs{\mathcal{D}}}\sum_{\tau\in \mathcal{D}} R_{(\hat{i}_{\bm{\theta}_l}^*)}(\tau) \sum_{h=1}^H\nabla_{\bm{\theta}_l}\log \pi_{\bm{\theta}_l}(a_h | s_h).
\end{align*}
\State Update the parameters $\bm{\theta}_{l+1}\gets \bm{\theta}_l +  \alpha' g$.
\EndFor
\end{algorithmic}

\end{algorithm}
\section{Simulation Results}\label{app:simulation}

We run some simulations on a synthetic setup where $S=A=N=2$ and $H=3$. Each term of the transition probability $p$ is \emph{i.i.d.} uniformly generated between $[0, 1]$, and then we normalize $p$ to make sure that $\sum_{s'\in \mathcal{S}}p(s,a,s')=1$. Every term of the true immediate reward $r$ is \emph{i.i.d.} uniformly generated between $[0.15, 0.95]$. Each noisy observation of an immediate reward is drawn from a uniform distribution centered at its true value within the range of $\pm 0.05$ (thus all noisy observations are in $[0.1, 1]$).

\begin{figure}[ht]
    \centering
    \includegraphics[width=0.9\textwidth]{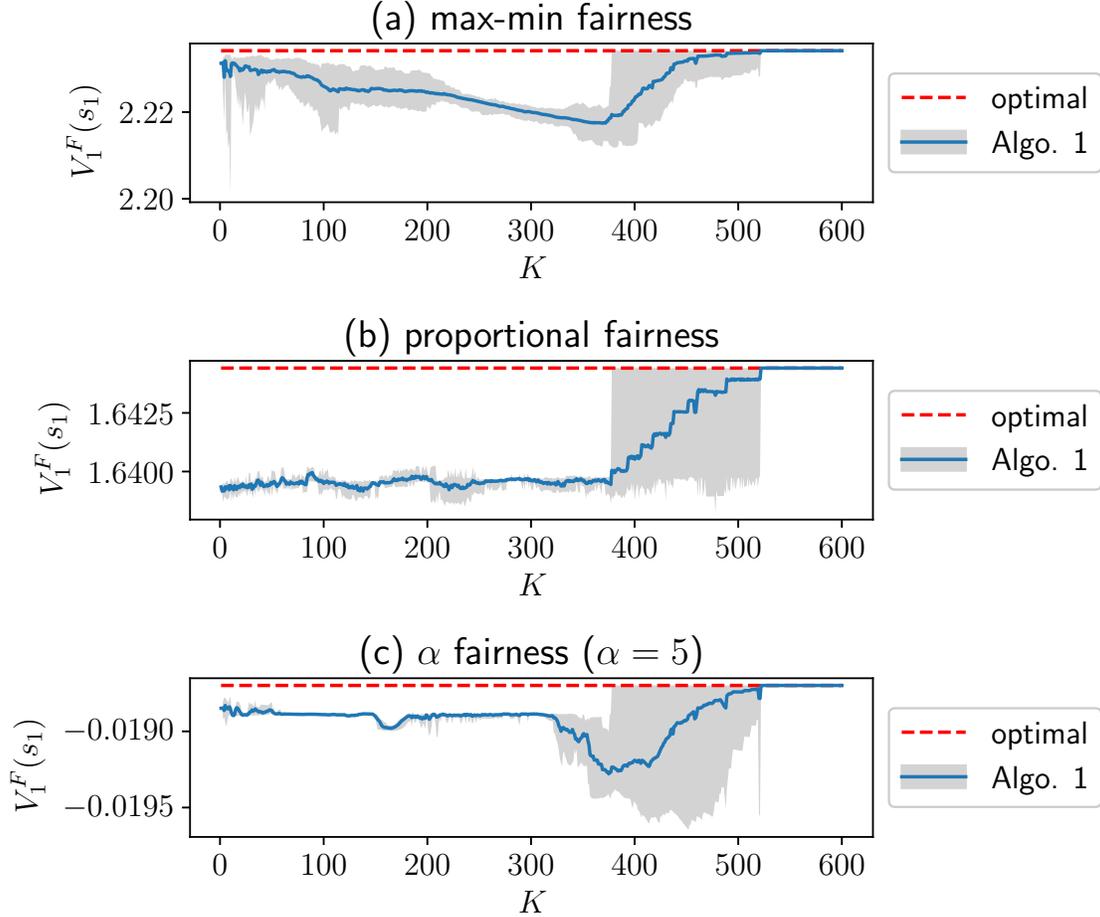}
    \caption{Curves of $V_1^F(s_1)$ of \cref{alg.main} w.r.t. $K$ for different fair objectives. The solid blue curve is the average of 10 runs with different seeds. The shaded part denotes the range of these 10 runs (i.e., the range between the max and the min value).}
    \label{fig:opti}
\end{figure}

In \cref{fig:opti}, we plot the curves of $V_1^F(s_1)$ of the optimal policy (dashed red curves) and the curves of the policy calculated by \cref{alg.main} (the blue curves) for different fair objectives. We can see that for all three different fair objectives, the solution of \cref{alg.main} becomes very close to the optimal one after $K\geq 550$.
This validates our theoretical result that the regret scales sub-linearly ($\bigOTilde(\sqrt{K})$) since the average regret $\bigOTilde(\frac{\sqrt{K}}{K})$ goes to zero when $K$ becomes larger.


\begin{figure}[ht]
    \centering
    \includegraphics[width=0.9\textwidth]{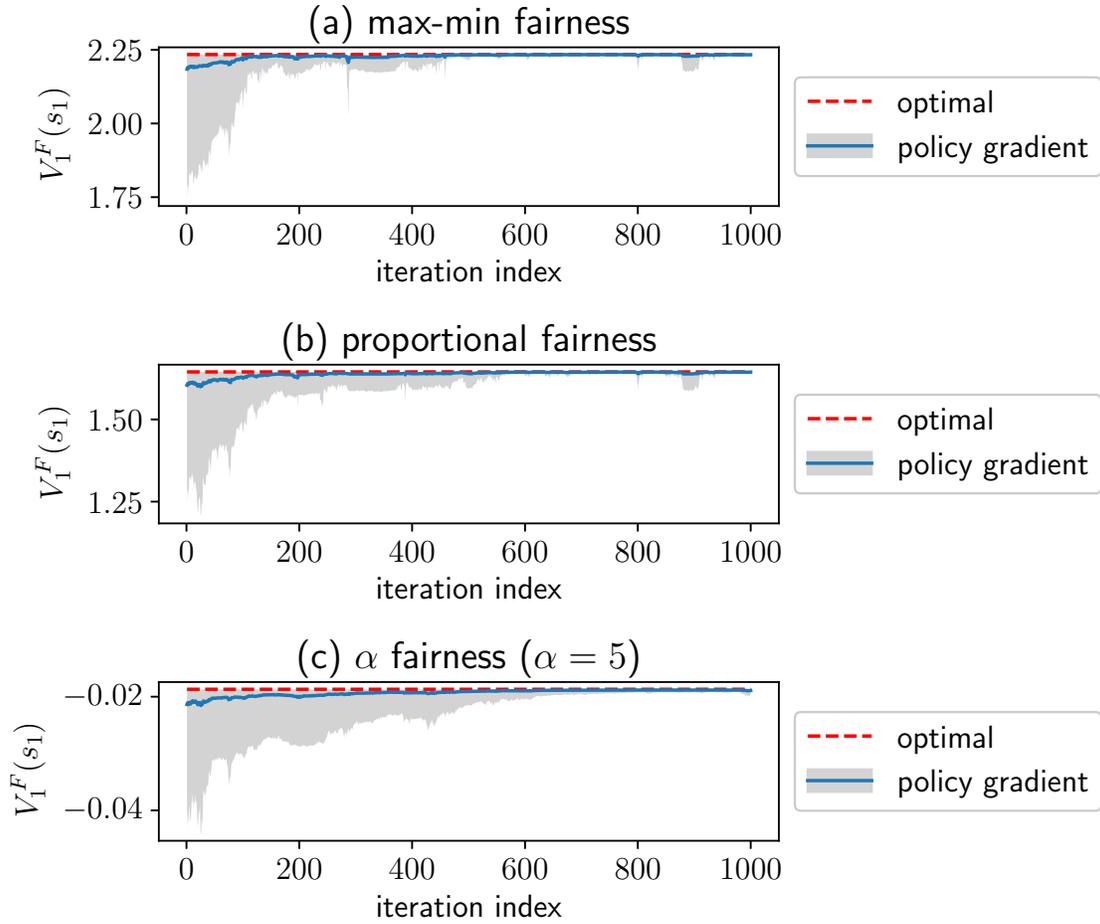}
    \caption{Curves of $V_1^F(s_1)$ of policy gradient w.r.t. the number of iterations for different fair objectives. The solid blue curve is the average of 10 runs with different seeds. The shaded part denotes the range of these 10 runs (i.e., the range between the max and the min value).}
    \label{fig:grad}
\end{figure}

In \cref{fig:grad}, we plot the curves of $V_1^F(s_1)$ of the optimal policy (dashed red curves) and the curves of the policy calculated by the policy gradient method (the blue curves). We use a two-layer fully-connected neural network with ReLU (rectified linear unit) as the policy model. During each iteration of the policy gradient algorithm, $20$ trajectories are generated and collected under the current policy. As shown by \cref{fig:grad}, such a policy gradient method can achieve the nearly optimal solution within 1000 iterations.

\end{document}